\definecolor{mypurple}{rgb}{.4,.0,.5}
\definecolor{darkgreen}{rgb}{0, 0.4,0}
\definecolor{purplebrown}{rgb}{0.5,0.1,0.6}
\definecolor{ultclupcol}{rgb}{0.1,0.5,0.5}
\definecolor{mytrycolor}{rgb}{0.5,0.7,0.2}
\definecolor{ultclupcola}{rgb}{.5,0,.5}
\newcommand{\bl}[1]{\textcolor{blue}{#1}}
\newcommand{\prp}[1]{\textcolor{purple}{#1}}
\newcommand{\yellow}[1]{\textcolor{yellow}{#1}}
\definecolor{shadebrown}{rgb}{0.1,0.1,0.9}
\definecolor{lightblue}{rgb}{0.2,0,1}
\newtcbox{\xmybox}{on line,
arc=7pt,
before upper={\rule[-3pt]{0pt}{10pt}},boxrule=0pt,
boxsep=0pt,left=6pt,right=6pt,top=0pt,bottom=0pt,enhanced, coltext=blue, colback=white!10!yellow}
\newtcbox{\xmyboxa}{on line,
arc=7pt,
before upper={\rule[-3pt]{0pt}{10pt}},boxrule=0pt,
boxsep=0pt,left=6pt,right=6pt,top=0pt,bottom=0pt,enhanced, colback=white!10!yellow}
\newtcbox{\xmyboxb}{on line,
arc=7pt,
before upper={\rule[-3pt]{0pt}{10pt}},boxrule=1pt,colframe=darkgreen!100!blue,
boxsep=0pt,left=6pt,right=6pt,top=0pt,bottom=0pt,enhanced, colback=white!10!yellow}
\newtcbox{\xmyboxc}{on line,
arc=7pt,
before upper={\rule[-3pt]{0pt}{10pt}},boxrule=.7pt,colframe=blue!100!blue,
boxsep=0pt,left=6pt,right=6pt,top=0pt,bottom=0pt,enhanced, coltext=blue, colback=white!10!yellow}
\newtcbox{\xmytboxa}{on line,
arc=7pt,
before upper={\rule[-3pt]{0pt}{10pt}},boxrule=.0pt,colframe=pink!50!yellow,
boxsep=0pt,left=6pt,right=6pt,top=0pt,bottom=0pt,enhanced, coltext=white, colback=blue!40!red}
\newtcbox{\xmytboxb}{on line,
arc=7pt,
before upper={\rule[-3pt]{0pt}{10pt}},boxrule=.0pt,colframe=pink!50!yellow,
boxsep=0pt,left=6pt,right=6pt,top=0pt,bottom=0pt,enhanced, coltext=white, colback=white!40!green}
\newcommand\subsubsubsection{\@startsection{paragraph}{4}{\z@}{-2.5ex\@plus -1ex \@minus -.25ex}{1.25ex \@plus .25ex}{\normalfont\normalsize\bfseries}}
\newcommand\subsubsubsubsection{\@startsection{subparagraph}{5}{\z@}{-2.5ex\@plus -1ex \@minus -.25ex}{1.25ex \@plus .25ex}{\normalfont\normalsize\bfseries}}
\def\y{{\bf y}}
\def\x{{\bf x}}
\def\x{{\mathbf x}}
\def\u{{\bf u}}
\def\x{{\bf x}}
\def\y{{\bf y}}
\def\c{{\bf c}}
\def\tr{\mbox{Tr}}
\def\tr{{\rm tr}\,}
\def\diag{{\rm diag}\,}
\def\be{\begin{equation}}
\def\ee{\end{equation}}
\def\ba{\left[\begin{array}}
\def\ea{\end{array}\right]}
\def\u{{\bf u}}
\def\x{{\bf x}}
\def\y{{\bf y}}
\def\c{{\bf c}}
\def\1{{\bf 1}}
\def\0{{\bf 0}}
\def\vecw{\mbox{vec}}
\def\rankw{\mbox{rank}}
\def\diag{\mbox{diag}}
\def\bU{\bar{U}}
\def\bV{\bar{V}}
\def\mR{{\mathbb R}}
\def\mC{{\mathbb C}}
\def\mN{{\mathbb N}}
\def\mP{{\mathbb P}}
\def\calV{{\cal V}}
\def\calU{{\cal U}}
\def\lp{\left (}
\def\rp{\right )}
\newtheorem{theorem}{Theorem}
\newtheorem{corollary}{Corollary}
\newtheorem{lemma}{Lemma}
\begin{document}

\begin{singlespace}


\title {\textbf{Causal Inference (C-inf) --- \emph{closed} form \emph{worst} case typical phase transitions}
}
\author{
\textsc{Agostino Capponi \footnote{e-mail: {\tt ac3827@columbia.edu}}}\quad  \textsc{Mihailo Stojnic \footnote{e-mail: {\tt flatoyer@gmail.com}}} \quad 
\\
{Department of Industrial Engineering and Operations Research}\\
{Columbia University, New York, NY 10027, USA}
 }
\date{}
\maketitle

\centerline{{\bf Abstract}} \vspace*{0.1in}

In this paper we establish a mathematically rigorous connection between  \bl{\textbf{\emph{Causal inference (C-inf)}}} and the \bl{\textbf{\emph{low-rank recovery (LRR)}}}. 
Using Random Duality Theory (RDT) concepts developed in \cite{StojnicICASSP10var,StojnicISIT2010binary,StojnicICASSP09} and novel mathematical strategies related to free probability theory, we obtain the \bl{\textbf{\emph{exact explicit}}} typical (and achievable) worst case phase transitions (PT). These PT precisely separate scenarios where causal inference via LRR is possible from those where it is not. We supplement our mathematical analysis with numerical experiments that confirm the theoretical predictions of PT phenomena, and further show that the two closely match for fairly small sample sizes. We obtain simple closed form representations for the resulting PTs, which highlight direct relations between the low rankness of the target C-inf matrix and the time of the treatment. 
 Hence, our results can be used to determine the range of C-inf's typical applicability.

\vspace*{0.25in} \noindent {\bf Index Terms: Causal inference; Random duality theory; Algorithms; Matrix completion; Sparsity}.

\end{singlespace}

\section{Introduction}
\label{sec:back}

The \bl{\textbf{\emph{Causal inference (C-inf)}}} discipline deals with design of methods for estimating causal effects in panel data settings, where a subset of the units are exposed to a treatment during some time periods. The goal is estimating counterfactual outcomes, i.e., that outcome for those units were the treatment not been applied for that period of time.

Casual inference plays
a key role in decision making, and is essential in business decisions, network design, medical sciences, and many others. It allows answering questions of the form: What would happen to a data center's latency if a new congestion control algorithm were not used?  What would have been the systolic blood pressure of a patient if the new drug were not given to her?

This problem of estimating the counterfactual appears in many disciplines, including economics, finance, health, and social sciences (see, e.g. \cite{RoseRub83,Rub06,ImbRub15,ADHsynth10,DoudImb16,Xucinf17,HerRob10}), and computer science (see, e.g. \cite{PearlBar19,PearlCausBook09,PearlSMack18,PearlSurv09}). The increasing availability of big data through digital services and smart sensors calls makes it possible to design efficient algorithmic techniques to address these fundamental questions in counterfactual estimation.

The econometrics and social sciences communities have proposed three main approaches to causal inference: 1) the unconfoundedness (see, e.g. \cite{RoseRub83,ImbRub15}); 2) the synthetic control (see, e.g. \cite{ADHsynth10,DoudImb16,Abadsynth19}); and 3) the matrix completion (see, e.g. \cite{ABDIK21, Agarwal2021,KallusNIPS}. Matrix completion methods build upon the foundation works of \cite{	CR09matcomp,Rechtmatcomp11,CPmatcomp10}). Perhaps unexpectedly, all three methods heavily rely on mathematical, statistical, and ultimately algorithmic concepts with very deep roots in information theory. Our work is positioned within the third line of work that mathematically resembles the \bl{\textbf{\emph{matrix completion (MC)}}} problem.

Our main contribution is to  develop a rigorous connection between \bl{\textbf{\emph{Causal inference (C-inf)}}} from observational data and the \bl{\textbf{\emph{low-rank recovery (LRR)}}} problem in compressed sensing. Methodologically, we integrate Random Duality Theory (RDT) concepts developed in \cite{StojnicICASSP10var,StojnicISIT2010binary,StojnicICASSP09} with novel mathematical strategies related to free probability theory, and manage to obtain the \bl{\textbf{\emph{exact explicit}}} typical (and achievable) worst case phase transitions (PT). These PT provide a precise separation between regions of the parameter space where the counterfactual can be perfectly estimated via LRR from those regions where this is not possible for large sample sizes. Our numerical study complements our theoretical predictions by showing that theory and numerical simulation closely match even if the sample size is fairly small.

\section{Mathematics of causal inference}
\label{sec:mbcinf}
To put our contribution into a proper context, we begin by presenting the explicit \prp{\textbf{\emph{causal inference (C-inf) $\leftrightarrow$ matrix completion (MC)}}}  connection.

To introduce the most basic mathematical description of the main C-inf concepts we adopt the standard \emph{matrix completion (MC)} terminology. As is well known the matrix completion problems belong to a class of the so-called \emph{low-rank recovery problems (LRR)} which themselves belong to a broader class of the so-called \emph{structured recovery (SR)} problems. In its most general way the description of the structured recovery problems starts by assuming that one has access to the observation vector $\y\in\mR^m$ given by the following
\begin{eqnarray}\label{eq:linsys1a}
\y_i=f_i(A_{i,:}\x_{sol}),
\end{eqnarray}
where $A\in\mR^{m\times n}$ is a known system matrix (with rows $A_{i,:},1\leq i\leq m$), $\x_{sol}\in\mR^{n}$ is the unknown vector, and $f_i(\cdot): \mR\longrightarrow \mR$ are known real functions. In the treatments that will be of our interest here the functions $f_i(\cdot)$ will be assumed to be identically linear, i.e. it will be assumed that $f_i(x)=x$. Then the structured recovery assumes utilizing the $\x_{sol}$ 's \emph{a priori known} structure to eventually recover it. Most often, that boils down practically to finding efficient algorithmic designs that take as inputs the observation vector $\y$ and the system matrix $A$ and successfully output $\x_{sol}$ or its a sufficiently close approximation. Moreover, under efficient algorithms one typically views those that run preferably provably in polynomial time. What typically differentiates between various forms of the SR problems are the structures that $\x_{sol}$ possesses as well as the structure of the system matrix $A$. Along the same lines, what typically differentiates the level of success (or usefulness) that some of these forms might achieve is the ultimate theoretical and practical capabilities of the algorithms employed for their solving.

The structured recovery problems have been the subject of an extensive research for a better half of the last century and many beautiful results appeared regarding their various theoretical and practical aspects. However, it is the emergence of \bl{\emph{\textbf{compressed sensing (CS)}}} about 20 years ago that almost singlehandedly made a key transformational change in how these problems and the surrounding research are perceived. As a result of such a perceptional change, the interest in the structured recovery, its importance, popularity, and ultimate usefulness skyrocketed to the heights unseen ever before. It is, of course, hard to pinpoint exactly what could be the secret behind the compressed sensing meteoric rise to success, but the overall conceptual simplicity probably contributed to some degree. We refer for more on the CS invention, simplicity, and further developments to e.g. \cite{DonohoPol,CRT,BayMon10,DonMalMon09,StojnicCSetam09,StojnicUpper10,StojnicICASSP10var,StojnicICASSP10block,SPH,StojnicJSTSP09,StojnicISIT2010binary}. At the same time, since (mathematically speaking) the CS is not precisely the main topic of our interest here, we, before proceeding further, just briefly mention that it basically assumes the above mentioned structured recovery setup with the sparsity of $\x_{sol}$ being the underlying structuring.

\subsection{Low rank recovery (LRR)}
\label{sec:lrr}

While the mathematical problems typically seen in compressed sensing will not exactly match the key mathematical problems of this paper, the above mentioned low-rank recovery (LRR) ones will. To introduce the LRR problems, we first note that almost everything mentioned above for the generic structured recovery remains in place in the LRR scenarios. In fact, there will be only one key difference compared to the standard CS setup. The $\x_{sol}$ (which is a sparse vector in the CS context) will within the LRR considerations be viewed as a vectorized unknown matrix $X_{sol}$ and consequently the imposed \emph{a priori known} structure will be the low-rankness of $X_{sol}$. This, of course, is nothing but the sparsity of the vector of the singular values of $X_{sol}$. In other words, in compressed sensing the unknown vector $\x_{sol}$ itself is sparse, whereas in the LRR the vector of the singular values of the unknown matrix $X_{sol}$ is sparse.

In more mathematical terms, the LRR can then be described in the following way. One first starts with the above mentioned vectorizing of matrix $X_{sol}$
\begin{eqnarray}\label{eq:posmcxsol}
\x_{sol}=\vecw(X_{sol}),
\end{eqnarray}
with $\vecw(\cdot)$ stacking its matrix argument columns one after another (starting from the very first one) into a column vector. Assuming $X_{sol}\in\mR^{n\times n}$, trivial dimensional adjustments then give $A\in\mR^{m\times n^2}$. The low-rankness ($\rankw(X_{sol})=k\leq n$) and the corresponding sparsity are imposed via the singular value decomposition (SVD)
\begin{eqnarray}\label{eq:mcsvd1}
X=U\Sigma V^T,
\end{eqnarray}
where
\begin{eqnarray}\label{eq:mcsvd2}
\sigma(X)\triangleq\diag(\Sigma) \quad \mbox{and} \quad U^TU=I_{n\times n} \quad \mbox{and} \quad V^TV=I_{n\times n},
\end{eqnarray}
with $I_{n\times n}$ being the identity matrix of size $n\times n$ and $\diag(\cdot)$ being the operator that extracts the diagonal from its matrix argument and puts it into a column vector. When clear from the context, we will abbreviate and write just $\sigma$ instead of $\sigma(X)$. Moreover, $\sigma_i(X)$ will stand for the $i$-th component of $\sigma(X)$ (with $\sigma_i$ often being its shorter version). As is of course well known, the elements of $\sigma\in\mR^n$ are precisely the above mentioned singular values of $X$ and the number of nonzero such elements is precisely the rank of $X_{sol}$. It is probably obvious, but we state it to ensure the overall clarity, that in typical SVD treatments in the mathematical literature one will often find in (\ref{eq:mcsvd2}) $I_{k\times k}$ as a replacement for $I_{n\times n}$. In other words, one will often find that the underlying identity matrix is of size $k\times k$ instead of size $n\times n$. The reason for our choice is to ensure a complete parallelism between the LRR and the compressed sensing. Basically, this choice makes the underlying vector of the singular values visibly sparse (by this definition it will automatically have $n-k$ zeros) and as such more in parallel with the corresponding sparse vector structure one typically finds in the compressed sensing setup. To further maintain the parallelism with compressed sensing, we also find it useful to introduce $\ell_p^*(X)$ as the so-called $\ell_p^*$ quasi-norm of $X$
\begin{eqnarray}\label{eq:mclinsys1b1}
\ell_p^*(X)\triangleq \ell_p(\sigma(X)), p\in\mR_+,
\end{eqnarray}
where, $\ell_p(\cdot)$ is the standard vector $\ell_p$ (quasi-) norm, and to note that the following useful matrix-vector limiting $\ell_p(\cdot)$ connections also hold
\begin{eqnarray}\label{eq:mclinsys1c}
\ell_0^*(X_{sol})\triangleq \ell_0(\sigma(X_{sol}))=\|\sigma(X_{sol})\|_0=\lim_{p\longrightarrow 0}\|\sigma(X_{sol})\|_p= \lim_{p\longrightarrow 0}\ell_p(\sigma(X_{sol}))=\lim_{p\longrightarrow 0}\ell_p^*(X_{sol}).
\end{eqnarray}
One can then restate (\ref{eq:linsys1a}) adapted to fit the LRR context as
\begin{eqnarray}\label{eq:mclinsys1b}
\y_i=A_{i,:}\x_{sol}=A_{i,:}\vecw(X_{sol}) \quad \mbox{where} \quad \ell_0^*(X_{sol})=\ell_0(\sigma(X_{sol}))=\|\sigma(X_{sol})\|_0=k, k\in \mN,
\end{eqnarray}
and $A_{i,:}$ being the $i$-th row of $A$. Finally one can summarize the LRR mechanism as:

{\small \begin{tcolorbox}
[beamer,sidebyside,title=\textbf{Generic LRR},lower separated=false, fonttitle=\bfseries,
coltext=black , interior style={left color=orange!10!white, right color=blue!10!white},title style={left color=black, right color=red!70!orange!30!white}]
\begin{tcolorbox}
[beamer,title=\textbf{Observations -- forming $\y$ from $X_{sol}$},lower separated=false, fonttitle=\bfseries,
coltext=black , colback=yellow!70!orange!40!white,title style={left color=black, right color=red!70!orange!30!white}]
Given $A$ and $X_{sol}$ create $\y$ as
\begin{eqnarray}\label{eq:lrr1}
\y=A\vecw(X_{sol}), \ell_0^*(X_{sol})=k, k\in \mN.
\end{eqnarray}
 \end{tcolorbox}
\tcblower
\begin{tcolorbox}
[beamer,title=\textbf{Observations -- forming $\y$ from $X_{sol}$},lower separated=false, fonttitle=\bfseries,
coltext=black , colback=yellow!70!orange!40!white,title style={left color=black, right color=red!70!orange!30!white}]
Given $\y$ and $A$ from (\ref{eq:lrr1}) can one efficiently recover $X_{sol}$ back?
 \end{tcolorbox}
\end{tcolorbox}
}

It is not that difficult to see that $\ell_0^*(X_{sol})$ basically serves as a counting function that counts the number of the nonzero singular values of $X_{sol}$. Its analogy with the function $\ell_0(\x_{sol})$, that appears in the compressed sensing setup and counts the number of the nonzero elements of the unknown sparse vector, is rather obvious. Moreover, both such an analogy and the underlying sparsity that enables it suggest an algorithmic way that one can try to employ to ultimately recover $X_{sol}$. By the above definition, the LRR is the inverse problem for (\ref{eq:lrr1}) and can be posed as the so-called $\ell_0^*$-minimization optimization problem. Since such a minimization is a highly non-convex problem it is not known to be solvable in polynomial time. To design polynomially solvable heuristics one then, following into the compressed sensing footsteps, introduces the tightest convex norm relaxation concept and replaces the $\ell_0^*$- with the $\ell_1^*$-minimization.

\begin{center}
\tcbset{beamer,lower separated=false, fonttitle=\bfseries,width=3.4in, coltext=black ,
colback=yellow!70!orange!40!white,title style={left color=cyan!40!black!80!purple, right color=red!60!yellow!40!orange!80!white},
width=(\linewidth-4pt)/4, equal height group=AT,before=,after=\hfill,fonttitle=\bfseries}
\begin{tcolorbox}[title={\small$\ell_0^*$-minimization (LRR/MC/C-inf -- \yellow{VMT})}, width=3.15in]
\vspace{-.15in}
\begin{eqnarray}\label{eq:genmcl0pos}
\min_{X} & & \ell_0^*(X) \nonumber \\
\hspace{-.0in} \mbox{subject to} & & \y=A\vecw(X).\hspace{.4in}
\end{eqnarray}
\vspace{-.32in}
\end{tcolorbox}
\begin{tcolorbox}[title={\small$\ell_1^*$-minimization (LRR/MC/C-inf -- \yellow{VMT})}, width=3.15in]
\vspace{-.15in}
\begin{eqnarray}\label{eq:genmcl1pos}
\min_{X} & & \ell_1^*(X) \nonumber \\
 \mbox{subject to} & & \y=A\vecw(X). \hspace{.5in}
\end{eqnarray}
\vspace{-.32in}
\end{tcolorbox}
\end{center}

More on the history, usefulness, and applicability of the above introduced generic low rank recovery (LRR) via the tightest convex norm relaxation can be found in the introductory papers \cite{RFPrank,SAT05}. We also point out that we might on occasion refer to the above LRR description as the ``\emph{vectorized matrix terminology}" (\textbf{VMT}). Below, we wll also find it useful to introduce the very same LRR via the corresponding, so-called, ``\emph{masking matrix terminology}" (\textbf{MMT}).

\subsection{Matrix completion (MC) -- a special case of LRR}
\label{sec:mc}

The above is of course a generic LRR description. The matrix completion is a particular type of the LRR or, in technical terms, a special case of the above described LRR mathematical framework. In the matrix completion scenario one deals with a very particular type of system matrix $A$. Instead of (\ref{eq:mclinsys1b}) one then has
\begin{equation}\label{eq:mc1}
\y_i=A\vecw(X_{sol}) \quad \mbox{where} \quad \ell_0^*(X_{sol})=\ell_0(\sigma(X_{sol}))=\|\sigma(X_{sol})\|_0=k, k\in \mN
\quad \mbox{and} \quad \forall p\in\mR_+,\|A_{i,:}\|_p=1.
\end{equation}
This practically means that each row of system matrix $A$ has exactly one element equal to one and all other elements equal to zero. In a way, one can think of $A$ as being a cardinality $m$ subset of rows of $I_{n^2\times n^2}$. The reason for the appearance of such a matrix $A$ is of course the origin of the matrix completion itself. Basically, the matrix $A$ essentially emulates a ``mask" that one puts on matrix $X$ which allows reading out only $m$ of its $n^2$ elements. More on the origin of the matrix completion, its importance, and different related algorithmic considerations can be found in the introductory papers \cite{CR09matcomp,SAT05} as well as in many further studies that followed later on (see, e.g. \cite{CT10matcomp,KMO10matcomp,KMO10matcomp1,Klopp14matcomp,KLT11matcomp,NW11matcomp,NW12matcomp,RT11matcomp,MHT10}).

To ensure the completeness of the overall presentation and to be in alignment with the standard representation of the matrix completion problem usually seen throughout the literature we reformulate (\ref{eq:mc1}) through the above mentioned masking matrices terminology. Let $M\in\mR^{n\times n}$ be a masking matrix such that
\begin{equation}
M_{i,j}=\begin{cases}
          1, & \mbox{$(i,j)$-th element of $X_{sol}$ is observed}  \\
          0, & \mbox{otherwise}.
        \end{cases}  \label{eq:mc2}
\end{equation}
One can then describe creating the linear observations in the matrix completion as the following
\begin{equation}
Y=M \circ X_{sol},
  \label{eq:mc3}
\end{equation}
where $\circ$ stands for the component-wise multiplication. Clearly, ones in matrix $M$ allow reading out corresponding elements of $X_{sol}$ while zeros block (mask) them. To fit in the above linear description in (\ref{eq:mc1}) one would then create $A$ by removing all the zero rows from $\diag^{-1}(\vecw(M))I_{n^2\times n^2}$ ($\diag^{-1}(\cdot)$ creates the diagonal matrix with the elements on the main diagonal equal to its vector argument and, in a way, is the inverse of the earlier introduced $\diag(\cdot)$). Consequently, $\y$ would be obtained as $\y=AX_{sol}$.

\subsection{Causal inference (C-inf) -- a special case of MC}
\label{sec:cinf}

Finally, the causal inference, which will be the main topic of our interest, is yet another special case of the above low rank recovery framework. In fact, to be a bit more precise, the matrix completion is a special case of the above LRR and the causal inference is a special case of the matrix completion itself. The connection between the matrix completion (MC) and the causal inference (C-inf) was established in \cite{ABDIK21}. Moreover, a very nice additional connections to the unconfoundedness and the synthetic control were established in \cite{ABDIK21} as well.
Here, we will also work within the context of the matrix completion-causal inference connection. To that end we start by making this connection mathematically more precise. Namely, if one thinks of the matrix completion as a way of ``masking" $X$ and reading out the unmasked elements, then the causal inference does exactly the same thing while additionally imposing a particular structure on the mask itself. Let, as above, $M\in\mR^{n\times n}$ be a masking matrix. Then for a fixed $l\leq n$ one, in a causal inference context, has
\begin{equation}
M\triangleq M^{(l)} \quad \mbox{and} \quad M_{i,j}=M_{i,j}^{(l)}=\begin{cases}
          1, & \mbox{if } \min(i,j)\leq l \\
          0, & \mbox{otherwise}.
        \end{cases}  \label{eq:cinf1}
\end{equation}
The above is the so-called \bl{\emph{\textbf{block causal inference}}} setup. Figures \ref{fig:Mmc} and \ref{fig:Mnlockcinf} showcase the key difference between the generic matrix completion and the causal inference. In the generic matrix completion the mask matrix $M$ can have zeros and ones located within the matrix in a basically arbitrary way. On the other hand, in the causal inference setup the structure of $M$ is somewhat particular. In general, in a row of the matrix $M$ the first zero can appear not later than the first zeros appeared in any of the previous rows. After a zero all other remaining elements in the same row must also be zero. For the block case of our interest here it is as shown in Figure  \ref{fig:Mnlockcinf}.

\begin{figure}[htb]
\centering
\centerline{\epsfig{figure=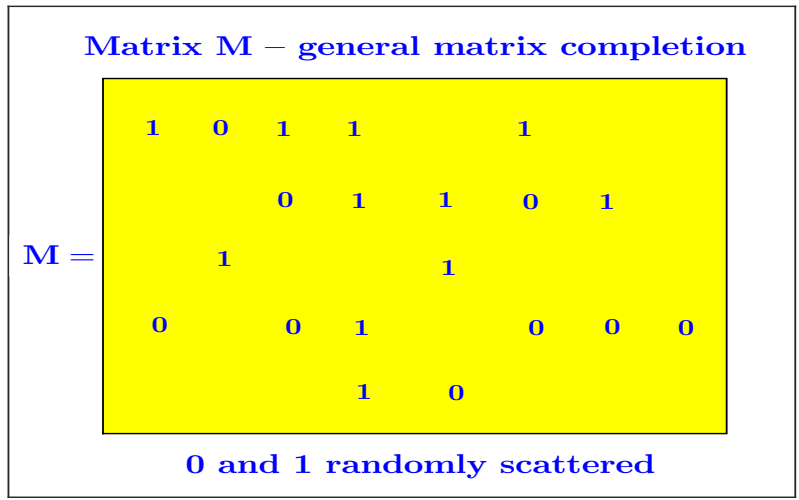,width=13.5cm,height=8cm}}
\caption{Matrix $M$ -- general matrix completion (\bl{\textbf{MC}}) setup}
\label{fig:Mmc}
\end{figure}

\begin{figure}[htb]
\centering
\centerline{\epsfig{figure=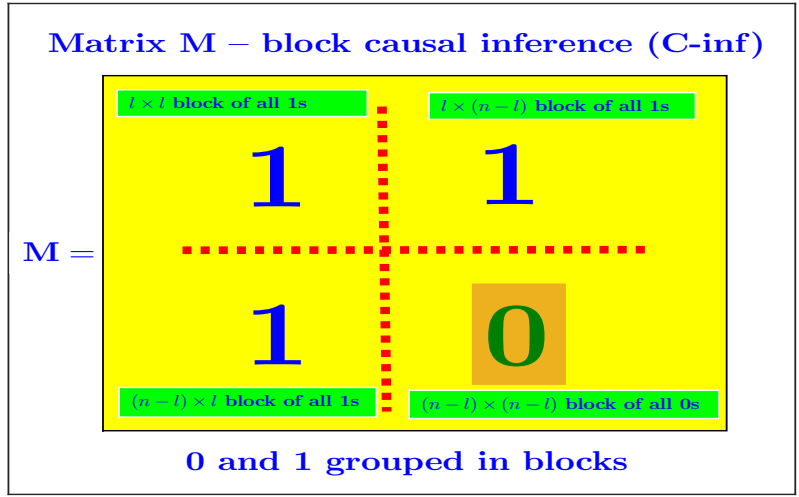,width=13.5cm,height=8cm}}
\caption{Matrix $M\triangleq M^{(l)}$ -- block causal inference (\bl{\textbf{C-inf}}) setup}
\label{fig:Mnlockcinf}
\end{figure}

\noindent The rationale for the use of the block causal inference is the most easily understood if one views things in the time domain. Namely, if the columns of the masking matrix $M$ represent time axis then the observations related to ceratin rows of the matrix will not be available after a fixed point in time. In the block scenario this point is fixed across the affected rows. However, it does not necessarily need to be fixed (for more in this direction we refer to \cite{ADHsynth10} (in particular, the California tobacco example), \cite{XPlatfac10} (in particular, the latent factor modeling in the context of the simultaneous/staggered treatment adoption), and to \cite{AthImb18,AthSte02,ShaTou19} (in particular, the health care applications) as excellent references for understanding the need of various C-inf scenarios). As this is the introductory paper, where we present the overall methodology, we selected the block causal inference scenario as probably the most representative and well-known one. In some of our companion papers we will show how the methodology that we are introducing here can be utilized to handle other C-inf scenarios as well.

Under this casual inference assumption one then has the following for the collection of the observation $Y$ in the matrix completion
\begin{equation}
Y=M \circ X = M^{(l)}\circ X.
  \label{eq:cinf2}
\end{equation}
We will more often than not avoid specifying superscript to make writing easier. From the context it will be clear if it should be there and,
 if so, what its value should be. To fit in the linear description of (\ref{eq:mc1}) one would create $A$ in the following way: 1) start by choosing the first $ln$ rows of $I_{n^2\times n^2}$; and 2) then for any next set of $n$ rows of $I_{n^2\times n^2}$ continue by choosing its subset of first $l$ rows while skipping the remaining $n-l$. Similarly, $\y$ would be obtained by stacking the columns of $Y$ and skipping the elements $Y_{i,j}$ where $\min(i,j)>l$. As mentioned above, we will find it useful later on to have (\ref{eq:genmcl0pos}) and (\ref{eq:genmcl1pos}) rewritten in the ``\emph{masking matrix terminology}" (\textbf{MMT}) as well;

\begin{center}
\tcbset{beamer,lower separated=false, fonttitle=\bfseries,width=3.4in, coltext=black ,
colback=yellow!70!orange!40!white,title style={left color=cyan!40!black!80!purple, right color=red!60!yellow!40!orange!80!white},
width=(\linewidth-4pt)/4, equal height group=AT,before=,after=\hfill,fonttitle=\bfseries}
\begin{tcolorbox}[title=$\ell_0^*$-minimization (C-inf -- \yellow{MMT}), width=3.1in]
\vspace{-.15in}
\begin{eqnarray}\label{eq:genmcl0posmmt}
\min_{X} & & \ell_0^*(X) \nonumber \\
\hspace{-.0in} \mbox{subject to} & & Y=M\circ X.\hspace{.4in}
\end{eqnarray}
\vspace{-.32in}
\end{tcolorbox}
\begin{tcolorbox}[title=$\ell_1^*$-minimization (C-inf -- \yellow{MMT}), width=3.2in]
\vspace{-.15in}
\begin{eqnarray}\label{eq:genmcl1posmmt}
\min_{X} & & \ell_1^*(X) \nonumber \\
 \mbox{subject to} & & Y=M\circ X. \hspace{.5in}
\end{eqnarray}
\vspace{-.32in}
\end{tcolorbox}
\end{center}

\subsection{\prp{\textbf{C-inf $\leftrightarrow$ MC}} connection via counterfactuals}
\label{sec:cinfmccf}

To understand where such a structure may come from, it is useful to connect it to the context of  treatment/units/times following the terminology in \cite{ABDIK21}. In such  context, one assumes that the matrix $X$ contains observations about a certain set of, say, $n$ units {(e.g. individuals, subpopulations, and geographic regions)} over a period of say, $n$, time instances. Assuming that the rows of $X$ correspond to the units and the columns to the time instances, one wants to estimate the effects that a certain treatment may have on the treated units. A subset of the units (say those that correspond to the rows $i>l$) is then at time $l$ exposed to an irreversible treatment (once the treatment starts its effects can not be reversed). {Examples of treatments include health therapies, socio-economic policies, and taxes.} To be able to appropriately assess the resulting treatment effects, in addition to having the values of $X$ after the treatment, one would need to have the access to the so-called \bl{\textbf{\emph{counterfactuals}}} -- the values of the treated units -- had the treatment not been applied. Switching back to the  matrix completion terminology, one would need to estimate (a presumably low rank) $X$ while not having access to its portion covered by the block-mask $M=M^{(l)}$. In other words, one would need to solve (\ref{eq:genmcl0posmmt}) with $M=M^{(l)}$.

Of course, as the change in the structure of $A$ or $M$ does not prevent the utilization of the above $\ell_0^*\longrightarrow\ell_1^*$ relaxation concept, one typically employs it as a provably polynomial heuristic strategy to solve the matrix completion/causal inference problems approximately. While it is somewhat intuitive that, as the closest convex norm relaxation, the above $\ell_1^*$-minimization might produce a matrix similar to the unknown $X_{sol}$ it is perhaps quite surprising that in certain scenarios it actually perfectly recovers the exact $X_{sol}$. Potential existence of such an $\ell_0^*-\ell_1^*$ equivalence is rather remarkable phenomenon and determining when or how often it happens is pretty much the key task in the mathematical analysis of the convex norm relaxation based LRR algorithms. Along the same lines, answering this very same question will be precisely the main mathematical contribution of this paper.

A lot of work will need to be performed, however, before we get to the point where we can say a few more concrete words about the ultimate mathematical contributions. We will utilize to a large degree some of the \bl{\textbf{\emph{Random Duality Theory (RDT)}}} concepts presented and discussed in details in a long line of work \cite{StojnicCSetam09,StojnicUpper10,StojnicICASSP10var,StojnicICASSP10knownsupp,StojnicICASSP09,StojnicISIT2010binary,StojnicRegRndDlt10,StojnicGenLasso10}). On top of that, quite a few additional mathematical concepts will be needed as well. While we will try to explain all the needed mathematical tools in sufficient detail, a solid level of familiarity with the RDT might be helpful.

Before moving to a more thorough discussion particularly related to the mathematical analysis of the causal inference we first briefly digress to address a seemingly paradoxical situation regarding the level of simplicity/difficulty of the above LRR on the one side and the MC/C-inf, as its special cases, on the other.

\subsection{Special cases are not necessarily simpler}
\label{sec:scnotsimpler}

The above introduction of the matrix completion and the causal inference concepts through the generic LRR mechanism might portrait them as subproblems of a more general class of recovery problems. Moreover, one then may be tempted to believe that as such they can be both \emph{solved} and \emph{analyzed} in the very same way as the generic LRR problems. That would basically mean that all the results that one could conceivably create for the LRR would automatically translate to hold in a similar form in the MC and the C-inf scenarios. The part related to the \emph{solving} of these problems is indeed true. The same algorithm (say $\ell_1^*$-minimization) that can be used as a heuristic for the generic LRR can be used for the MC and the C-inf as well. On the other hand, the part related to the \emph{analysis} could not be further from the truth. Not only would not the results obtained for the LRR directly translate to the MC and the C-inf but they actually often might need to be proven in a completely different way.

The key to fully understanding this paradox is in distinguishing the additional structuring of the unknown vector $X_{sol}$ from the additional structuring of the system matrix $A$. When the unknown vectors/matrices are additionally structured it is quite likely that the corresponding performance analyses of the underlying algorithms are translatable (see, e.g. the companion paper \cite{Cinfidealpc22}). On the other hand when the system matrices $A$ are additionally structured then not only that the corresponding analyses might be difficult to translate but they also might actually need to be completely replaced. Along the same lines, since the MC and the C-inf are the special cases of the LRR with regard to the additional structuring of $A$, it is not a priori clear that the ability to analytically handle the corresponding generic LRR in any way guarantees the existence of such an ability when it comes to analytically handling the MC and the C-inf. We will see some aspects of this reasoning already in one of the sections that will follow later on.

\section{Causal inference -- $\ell_0^*-\ell_1^*$ relaxation equivalence}
\label{sec:cinfreleqv}

As mentioned earlier, solving the generic LRR (and consequently the C-inf as its a special case) might be difficult due to a highly non-convex objective function in (\ref{eq:genmcl0posmmt}). Various heuristics can be employed depending on the practical scenarios that one can face. In the mathematically most challenging so-called linear regime, the above mentioned $\ell_1^*$-minimization relaxation heuristic is typically viewed as the best known provably polynomial one. We adopt the same view in what follows and take it as a current benchmark for the algorithmic handling of the C-inf. As mentioned above, a rather remarkable feature of this heuristic is that sometimes it can actually solve the underlying problems exactly. When that happens we say that the following $\ell_0^*-\ell_1^*$-equivalence phenomenon occurs.

\begin{center}
\tcbset{beamer,lower separated=false, fonttitle=\bfseries,
coltext=black , colback=yellow!70!orange!40!white ,title style={left color=black, right color=red!70!orange!30!white}}
\begin{tcolorbox}[title=$\ell_0^*-\ell_1^*$-equivalence (C-inf):, width=6.43in]
\vspace{-.0in}
Let $X_{sol}$ be the solution of (\ref{eq:genmcl0posmmt}) or (\ref{eq:genmcl0pos}) and let $\hat{X}$ be a solution of (\ref{eq:genmcl1posmmt}) or (\ref{eq:genmcl1pos}) and set
\begin{eqnarray*}\label{eq:genmcl1poseqvrmse}
\mathbf{RMSE}\triangleq\|\vecw(\hat{X})-\vecw(X_{sol})\|_2.
\end{eqnarray*}
\vspace{-.3in}
\begin{eqnarray}\label{eq:genmcl1poseqv}
\mbox{If and only if } \prp{(\hat{X}=X_{sol} \mbox{ and } \mathbf{RMSE}=0)} \quad \mbox{then} \quad \prp{(\ell_0^{*}-\mbox{minimization} \Longleftrightarrow \ell_1^{*}-\mbox{minimization})}.\quad
\end{eqnarray}
\vspace{-.3in}
\end{tcolorbox}
\end{center}

The above basically means that when the $\ell_0^*-\ell_1^*$-equivalence happens the optimization problems in (\ref{eq:genmcl0posmmt}) and (\ref{eq:genmcl1posmmt}) are equivalent and as such replaceable by each other. That would, of course, be an ideal scenario where it would be basically possible to replace the non-convex optimization problem with the convex one without losing anything in terms of the accuracy of the obtained solutions. Since the mere existence of such a scenario is already a remarkable phenomenon we will in this paper be interested in uncovering the underlying intricacies that enable for it ro happen. Moreover, as it will turn out that its occurrence is not an anomaly but rather a consequence of a generic property, we will then raise the bar accordingly and attempt to provide not only the proof of the existence but also a complete analytical characterization of this property. This will include a full characterization as to how often and in what scenarios it might happen. To do so we will combine the Random Duality Theory (RDT) tools from \cite{StojnicCSetam09,StojnicUpper10,StojnicICASSP10var,StojnicICASSP10knownsupp,StojnicICASSP09,StojnicISIT2010binary,StojnicRegRndDlt10,StojnicGenLasso10} and several advanced sophisticated probabilistic concepts that we will introduce along the way in the sections that follow below.

In the rest of this section we will focus on some algebraic $\ell_0^*-\ell_1^*$-equivalence preliminaries conceptually borrowed from the RDT. We start things off with a generic LRR $\ell_0^*-\ell_1^*$-equivalence result (the result is basically an adaptation of the general CS equivalence condition result from \cite{StojnicCSetam09,StojnicUpper10,StojnicICASSP09} to the corresponding one for the $\ell_1$ norm of the singular/eigenvalues (similar adaptation can also be found in \cite{OH10})).

\begin{theorem}(\textbf{\bl{$\ell_0^*-\ell_1^*$-equivalence condition (LRR)}} -- \textbf{general}  $X$)
Consider a $\bU\in\mR^{n\times k}$ such that $\bU^T\bU=I_{k\times k}$ and a $\bV\in\mR^{n\times k}$ such that $\bV^T\bV=I_{k\times k}$ and a  $\rankw-k$  matrix $X_{sol}=X\in\mR^{n\times n}$  with all of its columns belonging to the span of $\bU$ and all of its rows belonging to the span of $\bV^T$. Also, let the orthogonal spans $\bU^{\perp}\in\mR^{n\times (n-k)}$ and $\bV^{\perp}\in\mR^{n\times (n-k)}$ be such that $U\triangleq \begin{bmatrix}
    \bU & \bU^{\perp}
   \end{bmatrix}$ and $V\triangleq \begin{bmatrix}
    \bV & \bV^{\perp}
   \end{bmatrix}$ and
\begin{equation}\label{eq:cinfthm0}
U^TU\triangleq \begin{bmatrix}
    \bU & \bU^{\perp}
   \end{bmatrix}^T\begin{bmatrix}
    \bU & \bU^{\perp}
   \end{bmatrix}=I_{n\times n} \quad \mbox{and} \quad
 V^TV \triangleq\begin{bmatrix}
    \bV & \bV^{\perp}
   \end{bmatrix}^T\begin{bmatrix}
    \bV & \bV^{\perp}
   \end{bmatrix}=I_{n\times n}.
 \end{equation}
For a given matrix $A\in\mR^{m\times n^2}$ ($m\leq n^2$) assume that $\y=A\vecw(X)=A\vecw(X_{sol})\in \mR^m$ and let $\hat{X}$ be the solution of (\ref{eq:genmcl1posmmt}). If
\begin{equation}
(\forall W\in \mR^{n\times n} | A\vecw(W)=\0_{m\times 1},W\neq \0_{n\times n}) \quad  -\tr(\bU^TW\bV)< \ell_1^*((\bU^{\perp})^TW\bV^{\perp}),
\label{eq:cinfthm1}
\end{equation}
then
\begin{equation}
\ell_0^*\Longleftrightarrow \ell_1^* \quad \mbox{and}\quad  \textbf{\emph{RMSE}}=\|\mbox{\emph{vec}}(\hat{X})-\mbox{\emph{vec}}(X_{sol})\|_2=0,\label{eq:cinfthm1a}
\end{equation}
and the solutions of (\ref{eq:genmcl0posmmt}) (or (\ref{eq:genmcl0pos})) and (\ref{eq:genmcl1posmmt}) (or (\ref{eq:genmcl1pos})) coincide. Moreover, if
\begin{equation}
(\exists  W\in \mR^{n\times n} | A\vecw(W)=\0_{m\times 1},W\neq \0_{n\times n}) \quad  -\tr(\bU^TW\bV)\geq \ell_1^*((\bU^{\perp})^TW\bV^{\perp}),
\label{eq:cinfthm2}
\end{equation}
then there is an $X$ from the above set of matrices with columns belonging to the span of $\bU$ and rows belonging to the span of $\bV$ such that the solutions of (\ref{eq:genmcl0posmmt}) (or (\ref{eq:genmcl0pos})) and (\ref{eq:genmcl1posmmt}) (or (\ref{eq:genmcl1pos})) are different.
\label{thm:cinfthm1}
\end{theorem}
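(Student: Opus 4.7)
The plan is to recast the $\ell_0^*$--$\ell_1^*$-equivalence as a geometric condition on the null space $\mathcal{N}\triangleq\{W\in\mR^{n\times n}\,:\,A\,\vecw(W)=\0\}$ of the measurement map, and analyze that condition through the subdifferential of the nuclear norm. For the sufficiency direction, I would parametrize every feasible point of (\ref{eq:genmcl1pos}) (equivalently (\ref{eq:genmcl1posmmt})) as $\hat X = X_{sol}+W$ with $W\in\mathcal{N}$ and apply the subgradient inequality for the convex function $\ell_1^*$,
\[
\ell_1^*(X_{sol}+W)-\ell_1^*(X_{sol})\;\geq\;\sup_{G\in\partial\ell_1^*(X_{sol})}\tr(G^TW).
\]
At the rank-$k$ matrix $X_{sol}=\bU\Sigma_k\bV^T$ with $\Sigma_k\succ 0$, the classical description of the subdifferential is
\[
\partial\ell_1^*(X_{sol}) \;=\; \{\bU\bV^T+\bU^{\perp}Z(\bV^{\perp})^T\,:\,Z\in\mR^{(n-k)\times(n-k)},\ \|Z\|_{\mathrm{op}}\leq 1\},
\]
and the spectral/nuclear-norm duality $\sup_{\|Z\|_{\mathrm{op}}\leq 1}\tr(Z^TM)=\ell_1^*(M)$ evaluates the supremum above as $\tr(\bU^TW\bV)+\ell_1^*((\bU^{\perp})^TW\bV^{\perp})$. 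Condition (\ref{eq:cinfthm1}) says exactly that this quantity is strictly positive for every nonzero $W\in\mathcal{N}$, so $X_{sol}$ is the unique minimizer of the $\ell_1^*$-problem, forcing $\hat X=X_{sol}$ and $\textbf{RMSE}=0$.

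For the converse, given a witness $W\neq\0$ of (\ref{eq:cinfthm2}), I would exhibit an explicit $X_{sol}$ with columns in the span of $\bU$ and rows in the span of $\bV^T$ that is not uniquely recovered by $\ell_1^*$-minimization. The natural candidate is $X_{sol}=t\,\bU\bV^T$ for a large scalar $t>0$, which has rank $k$ and $\ell_1^*(X_{sol})=kt$. Writing $X_{sol}+W$ in the orthonormal bases $U$ and $V$ introduced in the theorem yields a $2\times 2$ block form whose $(1,1)$-block is $tI_k+\bU^TW\bV$ and whose $(2,2)$-block is $(\bU^{\perp})^TW\bV^{\perp}$. A Weyl/Mirsky perturbation argument, or equivalently a Schur-complement reduction in the invertible $(1,1)$-block valid for $t$ large, then delivers the asymptotic expansion
\[
\ell_1^*(X_{sol}+W)\;=\;kt+\tr(\bU^TW\bV)+\ell_1^*((\bU^{\perp})^TW\bV^{\perp})+O(1/t),
\]
as $t\to\infty$. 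Under (\ref{eq:cinfthm2}) this collapses to $\ell_1^*(X_{sol}+W)\leq kt+O(1/t)=\ell_1^*(X_{sol})+O(1/t)$, so for $t$ sufficiently large $X_{sol}$ is either not a minimizer (strict case) or not the unique minimizer (equality case), and in either case the $\ell_0^*$- and $\ell_1^*$-solutions can differ.

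The sufficiency direction is essentially a routine subgradient computation once the nuclear-norm subdifferential is invoked, so I expect the main technical obstacle to be the converse: establishing the asymptotic expansion above requires simultaneously controlling the top $k$ singular values of the block-reduced perturbation (which stay $t+O(1)$, with summed first-order correction $\tr(\bU^TW\bV)$) and the bottom $n-k$ singular values (which converge to those of $(\bU^{\perp})^TW\bV^{\perp}$ at rate $O(1/t)$). The cleanest route is the block decomposition in the $[\bU,\bU^{\perp}]$--$[\bV,\bV^{\perp}]$ bases combined with a Schur complement in the invertible $(1,1)$-block, which rigorously separates the two singular-value regimes and produces both the first-order trace correction and the nuclear-norm tail with the stated error; any looseness here would weaken (\ref{eq:cinfthm1}) from a sharp characterization to a mere sufficient condition.
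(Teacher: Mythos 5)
Your sufficiency argument is essentially the paper's. The paper writes the nuclear norm as $\max_{\Lambda_*\Lambda_*^T\leq I}\tr(\Lambda_*\,\cdot\,)$ and restricts to block-diagonal $\Lambda_*$ to obtain the lower bound $\ell_1^*(\cdot)\geq\tr(\bU^T(X+W)\bU)+\ell_1^*((\bU^\perp)^TW\bU^\perp)$ (in the symmetric case); your invocation of the Watson subdifferential $\{\bU\bV^T+\bU^\perp Z(\bV^\perp)^T:\|Z\|_{\mathrm{op}}\leq 1\}$ plus the subgradient inequality is exactly the dual reformulation of that same step, so the two routes are interchangeable.

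Your necessity argument is genuinely different and has a genuine gap. The paper does \emph{not} rely on an asymptotic singular-value expansion: it keeps the argument purely algebraic, choosing $X=\bU\Lambda_x\bU^T$ with free diagonal $\Lambda_x\succ 0$, and proves two auxiliary lemmas (Lemmas \ref{lemma:genmclemma3}--\ref{lemma:genmclemma4}) showing that a symmetric contraction whose first $k$ diagonal entries are $\geq 1$ must have $I_k$ as its $(1,1)$-block and a zero $(1,2)$-block. This forces the optimal dual certificate to be block-diagonal, turning the inequality between $\max_{\mathcal{L}_*^0}$ and $\max_{\mathcal{L}_*}$ into an exact equality for the constructed $X$, so that all of the implications in the sufficiency chain reverse exactly. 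In contrast, your route replaces this exact algebraic reversal with the expansion $\ell_1^*(X_{sol}+W)=kt+\tr(\bU^TW\bV)+\ell_1^*((\bU^\perp)^TW\bV^\perp)+O(1/t)$. This handles the strict case of (\ref{eq:cinfthm2}) cleanly: the directional derivative $\phi'(0^+)=\tr(\bU^TW\bV)+\ell_1^*((\bU^\perp)^TW\bV^\perp)<0$ already shows $X_{sol}$ is not a minimizer, with no need for large $t$. But in the boundary case where (\ref{eq:cinfthm2}) holds with equality, the first-order term vanishes and the $O(1/t)$ remainder has no controlled sign; if it is strictly positive, then $X_{sol}=t\bU\bV^T$ remains the unique $\ell_1^*$-minimizer for every finite $t$, and your construction produces no $X$ witnessing a discrepancy between the $\ell_0^*$- and $\ell_1^*$-solutions. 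The assertion ``or not the unique minimizer (equality case)'' is thus unsupported by the expansion. You would either need to (i) prove that the second-order correction is $\leq 0$ for the relevant $W$, or (ii) allow a more flexible $X=\bU\Lambda_x\bV^T$ (as the paper does) and establish an exact, not merely asymptotic, equality via a block/contraction lemma of the type the paper uses. Absent one of these, your converse proves only the strict version of (\ref{eq:cinfthm2}), which weakens (\ref{eq:cinfthm1}) from a sharp characterization to a one-sided sufficient condition.
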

\begin{proof}
  The proof is a trivial adaptation of the proof for symmetric matrices given in Appendix \ref{sec:appA}.
\end{proof}

The condition in the theorem relates matrix $W$ to the null-space of matrix $A$ and as such is VMT based. In the MC and C-inf cases that are of our interest here, it is more convenient to deal with its an MMT analogue. Recalling on the proof of Theorem \ref{thm:cinfthm1} from Appendix \ref{sec:appA} and the origin and role of matrix $W$ within that proof, one has that stating that $W$ belongs to the null-space of $A$ is basically equivalent to stating that $M\circ W=\0_{n\times n}$. In other words, one has the equivalence between the following two sets
\begin{equation}
(W\in \mR^{n\times n} | A\vecw(W)=\0_{m\times 1},W\neq \0_{n\times n})
 \Longleftrightarrow (W\in \mR^{n\times n} | M\circ W=\0_{n\times n},W\neq \0_{n\times n}).
 \label{eq:cinfanl1}
\end{equation}
Continuing further in the spirit of the RDT the following corollary of the above theorem can be established as well.
\begin{corollary}(\textbf{\bl{$\ell_0^*-\ell_1^*$-equivalence condition via masking matrix (MC/C-inf)}} -- \textbf{general}  $X$)
Assume the setup of Theorem \ref{thm:cinfthm1} with $X_{sol}$ being the unique solution of (\ref{eq:genmcl0posmmt}) (or (\ref{eq:genmcl0pos})). Let the masking matrix $M\in\mR^{n\times n}$ have $m$ ones and $(n^2-m)$ zeros and let $A$ be generated via $M$, i.e. let $A$ be the matrix obtained after removing all the zero rows from $\diag^{-1}(\vecw(M))I_{n^2\times n^2}$. If and only if
\begin{equation}
\min_{W,W^TW=1,M\circ W=\0_{n\times n}}  \tr(\bU^TW\bV)+\ell_1^*((\bU^{\perp})^TW\bV^{\perp})\geq 0,
\label{eq:cinfcor1}
\end{equation}
then
\begin{equation}
\ell_0^*\Longleftrightarrow \ell_1^* \quad \mbox{and}\quad  \textbf{\emph{RMSE}}=\|\mbox{\emph{vec}}(\hat{X})-\mbox{\emph{vec}}(X_{sol})\|_2=0,\label{eq:cinfcor1a}
\end{equation}
and the solutions of (\ref{eq:genmcl0posmmt}) (or (\ref{eq:genmcl0pos})) and (\ref{eq:genmcl1posmmt}) (or (\ref{eq:genmcl1pos})) coincide.
 \label{cor:cinfcor1}
\end{corollary}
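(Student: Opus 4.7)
The plan is to derive Corollary~\ref{cor:cinfcor1} directly from Theorem~\ref{thm:cinfthm1} by (i) translating the null-space condition on $A$ into the masking-matrix condition on $M$, and (ii) exploiting the positive homogeneity of the functional
\[
f(W) \triangleq \tr(\bU^T W \bV) + \ell_1^*((\bU^{\perp})^T W \bV^{\perp})
\]
to compress the universal quantifier over all nonzero $W$ in the null space of $A$ into a single scalar minimization over the unit sphere.

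First, I would invoke the set equivalence (\ref{eq:cinfanl1}). By the construction of $A$ from $M$ given in the corollary (extracting from $\diag^{-1}(\vecw(M))I_{n^2\times n^2}$ precisely those rows indexed by the ones of $M$), the condition $A\vecw(W) = \0_{m\times 1}$ is exactly the requirement that every entry of $W$ sitting beneath a one of $M$ vanishes, i.e. $M\circ W = \0_{n\times n}$. Hence the null-space hypothesis on $W$ in both (\ref{eq:cinfthm1}) and (\ref{eq:cinfthm2}) may be replaced verbatim by $M\circ W = \0_{n\times n}$, and from this point on the condition of Theorem~\ref{thm:cinfthm1} can be read off entirely in MMT.

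Next, I would unify the two inequalities of Theorem~\ref{thm:cinfthm1} into the single scalar minimum appearing in (\ref{eq:cinfcor1}). The map $f$ is continuous on $\mR^{n\times n}$ and positively homogeneous of degree one (the trace is linear, and $\ell_1^*$ is absolutely homogeneous), so $f(cW)=cf(W)$ for every $c>0$. Consequently the statement ``$f(W)>0$ for every nonzero $W$ with $M\circ W=\0$'' is equivalent, by restriction to the compact slice $\{W : W^T W = 1,\, M\circ W = \0\}$ and the standard attainment of continuous minima on compact sets, to $\min_{W^T W = 1,\, M\circ W = \0} f(W)>0$. Plugging this reformulation into the sufficient clause of Theorem~\ref{thm:cinfthm1} yields the ``if'' direction of the corollary: positivity of the minimum forces (\ref{eq:cinfthm1}) and hence (\ref{eq:cinfcor1a}). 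The ``only if'' direction is the contrapositive of the necessity clause of Theorem~\ref{thm:cinfthm1}: if the minimum were strictly negative, then an attaining unit-norm $W$ would satisfy (\ref{eq:cinfthm2}), producing some admissible $X$ for which the $\ell_1^*$-solution is not $X_{sol}$ and thereby violating $\ell_0^{*}\Leftrightarrow \ell_1^{*}$.

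The main subtlety I anticipate is the boundary case $\min = 0$, i.e. the strict-versus-nonstrict inequality passage from Theorem~\ref{thm:cinfthm1}'s ``$<$'' to the corollary's ``$\geq 0$''. At a minimizer $W^{*}$ with $f(W^{*})=0$ the premise of (\ref{eq:cinfthm2}) is met with equality, which a priori permits the perturbation $X_{sol}+\varepsilon W^{*}$ to match $X_{sol}$ in $\ell_1^*$ and thus produce an alternative $\ell_1^*$-minimizer; the uniqueness of $X_{sol}$ as the $\ell_0^*$-solution, which is assumed explicitly in the corollary but not in the theorem, is precisely what absorbs this borderline case and justifies promoting the sharp strict threshold to the nonstrict ``$\geq 0$'' threshold as stated. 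I would therefore first establish the corollary with ``$>$'' via the homogeneity argument above and then conclude with a short epilogue invoking this uniqueness hypothesis to upgrade ``$>$'' to ``$\geq$''.
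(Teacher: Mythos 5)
Your proposal matches the paper's approach: the paper's proof is a one-liner — ``Follows immediately as a combination of (\ref{eq:cinfthm1}), (\ref{eq:cinfthm2}), and (\ref{eq:cinfanl1})'' — and you have simply unpacked that, making explicit the two steps that are implicit there, namely the null-space translation via (\ref{eq:cinfanl1}) and the use of positive homogeneity plus compactness to collapse the universal quantifier over nonzero null-space matrices into a scalar minimum over the Frobenius unit sphere. The one place where your exposition is slightly soft is the boundary case $\min f = 0$: you attribute the upgrade from ``$>$'' to ``$\geq$'' entirely to the uniqueness of $X_{sol}$ as the $\ell_0^*$-minimizer, but the mechanism by which $\ell_0^*$-uniqueness forecloses an alternative $\ell_1^*$-minimizer of the form $X_{sol}+\varepsilon W^*$ is not actually spelled out, and in particular the inequality in (\ref{eq:genmcproof1a}) runs in the wrong direction to give $\ell_1^*(X_{sol}+W^*)\le \ell_1^*(X_{sol})$ directly from $f(W^*)=0$. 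That said, the paper is no more precise here — it only remarks that ``a strict inequality is loosened up a bit at the expense of the uniqueness assumption'' and defers the full argument — so your treatment of this borderline case is at exactly the same level of rigor as the source, and the overall derivation is correct.
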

\begin{proof}
  Follows immediately as a combination of (\ref{eq:cinfthm1}), (\ref{eq:cinfthm2}), and (\ref{eq:cinfanl1}).
\end{proof}

\noindent \textbf{Remark:} Carefully comparing the conditions in (\ref{eq:cinfthm1}) and (\ref{eq:cinfcor1}) one can observe that a strict inequality is loosened up a bit at the expense of the uniqueness assumption. With a little bit of extra effort one may avoid this. However, to make writings below substantially easier we will work with a non-strict inequality.

To analyze the optimization problem in (\ref{eq:cinfcor1}) we follow into the footsteps of \cite{StojnicCSetam09} where similar optimization problems were handled on multiple occasions through a very generic Lagrangian mechanism. The first step of such a procedure is writing down explicitly the optimization from (\ref{eq:cinfcor1})
\begin{eqnarray}
f_{pr}(M;U,V) \triangleq \min_{W} & &  \tr(\bU^TW\bV)+\ell_1^*((\bU^{\perp})^TW\bV^{\perp}) \nonumber \\
\mbox{subject to} & & M\circ W=\0_{n\times n} \nonumber \\
& & W^TW=1.
\label{eq:cinfanl2}
\end{eqnarray}
One can then write the corresponding Lagrangian and the Lagrange dual function to obtain
\begin{eqnarray}
{\cal L}(W,\Lambda,\Theta,\gamma) & \triangleq &  \tr(\bU^TW\bV)+\ell_1^*((\bU^{\perp})^TW\bV^{\perp})+\Theta (M\circ W)+\gamma \lp\tr(W^TW)-1\rp \nonumber \\
& = & \max_{\Lambda,\Lambda^T\Lambda\leq I} \lp \tr(\bU^TW\bV)+\tr(\Lambda((\bU^{\perp})^TW\bV^{\perp}))+\Theta (M\circ W)+\gamma \lp\tr(W^TW)-1\rp \rp\nonumber \\
& = & \max_{\Lambda,\Lambda^T\Lambda\leq I} \lp \tr\lp (\bV\bU^T+\bV^{\perp}\Lambda(\bU^{\perp})^T+\Theta\circ M)W\rp + \gamma\tr(W^TW)-\gamma \rp,\label{eq:cinfanl3}
\end{eqnarray}
and
\begin{eqnarray}
g(\Theta,\gamma) & \triangleq & \min_{W} {\cal L}(W,\Lambda,\Theta,\gamma) \nonumber \\
 & = & \min_{W} \max_{\Lambda,\Lambda^T\Lambda\leq I} \lp \tr\lp (\bV\bU^T+\bV^{\perp}\Lambda(\bU^{\perp})^T+\Theta\circ M)W\rp + \gamma\tr(W^TW)-\gamma \rp.\label{eq:cinfanl4}
\end{eqnarray}
Utilizing the Lagrangian duality one then further has
\begin{eqnarray}
g(\Theta,\gamma) & = & \min_{W} \max_{\Lambda,\Lambda^T\Lambda\leq I} \lp \tr\lp (\bV\bU^T+\bV^{\perp}\Lambda(\bU^{\perp})^T+\Theta\circ M)W\rp + \gamma\tr(W^TW)-\gamma \rp \nonumber \\
& \geq & \max_{\Lambda,\Lambda^T\Lambda\leq I} \min_{W}  \lp \tr\lp (\bV\bU^T+\bV^{\perp}\Lambda(\bU^{\perp})^T+\Theta\circ M)W\rp + \gamma\tr(W^TW)-\gamma \rp \nonumber \\
& = & \max_{\Lambda,\Lambda^T\Lambda\leq I} \lp -\frac{1}{4\gamma} \tr\lp (\bV\bU^T+\bV^{\perp}\Lambda(\bU^{\perp})^T+\Theta\circ M)(\bV\bU^T+\bV^{\perp}\Lambda(\bU^{\perp})^T+\Theta\circ M)^T\rp  -\gamma \rp.\nonumber \\\label{eq:cinfanl5}
\end{eqnarray}
Moreover,
{\small\begin{eqnarray}
f_{pr}(M;U,V) & \geq  & \max_{\Theta,\gamma} g(\Theta,\gamma) \nonumber \\
 & = & \max_{\Lambda,\Lambda^T\Lambda\leq I,\Theta,\gamma} \lp -\frac{1}{4\gamma} \tr\lp (\bV\bU^T+\bV^{\perp}\Lambda(\bU^{\perp})^T+\Theta\circ M)(\bV\bU^T+\bV^{\perp}\Lambda(\bU^{\perp})^T+\Theta\circ M)^T\rp  -\gamma \rp.\nonumber \\\label{eq:cinfanl5a}
\end{eqnarray}}We proceed by further optimizing over $\gamma$.
{\small\begin{eqnarray}
f_{pr}(M;U,V)
& \geq & \max_{\Lambda,\Lambda^T\Lambda\leq I,\Theta,\gamma} \lp -\frac{1}{4\gamma} \tr\lp (\bV\bU^T+\bV^{\perp}\Lambda(\bU^{\perp})^T+\Theta\circ M)(\bV\bU^T+\bV^{\perp}\Lambda(\bU^{\perp})^T+\Theta\circ M)^T\rp  -\gamma \rp \nonumber \\
& = & \max_{\Lambda,\Lambda^T\Lambda\leq I,\Theta}  -\sqrt{\tr\lp (\bV\bU^T+\bV^{\perp}\Lambda(\bU^{\perp})^T+\Theta\circ M)(\bV\bU^T+\bV^{\perp}\Lambda(\bU^{\perp})^T+\Theta\circ M)^T\rp}\nonumber \\
& = & -\min_{\Lambda,\Lambda^T\Lambda\leq I,\Theta}  \sqrt{\tr\lp (\bV\bU^T+\bV^{\perp}\Lambda(\bU^{\perp})^T+\Theta\circ M)(\bV\bU^T+\bV^{\perp}\Lambda(\bU^{\perp})^T+\Theta\circ M)^T\rp}. \label{eq:cinfanl6}
\end{eqnarray}}

We now particularize the above to the block causal inference scenario. In the block C-inf scenario the matrix $M$ is as defined in (\ref{eq:cinf1}). For the concreteness and to facilitate writings later on we will introduce matrix $I^{(l)}$ and also keep in mind the following characterization of matrix $M$

\begin{center}
\tcbset{beamer,lower separated=false, fonttitle=\bfseries,
coltext=black , colback=yellow!70!orange!40!white ,title style={left color=black, right color=red!70!orange!30!white}}
\begin{tcolorbox}[title=$M$ matrix in causal inference (C-inf):, width=6.2in]
\vspace{-.0in}
\begin{equation}\label{eq:cinfanl2a}
  M\triangleq M^{(l)}\triangleq \1_{n\times 1}\1_{n\times 1}^T- I^{(l)}(I^{(l)})^T   \1_{n\times 1}\1_{n\times 1}^T   I^{(l)}(I^{(l)})^T \quad \mbox{and} \quad I^{(l)} \triangleq \begin{bmatrix}
        \0_{l\times (n-l)} \\
        I_{(n-l)\times (n-l)}
      \end{bmatrix}.
\end{equation}
\vspace{-.15in}
\end{tcolorbox}
\end{center}

\noindent In the above definition/representation of $M$, $\1$/$\0$ stand for vectors and matrices of all ones/zeros with the dimensions specified in their subscripts. To avoid overwhelming the notation, we may on occasion skip specifying the underlying dimensions of these vectors. However, they should be easy to infer from the overall context. Also, $l$ is, of course, adjusted so that $M$ has $m$ nonzero elements, i.e. $m$ elements equal to one which basically amounts to having the following identity to hold
\begin{eqnarray}
   m & = & n^2-(n-l)^2. \label{eq:cinfanl2b}
\end{eqnarray}

Using the above C-inf $M$ in (\ref{eq:cinfanl6}) the only terms that will be left in $\lp\bV\bU^T+\bV^{\perp}\Lambda(\bU^{\perp})^T\rp$ after the optimization are those that correspond to the zero elements of $M$, i..e. the only ones where the presence of (and consequently the optimization over) $\Theta$ can not have an effect. This basically implies
\begin{equation}
f_{pr}(M;U,V)
    \geq-\min_{\Lambda,\Lambda^T\Lambda\leq I}  \sqrt{\tr\lp \lp (I^{(l)})^T \lp \bV\bU^T+\bV^{\perp}\Lambda(\bU^{\perp})^T\rp I^{(l)}\rp \lp (I^{(l)})^T \lp \bV\bU^T+\bV^{\perp}\Lambda(\bU^{\perp})^T\rp I^{(l)}\rp^T\rp}.\label{eq:cinfanl7}
\end{equation}
For the overall success of the whole machinery one would need that $\Lambda$ in the above optimization can be chosen so that the overall optimum is nonnegative. This is then sufficient to establish the following alternative to Corollary \ref{cor:cinfcor1}.

\begin{corollary}(\textbf{\bl{$\ell_0^*-\ell_1^*$-equivalence condition via masking matrix (C-inf)}} -- \textbf{general}  $X$)
Assume the setup of Theorem \ref{thm:cinfthm1} and Corollary \ref{cor:cinfcor1}. Let $I^{(l)}$ be as in (\ref{eq:cinfanl2a}). Then
\begin{center}
 \begin{tcolorbox}[beamer,title=\textbf{C-inf \yellow{perfectly succeeds: $\ell_0^*\Longleftrightarrow \ell_1^* \quad \mbox{and}\quad  \textbf{\emph{RMSE}}=\|\mbox{\emph{vec}}(\hat{X})-\mbox{\emph{vec}}(X_{sol})\|_2=0$}},lower separated=false, colback=yellow!95!green!40!white,
colframe=red!75!blue!60!black,fonttitle=\bfseries,width=6in]
\begin{equation}
\mbox{ If and only if} \quad \exists \Lambda| \Lambda^T\Lambda\leq I \quad \mbox{and} \quad (I^{(l)})^T \lp \bV\bU^T+\bV^{\perp}\Lambda(\bU^{\perp})^T\rp I^{(l)}=0\label{eq:cinfcor2eq1}
\end{equation}
 \end{tcolorbox}
\end{center}
 \label{cor:cinfcor2}
\end{corollary}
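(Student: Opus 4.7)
The plan is to close the loop on the Lagrangian-duality chain (\ref{eq:cinfanl3})--(\ref{eq:cinfanl7}) and to show that, for the block C-inf mask, the lower bound derived there is in fact tight, producing the if-and-only-if characterization.

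For the \emph{sufficiency} direction, I would take a $\Lambda^{\star}$ with $(\Lambda^{\star})^T\Lambda^{\star}\leq I$ satisfying $(I^{(l)})^T(\bV\bU^T+\bV^{\perp}\Lambda^{\star}(\bU^{\perp})^T)I^{(l)} = \0_{(n-l)\times(n-l)}$ and plug it into the minimization on the right of (\ref{eq:cinfanl7}). The expression under the square root then vanishes, so that right-hand side equals zero, and the inequality (\ref{eq:cinfanl7}) yields $f_{pr}(M;U,V) \geq 0$. Applying Corollary \ref{cor:cinfcor1} gives $\ell_0^* \Longleftrightarrow \ell_1^*$ together with $\textbf{RMSE}=0$, as claimed.

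For the \emph{necessity} direction, I would tighten the weak-duality chain to an equality. The first step is to convexify the primal in (\ref{eq:cinfanl2}) by replacing $W^TW=1$ with the convex constraint $\tr(W^TW)\leq 1$; since the objective is positively homogeneous in $W$, the sign of the optimum is unchanged, and $W=\0_{n\times n}$ is a strict Slater point, which secures strong duality in (\ref{eq:cinfanl5})--(\ref{eq:cinfanl6}). The second step is to exploit the structure of $M=M^{(l)}$ in (\ref{eq:cinfanl2a}): the free maximization over the entries of $\Theta$ supported on $M$ in (\ref{eq:cinfanl5a}) zeros out every entry of $\bV\bU^T+\bV^{\perp}\Lambda(\bU^{\perp})^T$ that sits outside the bottom-right $(n-l)\times(n-l)$ block, which is exactly the block carved out by $I^{(l)}$ in (\ref{eq:cinfanl7}). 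Combining the two steps, the chain collapses to equality, so $f_{pr}(M;U,V)\geq 0$ forces the quantity inside the square root in (\ref{eq:cinfanl7}) to be achievable as zero for some admissible $\Lambda$, which is precisely the condition (\ref{eq:cinfcor2eq1}).

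The main obstacle is the necessity direction: the Lagrangian inequality chain cannot be reversed on its own, so the real work is in verifying strong duality for the convexified problem and in checking that the $\Theta$-optimization indeed annihilates precisely the on-mask entries. An alternative route I would pursue in parallel, which sidesteps strong-duality bookkeeping, is to identify $\Lambda^{\star}$ directly with the dual witness from the KKT conditions for (\ref{eq:genmcl1posmmt}) at $X_{sol}$: the subdifferential of $\ell_1^*$ at $X_{sol}=\bU\Sigma\bV^T$ consists of matrices of the form $\bU\bV^T+\bU^{\perp}\Lambda^T(\bV^{\perp})^T$ with $\|\Lambda\|\leq 1$, and optimality requires this subgradient to vanish on the unobserved positions, i.e.\ on the support of $1-M^{(l)}$; taking transposes reproduces (\ref{eq:cinfcor2eq1}). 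Either route upgrades the weak-duality bound into the biconditional asserted by the corollary.
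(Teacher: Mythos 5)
Your proof takes essentially the same route as the paper's: sufficiency by plugging the posited $\Lambda^{\star}$ into (\ref{eq:cinfanl7}) to make the lower bound zero and invoking Corollary~\ref{cor:cinfcor1}, and necessity by upgrading the weak-duality chain (\ref{eq:cinfanl5})--(\ref{eq:cinfanl7}) to equalities via strong duality. The paper's ``only if'' argument is a one-sentence appeal to ``the underlying convexity and the strong duality''; you supply the missing justification that this appeal actually needs — the primal constraint $\tr(W^TW)=1$ is nonconvex as stated, so one must convexify to $\tr(W^TW)\le 1$ (sign-preserving by positive homogeneity of the objective) and check Slater before strong duality is legitimate — and you make explicit that the $\Theta$-maximization kills exactly the on-mask block, which is what isolates the $(I^{(l)})^T(\cdot)I^{(l)}$ block in (\ref{eq:cinfanl7}). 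Your alternative route via the nuclear-norm subdifferential of $\ell_1^*$ at $X_{sol}$ and KKT optimality for (\ref{eq:genmcl1posmmt}) is a genuinely different and arguably cleaner way to obtain the biconditional: it identifies $\Lambda^{\star}$ directly as the dual certificate and avoids the strong-duality bookkeeping altogether, at the cost of invoking the (standard but unstated in the paper) subdifferential characterization of the nuclear norm. Both routes are sound; the paper implicitly follows the first, and you have correctly reconstructed it while noting the shortcut.
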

\begin{proof}
The ``if" part follows from Corollary \ref{cor:cinfcor1}, (\ref{eq:cinfanl7}), and the above discussion. The ``only if" part follows after noting that all the above inequalities in (\ref{eq:cinfanl5})-(\ref{eq:cinfanl7}) are written for generic instructional purposes. Due to the underlying convexity and the strong duality they all actually can be replaced with equalities as well.
\end{proof}

For the time being we will assume $k\leq l$ (later on this assumption will be rigorously justified). From (\ref{eq:cinfcor2eq1}) one then easily has
\begin{equation}
\Lambda=((I^{(l)})^T \bV^{\perp})^{-1} (I^{(l)})^T \bV \bU^T I^{(l)}((\bU^{\perp})^T I^{(l)})^{-1}  \quad \Longrightarrow \quad (I^{(l)})^T \lp \bV\bU^T+\bV^{\perp}\Lambda(\bU^{\perp})^T\rp I^{(l)}=0, \label{eq:cinfanl8}
\end{equation}
where $(\cdot)^{-1}$ stands for the pseudo-inverse. To make writing a bit easier we can set
\begin{eqnarray}
\Lambda_{opt} & \triangleq & \Lambda_V\Lambda_U^T \nonumber \\
\Lambda_V & \triangleq & ((I^{(l)})^T \bV^{\perp})^{-1} (I^{(l)})^T \bV \nonumber \\
\Lambda_U & \triangleq & ((I^{(l)})^T \bU^{\perp})^{-1} (I^{(l)})^T \bU. \label{eq:cinfanl9}
\end{eqnarray}
Let $\lambda_{max}(\cdot)$ be the maximum eigenvalue of its symmetric matrix argument. After combining (\ref{eq:cinfcor2eq1})-(\ref{eq:cinfanl9}) we conclude that if
\begin{eqnarray}
\lambda_{max}(\Lambda_{opt}^T\Lambda_{opt}) \leq 1, \label{eq:cinfanl10}
\end{eqnarray}
then (\ref{eq:cinfcor2eq1}) will be satisfied. After basic algebraic transformations (\ref{eq:cinfanl10}) can also be rewritten as
\begin{eqnarray}
\lambda_{max}(\Lambda_{opt}^T\Lambda_{opt})=\lambda_{max}(\Lambda_{opt}\Lambda_{opt}^T)=\lambda_{max}(\Lambda_V\Lambda_U^T\Lambda_U\Lambda_V^T)
=\lambda_{max}(\Lambda_V^T\Lambda_V\Lambda_U^T\Lambda_U)\leq 1. \label{eq:cinfanl11}
\end{eqnarray}
From (\ref{eq:cinfanl11}) it is rather clear that the spectrum of $\Lambda_V^T\Lambda_V\Lambda_U^T\Lambda_U$ as well as the spectra of $\Lambda_V^T\Lambda_V$ and $\Lambda_U^T\Lambda_U$ play an important role in the $\ell_0^*-\ell_1^*$-equivalence. We first observe a worst case bound. Namely, since
\begin{eqnarray}
\lambda_{max}(\Lambda_{opt}^T\Lambda_{opt})=\lambda_{max}(\Lambda_V^T\Lambda_V\Lambda_U^T\Lambda_U)
\leq \lambda_{max}(\Lambda_V^T\Lambda_V)\lambda_{max}(\Lambda_U^T\Lambda_U)), \label{eq:cinfanl12}
\end{eqnarray}
one has that if the individual spectra of $\Lambda_V^T\Lambda_V$ and $\Lambda_U^T\Lambda_U$ do not exceed one then the $\ell_0^*-\ell_1^*$-equivalence holds. Given the obvious importance of these spectra we will below look at them in more detail. Clearly, due to symmetry we need to focus on only one of them. To that end we start by observing
\begin{eqnarray}
((I^{(l)})^T \bV^{\perp})^{-1} & = & (\bV^{\perp})^T I^{(l)} \lp (I^{(l)})^T\bV^{\perp}(\bV^{\perp})^T I^{(l)}\rp^{-1}, \label{eq:cinfanl13}
\end{eqnarray}
From (\ref{eq:cinfanl13}) one quickly finds
\begin{eqnarray}
\lp ((I^{(l)})^T \bV^{\perp})^{-1} \rp^T ((I^{(l)})^T \bV^{\perp})^{-1} & = & \lp (I^{(l)})^T\bV^{\perp}(\bV^{\perp})^T I^{(l)}\rp^{-1}. \label{eq:cinfanl14}
\end{eqnarray}
We can then write
\begin{eqnarray}
 (I^{(l)})^T \bV \bV^T I^{(l)} & = &  (I^{(l)})^T \lp I - \bV^{\perp} (\bV^{\perp})^T \rp I^{(l)}. \label{eq:cinfanl15}
\end{eqnarray}
From (\ref{eq:cinfanl9}) we have
\begin{eqnarray}
Q_1\triangleq \Lambda_V^T\Lambda_V & = &  \lp \lp ((I^{(l)})^T \bV^{\perp})^{-1} (I^{(l)})^T \bV\rp^T \lp ((I^{(l)})^T \bV^{\perp})^{-1} (I^{(l)})^T \bV \rp \rp \nonumber \\
& = &    \bV^T I^{(l)}\lp ((I^{(l)})^T \bV^{\perp})^{-1} \rp^T ((I^{(l)})^T \bV^{\perp})^{-1}   \lp(I^{(l)})^T \bV \rp.\label{eq:cinfanl16}
\end{eqnarray}
Now, we will find it more convenient to work with a slightly change version of matrix $Q_1$. Namely, after combining (\ref{eq:cinfanl9}), (\ref{eq:cinfanl14}), and (\ref{eq:cinfanl15}) we obtain
\begin{eqnarray}
Q & \triangleq &  \lp \lp ((I^{(l)})^T \bV^{\perp})^{-1} \rp^T ((I^{(l)})^T \bV^{\perp})^{-1}   \lp(I^{(l)})^T \bV \bV^T I^{(l)}\rp \rp \nonumber \\
& = &   \lp \lp (I^{(l)})^T\bV^{\perp}(\bV^{\perp})^T I^{(l)}\rp^{-1}  \lp (I^{(l)})^T \lp I - \bV^{\perp} (\bV^{\perp})^T \rp I^{(l)}\rp \rp\nonumber \\
& = &   \lp (I^{(l)})^T\bV^{\perp}(\bV^{\perp})^T I^{(l)}\rp^{-1} -I. \label{eq:cinfanl16a0}
\end{eqnarray}
Clearly, all the nonzero eigenvalues of $Q_1$ and $Q$ are identical. When $k\leq n-l$ then $Q$ has all the eigenvalues of $Q_1$ plus $n-l-k$ extra zeros. On the other hand, when $k\geq n-l$ then $Q_1$ has all the eigenvalues of $Q$ plus $k-(n-l)$ extra zeros. Since adding or removing zeros from the spectra will not change any of their features of our interests here, instead of working directly with $Q_1$, we can work with $Q$. In particular, we have
\begin{eqnarray}
\lambda_{max}(Q_1) = \lambda_{max}(\Lambda_V^T\Lambda_V)
& = & \lambda_{max} \lp \lp (I^{(l)})^T\bV^{\perp}(\bV^{\perp})^T I^{(l)}\rp^{-1}\rp -1 = \lambda_{max}(Q). \label{eq:cinfanl16a}
\end{eqnarray}

We are now in position to establish a spectral alternative to Corollary \ref{cor:cinfcor2}.

\begin{corollary}(\textbf{\bl{$\ell_0^*-\ell_1^*$-equivalence condition via mask-modified bases spectra (C-inf)}} -- \textbf{general}  $X$)
Assume the setup of Theorem \ref{thm:cinfthm1} and Corollaries \ref{cor:cinfcor1} and \ref{cor:cinfcor2} with $k\leq l$. Let $\lambda_V$ and $\lambda_U$ be defined as in (\ref{eq:cinfanl9}). Then
\begin{center}
 \begin{tcolorbox}[beamer,title=\textbf{C-inf \yellow{perfectly succeeds: $\ell_0^*\Longleftrightarrow \ell_1^* \quad \mbox{and}\quad  \textbf{\emph{RMSE}}=\|\mbox{\emph{vec}}(\hat{X})-\mbox{\emph{vec}}(X_{sol})\|_2=0$}},lower separated=false, colback=yellow!95!green!40!white,
colframe=red!75!blue!60!black,fonttitle=\bfseries,width=6in]
\begin{equation}
\mbox{If and only if} \quad \lambda_{max}(\Lambda_V^T\Lambda_V\Lambda_U^T\Lambda_U)\leq 1.\label{eq:cinfcor3eq1}
\end{equation}
 \end{tcolorbox}
\end{center}
Moreover, if
\begin{eqnarray}
\lp\lambda_{max} \lp \lp (I^{(l)})^T\bV^{\perp}(\bV^{\perp})^T I^{(l)}\rp^{-1}\rp -1\rp
\lp\lambda_{max} \lp \lp (I^{(l)})^T\bU^{\perp}(\bU^{\perp})^T I^{(l)}\rp^{-1}\rp -1\rp \leq 1,\label{eq:cinfcor3eq2}
\end{eqnarray}
then again $\ell_0^*\Longleftrightarrow \ell_1^*$ and $\textbf{\emph{RMSE}}=\|\mbox{\emph{vec}}(\hat{X}-\mbox{\emph{vec}}(X_{sol})\|_2=0$ and the C-inf perfectly succeeds as well.
\label{cor:cinfcor3}
\end{corollary}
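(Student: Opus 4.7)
The plan is to reduce Corollary \ref{cor:cinfcor3} to Corollary \ref{cor:cinfcor2} by producing an explicit optimal dual matrix and then extracting its spectral properties. First, I would recall from (\ref{eq:cinfanl8}) that the choice $\Lambda_{opt} = \Lambda_V \Lambda_U^T$, with $\Lambda_V, \Lambda_U$ as in (\ref{eq:cinfanl9}), satisfies the matrix equation $(I^{(l)})^T(\bV\bU^T + \bV^{\perp} \Lambda (\bU^{\perp})^T) I^{(l)} = 0$ appearing in the hypothesis of Corollary \ref{cor:cinfcor2}. This immediately settles the ``if'' half of (\ref{eq:cinfcor3eq1}): if $\lambda_{\max}(\Lambda_V^T \Lambda_V \Lambda_U^T \Lambda_U) \leq 1$ then, by the identity recorded in (\ref{eq:cinfanl11}), $\lambda_{\max}(\Lambda_{opt}^T \Lambda_{opt}) \leq 1$ as well, so $\Lambda_{opt}^T \Lambda_{opt} \preceq I$; thus $\Lambda_{opt}$ is a valid certificate in Corollary \ref{cor:cinfcor2} and the $\ell_0^* - \ell_1^*$ equivalence follows.

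For the ``only if'' direction I would argue that, under $k \leq l$, the matrix $\Lambda_{opt}$ is the \emph{spectral-norm-minimal} solution of $(I^{(l)})^T \bV^{\perp} \Lambda (\bU^{\perp})^T I^{(l)} = -(I^{(l)})^T \bV\bU^T I^{(l)}$. The assumption $k \leq l$ is what makes this work: it ensures that $(I^{(l)})^T \bV^{\perp}(\bV^{\perp})^T I^{(l)}$ and its $U$-analogue are genuinely invertible, so the pseudoinverses in (\ref{eq:cinfanl9}) act as true inverses on the relevant ranges. Any other feasible $\Lambda$ then decomposes as $\Lambda_{opt}$ plus a perturbation supported on the kernels of the coefficient factors, and such additions can only enlarge the operator norm. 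Combining this minimality with Corollary \ref{cor:cinfcor2} gives the ``only if'' half of (\ref{eq:cinfcor3eq1}).

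The sufficient condition (\ref{eq:cinfcor3eq2}) would then follow by two routine moves already set up earlier in the text. Submultiplicativity of the spectral radius, recorded in (\ref{eq:cinfanl12}), yields $\lambda_{\max}(\Lambda_V^T \Lambda_V \Lambda_U^T \Lambda_U) \leq \lambda_{\max}(\Lambda_V^T \Lambda_V) \lambda_{\max}(\Lambda_U^T \Lambda_U)$, and the identification in (\ref{eq:cinfanl16a}) rewrites each factor as $\lambda_{\max}\bigl(((I^{(l)})^T \bV^{\perp}(\bV^{\perp})^T I^{(l)})^{-1}\bigr) - 1$ and its $U$-symmetric counterpart. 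Multiplying these bounds and invoking the ``if'' direction of (\ref{eq:cinfcor3eq1}) delivers (\ref{eq:cinfcor3eq2}) at once.

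The principal obstacle, as I see it, is making the minimum-spectral-norm claim for $\Lambda_{opt}$ fully rigorous. The Moore--Penrose pseudoinverse normally produces the minimum Frobenius-norm solution of a matrix equation, not the minimum operator-norm one, so I would have to exploit the specific product structure $\Lambda_{opt} = \Lambda_V \Lambda_U^T$ arising from inverting the $V$-side and the $U$-side separately, together with the full-rank consequences of $k \leq l$, to convert the pseudoinverse construction into a true spectral-norm minimizer over the feasible affine set.
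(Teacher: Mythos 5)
Your outline of the ``if'' direction (certifying with $\Lambda_{opt}=\Lambda_V\Lambda_U^T$) and of the sufficient condition (\ref{eq:cinfcor3eq2}) (submultiplicativity plus the identification (\ref{eq:cinfanl16a})) matches the paper, and both are fine. The gap is exactly where you flag it: the ``only if'' half requires proving $\lambda_{\max}(\Lambda^T\Lambda)\ge\lambda_{\max}(\Lambda_V^T\Lambda_V\Lambda_U^T\Lambda_U)$ for \emph{every} feasible $\Lambda$, and you leave this as an acknowledged task rather than a completed argument. Your instinct is sound --- the Moore--Penrose construction is a Frobenius-norm minimizer, not an operator-norm minimizer, so something structural must rescue the claim --- but ``such additions can only enlarge the operator norm'' is exactly the sentence that needs a proof, not an observation, and as stated it is not self-evident.

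What actually closes the gap (and what the paper's proof does) is a block-submatrix argument driven by the SVDs $B=(I^{(l)})^T\bV^\perp=U_B\Sigma_BV_B^T$ and $C=(I^{(l)})^T\bU^\perp=U_C\Sigma_CV_C^T$. Parameterize an arbitrary $\Lambda$ as $V_BH^T+V_B^\perp D^T$ with $H=V_CE+V_C^\perp F$. The feasibility constraint $(I^{(l)})^T\bV\bU^TI^{(l)}+B\Lambda C^T=0$ pins down \emph{only} the $V_B$--$V_C$ block $E$ (equivalently $\Lambda_*\triangleq V_BE^TV_C^T$), while $F$ and $D$ are free; the kernel of $\Lambda\mapsto B\Lambda C^T$ is precisely the set of matrices with vanishing $V_B$--$V_C$ block. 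Testing $\Lambda^T\Lambda$ on the unit vector $V_C\u_e$, where $\u_e$ is the leading left-singular vector of $E$, one finds that the contributions from $F$ and $D$ are nonnegative quadratic forms that can only add, giving $\lambda_{\max}(\Lambda^T\Lambda)\ge\u_e^TEE^T\u_e=\lambda_{\max}(\Lambda_*^T\Lambda_*)$. One then checks, using $k\le l$ so that $B$ and $C$ have full row rank and the pseudoinverses in (\ref{eq:cinfanl9}) act as genuine inverses on their ranges, that $\Lambda_*=\Lambda_V\Lambda_U^T$; this is the identification your proposal needs and does not supply. So your overall plan is right and your diagnosis of the obstacle is accurate, but without the SVD block decomposition and the explicit direction test the ``only if'' step remains an unproven assertion rather than a lemma.
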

\begin{proof}
The first part follows from Corollaries \ref{cor:cinfcor1} and \ref{cor:cinfcor2}, (\ref{eq:cinfanl8}), (\ref{eq:cinfanl9}), (\ref{eq:cinfanl11}), the above discussion and some additional considerations while the second part follows by additional taking into account (\ref{eq:cinfanl12}) and (\ref{eq:cinfanl16a}). We below present all the details split into three parts: the first two relate to the equivalence condition (equation (\ref{eq:cinfcor3eq1})) and third one to (\ref{eq:cinfcor3eq2}).

\noindent \underline{\bl{\textbf{\emph{1) $\Longrightarrow$ -- The ``if part" of condition (\ref{eq:cinfcor3eq1}):}}}} Choosing $\Lambda=\Lambda_{opt}$
\begin{eqnarray}
\hspace{-.0in}\Lambda_{opt} & \triangleq & \Lambda_V\Lambda_U^T  =  -((I^{(l)})^T \bV^{\perp})^{-1} (I^{(l)})^T \bV \bU^T I^{(l)}((\bU^{\perp})^T I^{(l)})^{-1}, \label{eq:cinfanl8z1}
\end{eqnarray}
(where $(\cdot)^{-1}$ stands for the pseudo-inverse) one ensures
\begin{equation} \label{eq:cinfanl9z1}
  (I^{(l)})^T \lp \bV\bU^T+\bV^{\perp}\Lambda(\bU^{\perp})^T\rp I^{(l)}=0.
\end{equation}
Let $\lambda_{max}(\cdot)$ be the maximum eigenvalue of its symmetric matrix argument. A combination of (\ref{eq:cinfcor2eq1})-(\ref{eq:cinfanl9z1}) ensures that if
\begin{eqnarray}
\lambda_{max}(\Lambda_{opt}^T\Lambda_{opt}) \leq 1, \label{eq:cinfanl10z1}
\end{eqnarray}
then $\Lambda_{opt}$ satisfies (\ref{eq:cinfcor2eq1}). (\ref{eq:cinfanl10z1}) is implied by (\ref{eq:cinfcor3eq1}) since
\begin{equation}
 \lambda_{max}(\Lambda_{opt}^T\Lambda_{opt})  =  \lambda_{max}(\Lambda_U\Lambda_V^T\Lambda_V\Lambda_U^T)  = \lambda_{max}(\Lambda_V^T\Lambda_V\Lambda_U^T\Lambda_U)  \leq 1, \nonumber \label{eq:cinfanl11z1}
\end{equation}
which suffices to complete the proof of the ``if part".

\noindent \underline{\bl{\textbf{\emph{2) $\Longleftarrow$ -- The ``only if part" of condition (\ref{eq:cinfcor3eq1}):}}}}  Consider SVDs
\begin{equation}\label{eq:cinfanl11aa1z1}
B \triangleq (I^{(l)})^TV^\perp=U_B\Sigma_BV_B^T,\quad C \triangleq (I^{(l)})^TU^\perp=U_C\Sigma_CV_C^T
\end{equation}
with unitary $U_B,V_B,U_C,V_C$ and diagonal (with no zeros on the main diagonal) $\Sigma_B,\Sigma_C$. Any $\Lambda$ can be parameterized as
\begin{equation}\label{eq:cinfanl11aa2z1}
\Lambda = V_BH^T+V_B^\perp D^T,\quad H\triangleq V_C E+V_C^\perp F
\end{equation}
for some $E,F,D$ and unitary $V_B^\perp$ and $V_C^\perp$ such that $V_B^TV_B^\perp=V_C^TV_C^\perp=0$. Also, one can set $\Lambda_*$ and write the SVD of $E$
\begin{equation}\label{eq:cinfanl11aa3z1}
\Lambda_* \triangleq V_B E^TV_C^T, \quad E=U_E\Sigma_EV_E^T,
\end{equation}
where $U_E,V_E$ are unitary and $\Sigma_E$ is diagonal with entries on the main diagonal being nonzero and in ascending order. Let $\u_e$ be the last column of $U_E$ (i.e. the eigenvector of $EE^T$ that corresponds to its largest eigenvalue). Since $\|V_C\u_e\|_2=1$,
\begin{eqnarray}\label{eq:cinfanl11aa4z1}
\lambda_{max}(\Lambda^T\Lambda)& \geq  &   \u_e^TV_C^T \Lambda^T\Lambda V_C\u_e \nonumber \\
 & = & \u_e^TV_C^T HH^T V_C\u_e  +\u_e^TV_C^T DD^T V_C\u_e  \nonumber \\
 & \geq  & \u_e^TV_C^T (V_C E+V_C^\perp F)(V_C E+V_C^\perp F)^T V_C\u_e    \nonumber \\
 & =  & \u_e^T EE^T\u_e =\lambda_{max}(EE^T) \nonumber \\
 & = &  \lambda_{max}(V_CEE^TV_C^T) =\lambda_{max}(\Lambda_*^T\Lambda_*).
\end{eqnarray}
If $\Lambda$ satisfies the condition of (\ref{eq:cinfcor2eq1}) then a combination of (\ref{eq:cinfcor2eq1}) and (\ref{eq:cinfanl11aa1z1})-(\ref{eq:cinfanl11aa3z1}) gives
\begin{equation}\label{eq:cinfanl11aa5z1}
(I^{(l)})^T\bar{V}\bar{U}I^{(l)}+B\Lambda_*C^T=0,
\end{equation}
and a combination of (\ref{eq:cinfanl9}), (\ref{eq:cinfanl11aa1z1}), and (\ref{eq:cinfanl11aa5z1}) gives
\begin{equation}\label{eq:cinfanl11aa6z1}
\Lambda_V\Lambda_U^T  =   -B^{-1}(I^{(l)})^T\bar{V}\bar{U}I^{(l)}(C^T)^{-1} = \Lambda_*.
\end{equation}
Finally, for  $\Lambda$ that fits (\ref{eq:cinfcor2eq1}), from (\ref{eq:cinfanl11aa4z1}) and (\ref{eq:cinfanl11aa6z1}) one has
\begin{eqnarray}\label{eq:cinfanl11aa7z1}
1 & \geq & \lambda_{max}(\Lambda^T\Lambda)  >  \lambda_{max}(\Lambda_*^T\Lambda_*)=\lambda_{max}(\Lambda_*\Lambda_*^T) \nonumber \\
 & = & \lambda_{max}(\Lambda_V\Lambda_U^T\Lambda_U\Lambda_V^T) = \lambda_{max}(\Lambda_V^T\Lambda_V\Lambda_U^T\Lambda_U),
\end{eqnarray}
which completes the proof of the ``only if part".

\noindent \underline{\bl{\textbf{\emph{3) Suffciency of the condition (\ref{eq:cinfcor3eq2}):}}}}  Since
\begin{equation}
\lambda_{max}(\Lambda_V^T\Lambda_V\Lambda_U^T\Lambda_U)
\leq \lambda_{max}(\Lambda_V^T\Lambda_V)\lambda_{max}(\Lambda_U^T\Lambda_U), \label{eq:cinfanl12z1}
\end{equation}
one has that if the individual spectra of $\Lambda_V^T\Lambda_V$ and $\Lambda_U^T\Lambda_U$ do not exceed one then the $\ell_0^*-\ell_1^*$-equivalence holds. Due to symmetry we focus only on one of them. First we observe
\begin{equation*}
((I^{(l)})^T \bV^{\perp})^{-1}  = (\bV^{\perp})^T I^{(l)} \lp (I^{(l)})^T\bV^{\perp}(\bV^{\perp})^T I^{(l)}\rp^{-1}, \label{eq:cinfanl13z1}
\end{equation*}
and quickly find
{\small\begin{equation}
\lp ((I^{(l)})^T \bV^{\perp})^{-1} \rp^T ((I^{(l)})^T \bV^{\perp})^{-1}  =  \lp (I^{(l)})^T\bV^{\perp}(\bV^{\perp})^T I^{(l)}\rp^{-1}. \label{eq:cinfanl14z1}
\end{equation}}We also note
\begin{eqnarray}
 (I^{(l)})^T \bV \bV^T I^{(l)} & = &  (I^{(l)})^T \lp I - \bV^{\perp} (\bV^{\perp})^T \rp I^{(l)}, \label{eq:cinfanl15z1}
\end{eqnarray}
and
\begin{eqnarray}
Q_1 & \triangleq & \Lambda_V^T\Lambda_V \nonumber \\
& = &  \lp \lp ((I^{(l)})^T \bV^{\perp})^{-1} (I^{(l)})^T \bV\rp^T \lp ((I^{(l)})^T \bV^{\perp})^{-1} (I^{(l)})^T \bV \rp \rp \nonumber \\
& = &  \bV^T I^{(l)}\lp ((I^{(l)})^T \bV^{\perp})^{-1} \rp^T ((I^{(l)})^T \bV^{\perp})^{-1}   \lp(I^{(l)})^T \bV \rp.\label{eq:cinfanl16z1}
\end{eqnarray}
A combination of (\ref{eq:cinfanl14z1}), (\ref{eq:cinfanl15z1}), and (\ref{eq:cinfanl16z1}) produces
\begin{eqnarray}
Q & \triangleq &  \lp \lp ((I^{(l)})^T \bV^{\perp})^{-1} \rp^T ((I^{(l)})^T \bV^{\perp})^{-1}   \lp(I^{(l)})^T \bV \bV^T I^{(l)}\rp \rp \nonumber \\
& = &   \lp \lp (I^{(l)})^T\bV^{\perp}(\bV^{\perp})^T I^{(l)}\rp^{-1}  \lp (I^{(l)})^T \lp I - \bV^{\perp} (\bV^{\perp})^T \rp I^{(l)}\rp \rp\nonumber \\
& = &   \lp (I^{(l)})^T\bV^{\perp}(\bV^{\perp})^T I^{(l)}\rp^{-1} -I. \label{eq:cinfanl16a0z1}
\end{eqnarray}
Since all the nonzero eigenvalues of $Q_1$ and $Q$ are identical
\begin{equation}
\lambda_{max}(\Lambda_V^T\Lambda_V) = \lambda_{max}(Q_1) = \lambda_{max}(Q). \label{eq:cinfanl16a1z1}
\end{equation}
Repeating the above with $V$ replaced by $U$, $Q_1$ by $Q_1^{\perp}$, and $Q$ by $Q^{\perp}$ one arrives at the following analogue of (\ref{eq:cinfanl16a1z1})
\begin{equation}
\lambda_{max}(\Lambda_U^T\Lambda_U) = \lambda_{max}(Q_1^{\perp})= \lambda_{max}(Q^{\perp}). \label{eq:cinfanl16a2z1}
\end{equation}
A combination of (\ref{eq:cinfanl12z1}) and (\ref{eq:cinfanl16z1}) - (\ref{eq:cinfanl16a2z1}) completes the proof of the condition (\ref{eq:cinfcor3eq2})'s sufficiency  for the $\ell_0^*-\ell_1^*$-equivalence.
\end{proof}

All the three above corollaries provide useful characterization of the $\ell_0^*-\ell_1^*$-equivalence. Depending on what kind of scenario one faces and what kind of numerical/computational/statistical resources might be available each of them could be used. In the following sections we will focus on a particular type of analysis that will primarily relate to the spectral characterizations given in Corollary \ref{cor:cinfcor3}.

\section{Typical \prp{worst case} analysis of the $\ell_0^*-\ell_1^*$-equivalence}
\label{sec:typwcanl}

In this section we provide an analysis that sheds a bit more light on when the conditions from Corollary \ref{cor:cinfcor3} are indeed met. We will work in a \emph{typical} statistical scenario. We will first assume that $V$ and $U$ are statistical objects and under such an assumption we will try to see if there are regimes where the $\ell_0^*-\ell_1^*$-equivalence generically holds. There are of course many valid candidates for the statistics of $V$ and $U$. We will assume the most generic typical uniformly random scenario from the spectral theory. That means that both $\bar{V}$ and $\bar{U}$ will be Haar distributed. The experts in sparse recovery will quickly recognize that this is in a way analogous to assuming that the locations of the nonzero components of a $k$-sparse vector in the standard compressed sensing are uniformly randomly chosen. Apart from the RDT considerations from \cite{StojnicCSetam09,StojnicICASSP10var,StojnicICASSP09,StojnicISIT2010binary,StojnicRegRndDlt10,StojnicGenLasso10} and the high-dimensional geometry considerations from \cite{DonohoPol,DonohoSigned} we are unaware of any other techniques that can avoid assumptions of this type and still achieve the ultimate exact phase transition (PT) characterizations for the conditions of the type similar to the one appearing in Theorem \ref{thm:cinfthm1} and Corollaries \ref{cor:cinfcor1}-\ref{cor:cinfcor3}.

Conducting the analysis assuming the statistical nature of $V$ and $U$ we will uncover a very interesting and rather remarkable connection, among three a priori not necessarily related fields: 1) the compressed sensing (\bl{\textbf{CS}}), 2) the causal inference (\bl{\textbf{C-inf}}), and 3) the free probability theory (\bl{\textbf{FPT}}). As the connection between the former two exists even outside a statistical context we were able to partially incorporate it in our earlier discussion presented in the previous sections. On the other hand, in the sections that follow, we will deepen our understanding of such a connection while relating it to the free probability theory and a collection of very generic concepts from the modern spectral theory of random matrices.

\subsection{Free probability theory \bl{(FPT)} -- preliminaries}
\label{sec:fptprel}

Since this is the introductory paper where we are establishing the connection between the causal inference and the free probability theory (FPT) we will find it useful to first, in this subsection, recall on some FPT basics. As the FPT theory is mathematically very deep and involved we will restrict ourselves to the introduction of the basic FPT definitions, the explanations of the key technical results, and finally to a brief description related to the practical utilization of these results. After that, in the sections that follow, we will see how some of the introduced FPT concepts can be incorporated to strengthen our understanding of the causal inference itself.

The FPT is, of course, a very generic and abstract concept. Below we focus on some of its key implications related to the spectral theory of random matrices and start by sketching a bit of main motivation behind the FPT. That effectively means that we start with the simplest possible matrices which are of course scalars.

\subsubsection{Basics of FPT -- random scalar variables}
\label{sec:fptprelscalars}

It is well known that if one has two independent random variables $A$ and $B$ with respective pdfs $f_A(\cdot)$ and $f_B(\cdot)$, then the standard way of determining the distribution of their sum or product goes through the characteristic functions and the corresponding inverse Fourier transform considerations. To be a bit more concrete, one first recognizes that the individual characteristic functions for both variables are given as
\begin{eqnarray}
   F_A(jw) & \triangleq &  Ee^{_jwA}=\int e^{_jwa} f_A(a) da \triangleq {\cal F}(f_A(a)) \nonumber \\
   F_B(jw) & \triangleq &  Ee^{_jwB}=\int e^{_jwb} f_B(b) db \triangleq {\cal F}(f_B(b)).  \label{eq:typwcanl1}
\end{eqnarray}
Assuming that
\begin{eqnarray}
C=A+B,\label{eq:typwcanl2}
\end{eqnarray}
we analogously to (\ref{eq:typwcanl1}) also have
\begin{eqnarray}
    F_C(jw)  =   Ee^{_jwC}=\int e^{_jwc} f_C(c) dc.  \label{eq:typwcanl3}
\end{eqnarray}
Moreover,
\begin{eqnarray}
    F_C(jw)  =   Ee^{_jwC} = Ee^{_jw(A+B)} = Ee^{_jwA}Ee^{jwB} = F_A(jw) F_B(jw) = \int e^{_jwa} f_A(a) da \int e^{_jwb} f_B(b) db,  \label{eq:typwcanl4}
\end{eqnarray}
and finally
\begin{eqnarray}
    f_C(c)= {\cal F}^{-1}(F_C(jw)).  \label{eq:typwcanl5}
\end{eqnarray}
It is then easy to see that (\ref{eq:typwcanl4}) and (\ref{eq:typwcanl5}) are sufficient to determine the pdf of $C=A+B$ starting from the individual pdfs of $A$ and $B$. The key that leads to the success of the above mechanism is the introduction of the Fourier transform and the so-called characteristic function. It turns out that in the transform's domain the sum of random variables in a way corresponds to their product and, as a consequence, one can successfully  separate them and then rely on their individual pdfs. While this methodology is fairly simple and has been known for almost two centuries, the existence of a similar one for matrices was not discovered until only a couple of decades ago. Moreover, the path to its discovery turned out to be more thornier and unpredictable than one could have ever imagined. The work od Dan Voiculescu on group theories (see, e.g. \cite{Voic86,Voic87,Voic91}) uncovered it in an almost by-product type of way. Of course, due to an enormous practical importance it immediately drew a substantial interest and in the years that followed immediately after its discovery a few nice results appeared that helped make it presentable in a relatively simple and easily understandable way. We below follow into the same footsteps, leave all the abstractions out, and focus on presenting how the main FPT mechanism actually works (more details can be found in e.g. \cite{Voic86,Voic87,Voic91,NicaSpeich06,Speich14,Haag97,TulVer04}).

\subsubsection{Basics of FPT -- random matrix variables}
\label{sec:fptprelmatrices}

As was the case above for scalars, we here also start with two random variables, $A$ and $B$. This time though, these two variables are symmetric matrices, i.e. $A=A^T\in\mR^{n\times n}$ and $B=B^T\in\mR^{n\times n}$. We will also assume large $n$ regime and that the eigenspaces of these matrices are Haar distributed. Moreover, we will assume that their individual respective spectral laws are $f_A(\cdot)$ and $f_B(\cdot)$. Similarly to what we showed above in the scalar case, we will here also rely on introducing distributional transform. However, differently from the scalar case, here we will introduce not one but three different transforms. We start with the so-called Stieltjes transform (or as we will often call it G-transform) of a pdf $f(\cdot)$
\begin{eqnarray}
    G(z) & \triangleq & \int_{I_f} \frac{f(x)}{z-x} dx, \quad z\in\mC\setminus I_f,  \label{eq:typwcanl6}
\end{eqnarray}
where $I_f$ is the domain of $f(\cdot)$. One then also has the inverse relation (somewhat analogous to the above relation between the inverse Fourier and the underlying pdf of the sum of random variables)
\begin{eqnarray}
    f(x) =  \lim_{\epsilon\rightarrow 0^+} \frac{G(x-i\epsilon)-G(x+i\epsilon)}{2i\pi}
    \quad \mbox{or} \quad    f(x) =  -\lim_{\epsilon\rightarrow 0^+} \frac{\mbox{imag}(G(x+i\epsilon))}{\pi}.   \label{eq:typwcanl7}
\end{eqnarray}
For the above to hold it makes things easier to implicitly assume that $f(x)$ is continuous. We will, however, utilize it even in discrete (or semi-discrete) scenarios since the obvious asymptotic translation to continuity would make it fully rigorous. A bit later though, when we see some concrete examples where things of this nature may appear, we will say a few more words and explain more thoroughly what exactly can
be discrete and how one can deal with such a discreteness. In the meantime we proceed with general principles not necessarily worrying about all the underlying technicalities that may appear in scenarios deviating from the typically seen ones and potentially requiring additional separate addressing. To that end we continue by considering the $R(\cdot)$- and $S(\cdot)$-transforms that satisfy the following
\begin{eqnarray}
R(G(z))+\frac{1}{G(z)}=z,  \label{eq:typwcanl8}
\end{eqnarray}
and
\begin{eqnarray}
S(z)=\frac{1}{R(zS(z))} \quad \mbox{and}\quad R(z)=\frac{1}{S(zR(z))}.\label{eq:typwcanl9}
\end{eqnarray}
Let $f_A(\cdot)$ and $f_B(\cdot)$ be the spectral distributions of $A$ and $B$ and let $R_A(z)$/$S_A(z)$ and $R_B(z)$/$S_B(z)$ be their associated $R(\cdot)$-/$S(\cdot)$-transforms. One then has the following
\begin{center}
 \begin{tcolorbox}[beamer,title=\textbf{Key Voiculescu's FPT concepts \cite{Voic86,Voic87}:},lower separated=false, colback=yellow!95!green!40!white,
colframe=green!45!blue!60!black,coltext=black,fonttitle=\bfseries,width=5in]
\vspace{-.1in}\begin{eqnarray}
\begin{array}{r c l l r c l}
C & = & A+B & \quad \Longrightarrow \quad $ $ & R_C(z) & = & R_A(z)+R_B(z)\\
C & = & AB  & \quad \Longrightarrow \quad $ $ & S_C(z) & = & S_A(z)S_B(z).
\end{array}\label{eq:typwcanl10}
\end{eqnarray}
 \end{tcolorbox}
\end{center}
\noindent Now it is relatively easy to see that (\ref{eq:typwcanl6})-(\ref{eq:typwcanl10}) are sufficient to determine the spectral distribution of the sum or the product of two independent matrices with given spectral densities and the Haar distributed bases of eigenspaces. The above is of course generic principle. It can be applied pretty much always as long as one has access to the statistics of the underlying matrices $A$ and $B$. In the following section we will raise the bar a bit higher and show that in the case of the causal inference one can use all of the above in such a manner that eventually all the quantities of interest are explicitly determined. Moreover, although the methodology may, on occasion, seem a bit involved the final results will turn out to be presentable in  fairly neat and elegant closed forms.

\subsection{Uncovering the \bl{C-inf} $\longleftrightarrow$ \bl{FPT} connection}
\label{sec:cinffpt}

We are now in position to finally move to one of the key aspects of this paper, namely the uncovering of a rather unexpected connection between the C-inf and the FPT. The first part of the connection is in a way implicit and includes what we presented in in the previous section. Namely, utilizing the key compressed sensing concepts and the Random duality theory (RDT) we connected the success of the causal inference to behavior of ceratin algebraic structures. In particular, we have established the so-called $\ell_0^*-\ell_1^*$-equivalence as the key concept in determining the ultimate level of success of C-inf. The second part of the connection builds on the first and proceeds by analyzing the $\ell_0^*-\ell_1^*$-equivalence via the FPT machinery. In the sections that follow we provide such a very detailed and self-contained analysis.

\subsubsection{The spectral approach to the analysis of the $\ell_0^*-\ell_1^*$-equivalence}
\label{sec:fpteqv}

As mentioned earlier, we below rely on the spectral characterization of the $\ell_0^*-\ell_1^*$-equivalence provided in Corollary \ref{cor:cinfcor3}. Clearly, determining the spectrum of $\lambda_V^T\lambda_V\lambda_U^T\lambda_U$ would be sufficient to determine when the $\ell_0^*-\ell_1^*$-equivalence occurs (in fact, determining the edges of the spectrum is sufficient as well). While we will in the sections that follow below indeed determine the spectrum  of $\lambda_V^T\lambda_V\lambda_U^T\lambda_U$, here we note that in the \textbf{\emph{worst case}} that may not be necessary. Namely, in the worst case one has
\begin{eqnarray}
   \lambda_{max}(\lambda_V^T\lambda_V\lambda_U^T\lambda_U) \leq
       \lambda_{max}(\lambda_V^T\lambda_V)\lambda_{max}(\lambda_U^T\lambda_U).\label{eq:typwcanl11}
\end{eqnarray}
In the large $n$ limit due to the concentrations and identically Haar distributed $V$ and $U$ one also has
\begin{eqnarray}
   \lambda_{max}(\lambda_V^T\lambda_V\lambda_U^T\lambda_U) \leq
       \lambda_{max}(\lambda_V^T\lambda_V)\lambda_{max}(\lambda_U^T\lambda_U)\longrightarrow \lp\lambda_{max}(\lambda_V^T\lambda_V)\rp^2.\label{eq:typwcanl12}
\end{eqnarray}
Moreover, in the worst case, $U=V$, the equality is actually achieved since
\begin{eqnarray}
   \lambda_{max}(\lambda_V^T\lambda_V\lambda_V^T\lambda_V)=
   \lambda_{max}((\lambda_V^T\lambda_V)^2)=\lp\lambda_{max}(\lambda_V^T\lambda_V)\rp^2.\label{eq:typwcanl120a}
\end{eqnarray}

\noindent This basically means that in the worst case it is sufficient to consider only the spectrum of $Q$ with
\begin{eqnarray}
Q\triangleq\lambda_V^T\lambda_V.\label{eq:typwcanl12a}
\end{eqnarray}

\subsubsection{The FPT analysis of the spectrum of $Q$}
\label{sec:fpteqvspecwc}

We start by recalling from (\ref{eq:cinfanl16}) and (\ref{eq:cinfanl16a0})
\begin{eqnarray}
Q_1 & \triangleq &  \Lambda_V^T\Lambda_V \nonumber \\
Q   & \triangleq &   \lp (I^{(l)})^T\bV^{\perp}(\bV^{\perp})^T I^{(l)}\rp^{-1} -I \nonumber \\
Sp(Q_1) & \Longleftrightarrow_{\setminus 0} & Sp(Q), \label{eq:typwcanl13}
\end{eqnarray}
where $Sp(\cdot)$ stands for the spectrum of the matrix argument and $\Longleftrightarrow_{\setminus 0}$ means the equivalence of the parts of the spectra outside the zero eigenvalues. It is rather obvious that it will then be sufficient to handle the spectrum of
\begin{eqnarray}
D & \triangleq &  (I^{(l)})^T\bV^{\perp}(\bV^{\perp})^T I^{(l)}. \label{eq:typwcanl14}
\end{eqnarray}
Consider Haar distributed $\bU_D^{\perp}\in\mR^{n\times (n-l)}$ with $(\bU_D^{\perp})^T\bU_D^{\perp}=I_{(n-l)\times (n-l)}$ and let
\begin{eqnarray}
U_D & = &  \begin{bmatrix}
             \bU_D & \bU_D^{\perp}
           \end{bmatrix} \quad \mbox{with} \quad U_D^TU_D=I_{n\times n}. \label{eq:typwcanl15}
\end{eqnarray}
Also, we assume that $\bU_D^{\perp}$ (and $U_D$) are independent of $\bV^{\perp}$. After setting
\begin{eqnarray}
\bar{D} & \triangleq &  (I^{(l)})^TU_D^T\bV^{\perp}(\bV^{\perp})^T U_D I^{(l)}, \label{eq:typwcanl16}
\end{eqnarray}
we have that the spectra of $D$ and $\bar{D}$ are statistically identical, i.e.
\begin{eqnarray}
Sp(D) \triangleq  Sp((I^{(l)})^T\bV^{\perp}(\bV^{\perp})^T I^{(l)}) \Longleftrightarrow_\mP
  Sp((I^{(l)})^TU_D^T\bV^{\perp}(\bV^{\perp})^T U_D I^{(l)}) \triangleq Sp(\bar{D}), \label{eq:typwcanl17}
\end{eqnarray}
where $\Longleftrightarrow_\mP$ stands for the statistical/probabilistic equivalence. Two facts enable the above statistical identity: 1) the spectrum of the projector $\bV^{\perp}(\bV^{\perp})^T$ does not change under pre- and post-unitary multiplications on both sides; and 2) the Haar structure of $\bV^{\perp}$ remains preserved. Modulo zero eigenvalues, we then further have
\begin{eqnarray}
  Sp((I^{(l)})^TU_D^T\bV^{\perp}(\bV^{\perp})^T U_D I^{(l)}) \Longleftrightarrow_{\mP\setminus 0}
  Sp(\bV^{\perp}(\bV^{\perp})^T U_D I^{(l)}(I^{(l)})^TU_D^T) \Longleftrightarrow
  Sp(\bV^{\perp}(\bV^{\perp})^T \bU_D^{\perp}(\bU_D^{\perp})^T), \label{eq:typwcanl18}
\end{eqnarray}
where, similarly as above, $\Longleftrightarrow_{\mP\setminus 0}$ stands for the statistical/probabilistic equivalence in the part of the spectrum outside the zero eignevalues (introduced due to the non-square underlying matrices). Clearly, the key object of our interest below will be
\begin{eqnarray}
\tilde{D} & \triangleq & \bV^{\perp}(\bV^{\perp})^T \bU_D^{\perp}(\bU_D^{\perp})^T, \label{eq:typwcanl19}
\end{eqnarray}
where both $\bV^{\perp}$ and $\bU_D^{\perp}$ are Haar distributed and independent of each other. After setting
\begin{eqnarray}
{\cal V} & \triangleq & \bV^{\perp}(\bV^{\perp})^T \nonumber \\
{\cal U} & \triangleq & \bU_D^{\perp}(\bU_D^{\perp})^T, \label{eq:typwcanl20}
\end{eqnarray}
we easily have from (\ref{eq:typwcanl19})
\begin{eqnarray}
\tilde{D} & \triangleq & {\cal V}{\cal U}, \label{eq:typwcanl20a}
\end{eqnarray}
and below first focus on handling the spectrum and the  corresponding relevant transforms of ${\cal V}$. Since we will be working in the mathematically most challenging large $n$ linear regime, we find it useful to introduce the following large dimensional scalings
\begin{eqnarray}
\beta\triangleq \lim_{n\rightarrow \infty}\frac{k}{n}\quad \mbox{and} \quad \eta\triangleq \lim_{n\rightarrow \infty}\frac{l}{n} \quad \mbox{and} \quad \alpha\triangleq \lim_{n\rightarrow \infty}\frac{m}{n^2}=\lim_{n\rightarrow \infty}\frac{n^2-(n-l)^2}{n^2}=1-(1-\eta)^2. \label{eq:typwcanl21}
\end{eqnarray}

We start with a trivial observation. Let $f_{\calV}(\cdot)$ be the spectral distribution of $\calV$. Then
\begin{eqnarray}
 f_\calV(x)=(1-\beta)\delta(1-x)+\beta\delta(x), \label{eq:typwcanl22}
\end{eqnarray}
where $\delta(\cdot)$ stands for the standard delta function with nonzero value only when its argument takes value zero. Using the definition of the $G$-transform from (\ref{eq:typwcanl6}) we can find
\begin{eqnarray}
    G_\calV(z) = \int \frac{f_\calV(x)}{z-x} dx = \int \frac{(1-\beta)\delta(1-x)+\beta\delta(x)}{z-x} dx=
    \frac{1-\beta}{z-1}+\frac{\beta}{z}=\frac{z-\beta}{z^2-z}.  \label{eq:typwcanl23}
\end{eqnarray}
Also, from (\ref{eq:typwcanl18}) we have
\begin{eqnarray}
R_\calV(y)=z-\frac{1}{y} \quad \mbox{with} \quad y=G_\calV(z) \quad \mbox{and} \quad z=G_\calV^{-1}(y).  \label{eq:typwcanl24}
\end{eqnarray}
From (\ref{eq:typwcanl23}) and (\ref{eq:typwcanl24}) we further find
\begin{eqnarray}
 G_\calV(z)=y \quad  \Longleftrightarrow  \quad \frac{z-\beta}{z^2-z}=y \quad \Longleftrightarrow \quad  z^2y-z(y+1)+\beta=0.  \label{eq:typwcanl25}
\end{eqnarray}
Solving for $z$ gives
\begin{eqnarray}
z=\frac{y+1\pm\sqrt{(y+1)^2-4\beta y}}{2y}.  \label{eq:typwcanl26}
\end{eqnarray}
Combining (\ref{eq:typwcanl24}) and (\ref{eq:typwcanl26}) we obtain for the $R$-transform
\begin{eqnarray}
R_\calV(y)=z-\frac{1}{y}= \frac{y-1\pm\sqrt{(y+1)^2-4\beta y}}{2y},  \label{eq:typwcanl27}
\end{eqnarray}
where we for the completeness adopt the strategy to keep both $\pm$ signs. To determine the $S$-transform we start by combining (\ref{eq:typwcanl9}) and (\ref{eq:typwcanl27})
\begin{eqnarray}
S_\calV(z)=\frac{1}{R_\calV(zS_\calV(z))}= \frac{1}{\frac{zS_\calV(z)-1\pm\sqrt{(zS_\calV(z)+1)^2-4\beta zS_\calV(z)}}{2zS_\calV(z)}}.  \label{eq:typwcanl28}
\end{eqnarray}
After a bit of algebraic transformations we have
\begin{eqnarray}
\begin{array}{c r c l}
& zS_\calV(z)-1-2z &  =  & \mp\sqrt{(zS_\calV(z)+1)^2-4\beta zS_\calV(z)} \\
\Longleftrightarrow & (zS_\calV(z)-1-2z)^2 &  =  & (zS_\calV(z)+1)^2-4\beta zS_\calV(z) \\
\Longleftrightarrow & (zS_\calV(z))^2-2(2z+1)zS_\calV(z)+(2z+1)^2 &  =  & (zS_\calV(z))^2+2zS_\calV+1-4\beta zS_\calV(z) \\
\Longleftrightarrow & -2(2z+1)zS_\calV(z)+4z^2+4z &  =  &  2zS_\calV(z)-4\beta zS_\calV(z) \\
\Longleftrightarrow & 4z^2+4z &  =  &  (4z^2+4z)S_\calV(z)-4\beta zS_\calV(z) \\
\Longleftrightarrow & z+1 &  =  &  S_\calV(z)(z+1-\beta).
\end{array}\label{eq:typwcanl29}
\end{eqnarray}
From (\ref{eq:typwcanl29}) we finally have
\begin{eqnarray}
S_\calV(z)=\frac{z+1}{z+1-\beta}.\label{eq:typwcanl30}
\end{eqnarray}
As this is a very generic result it is useful to have it formalized in the following lemma.
\begin{lemma}
  Let $\bV^{\perp}\in\mR^{n\times (n-k)}$ be Haar distributed unitary basis of an $(n-k)$-dimensional subspace of $\mR^n$. Let $\calV$ be as in (\ref{eq:typwcanl20}), i.e.
   \begin{eqnarray}
  {\cal V}  \triangleq  \bV^{\perp}(\bV^{\perp})^T.\label{eq:typwclemma1eq1}
\end{eqnarray}
In the large $n$ linear regime, with $\beta\triangleq\lim_{n\rightarrow\infty}\frac{k}{n}$, the $S$-transform of the spectral density of $\calV$, $f_\calV(\cdot)$, is
\begin{eqnarray}
S_\calV(z)=\frac{z+1}{z+1-\beta}.\label{eq:typwclemma1eq2}
\end{eqnarray}\label{lemma:typwclemma1}
\end{lemma}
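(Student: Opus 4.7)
The plan is to directly chase the spectral density of $\mathcal{V}$ through the three Voiculescu transforms, in exactly the order $f_\mathcal{V}\longrightarrow G_\mathcal{V}\longrightarrow R_\mathcal{V}\longrightarrow S_\mathcal{V}$ prescribed by (\ref{eq:typwcanl6})--(\ref{eq:typwcanl9}). First I would identify $f_\mathcal{V}$ from elementary linear algebra: since $\bV^{\perp}$ has orthonormal columns, $\mathcal{V}=\bV^{\perp}(\bV^{\perp})^T$ is the orthogonal projector onto an $(n-k)$-dimensional subspace of $\mR^n$, so its eigenvalues are $n-k$ ones and $k$ zeros. In the linear asymptotic regime with $k/n\to\beta$ this gives the two-atom limiting law
\begin{eqnarray*}
f_\mathcal{V}(x)=(1-\beta)\delta(1-x)+\beta\delta(x),
\end{eqnarray*}
exactly as in (\ref{eq:typwcanl22}). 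Notice that the Haar assumption on $\bV^{\perp}$ is not used at this stage — only the rank matters; the Haar property enters later when $\mathcal{V}$ is multiplied by an independent $\mathcal{U}$ and one wants to invoke the multiplicative $S$-transform rule.

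Next I would plug this two-atom density into the definition of the Stieltjes transform to obtain the rational function $G_\mathcal{V}(z)=(z-\beta)/(z^2-z)$, and then invert the functional equation $R(G(z))+1/G(z)=z$ by setting $y=G_\mathcal{V}(z)$ and solving the resulting quadratic $z^2 y-z(y+1)+\beta=0$ for $z$. Subtracting $1/y$ yields the (two-branch) $R$-transform, after which I would apply the conversion $S(z)=1/R(zS(z))$. Substituting and rearranging so as to isolate the square root, then squaring to eliminate it, collapses everything down to the linear identity $(z+1)=S_\mathcal{V}(z)(z+1-\beta)$, which immediately gives the claimed formula.

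The only genuinely delicate point is the $\pm$ sign ambiguity that appears when solving the quadratic for $z$ in terms of $y$; a priori this propagates into $R_\mathcal{V}$. The clean resolution is that the $\pm$ is absorbed by the squaring step in the passage from $R$ to $S$, so both branches produce the same $S_\mathcal{V}$. Alternatively, one can fix the branch by the standard normalization $G_\mathcal{V}(z)\sim 1/z$ as $z\to\infty$, which forces the minus sign in (\ref{eq:typwcanl26}) and hence the plus sign in (\ref{eq:typwcanl27}); either way the final answer is unambiguous, so no additional case analysis is needed and the remaining work is purely algebraic bookkeeping.
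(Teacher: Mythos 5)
Your proposal is correct and follows the same chain $f_\mathcal{V}\to G_\mathcal{V}\to R_\mathcal{V}\to S_\mathcal{V}$ as the paper's own derivation in (\ref{eq:typwcanl22})--(\ref{eq:typwcanl30}), including the same observation that only the rank of the projector (not the Haar property) determines the two-atom law. Your remark resolving the $\pm$ branch ambiguity is a small but welcome clarification that the paper handles more tersely by ``keeping both $\pm$ signs'' and letting the squaring collapse them.
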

\begin{proof}
  Follows from the above discussion.
\end{proof}

Since $\calV$ and $\calU$  are  structurally identical (with the only difference being one of their dimensions) we easily have
\begin{eqnarray}
S_\calU(z)=\frac{z+1}{z+1-\eta}.\label{eq:typwcanl31}
\end{eqnarray}
A combination of (\ref{eq:typwcanl10}), (\ref{eq:typwcanl20a}), (\ref{eq:typwcanl30}), and (\ref{eq:typwcanl31}) gives
\begin{eqnarray}
S_{\tilde{D}}(z)=\frac{(z+1)^2}{(z+1-\beta)(z+1-\eta)}.\label{eq:typwcanl32}
\end{eqnarray}
From (\ref{eq:typwcanl9}) we also have
\begin{eqnarray}
R_{\tilde{D}}(z)=\frac{1}{S_{\tilde{D}}(zR_{\tilde{D}}(z))}=
\frac{1}{\frac{(zR_{\tilde{D}}(z)+1)^2}{(zR_{\tilde{D}}(z)+1-\beta)(zR_{\tilde{D}}(z)+1-\eta)}}=
\frac{(zR_{\tilde{D}}(z)+1-\beta)(zR_{\tilde{D}}(z)+1-\eta)}{(zR_{\tilde{D}}(z)+1)^2}.\label{eq:typwcanl33}
\end{eqnarray}
Moreover, (\ref{eq:typwcanl7}) gives
\begin{eqnarray}
R_{\tilde{D}}(G_{\tilde{D}}(z))+\frac{1}{G_{\tilde{D}}(z)}=z,\label{eq:typwcanl34}
\end{eqnarray}
and
\begin{eqnarray}
G_{\tilde{D}}(z)R_{\tilde{D}}(G_{\tilde{D}}(z))=zG_{\tilde{D}}(z)-1,\label{eq:typwcanl35}
\end{eqnarray}
From (\ref{eq:typwcanl33}) one further finds
\begin{eqnarray}
R_{\tilde{D}}(G_{\tilde{D}}(z))=
\frac{(G_{\tilde{D}}(z)R_{\tilde{D}}(G_{\tilde{D}}(z))+1-\beta)(G_{\tilde{D}}(z)R_{\tilde{D}}(G_{\tilde{D}}(z))+1-\eta)}
{(G_{\tilde{D}}(z)R_{\tilde{D}}(G_{\tilde{D}}(z))+1)^2}.\label{eq:typwcanl36}
\end{eqnarray}
After plugging (\ref{eq:typwcanl35}) in (\ref{eq:typwcanl36}) we have
\begin{eqnarray}
R_{\tilde{D}}(G_{\tilde{D}}(z))=
\frac{(zG_{\tilde{D}}(z)-1+1-\beta)(zG_{\tilde{D}}(z)-1+1-\eta)}
{(zG_{\tilde{D}}(z)-1+1)^2}=\frac{(zG_{\tilde{D}}(z)-\beta)(zG_{\tilde{D}}(z)-\eta)}
{(zG_{\tilde{D}}(z))^2}.\label{eq:typwcanl37}
\end{eqnarray}
A combination of (\ref{eq:typwcanl34}) and (\ref{eq:typwcanl37}) further gives
\begin{eqnarray}
z-\frac{1}{G_{\tilde{D}}(z)}=
 \frac{(zG_{\tilde{D}}(z)-\beta)(zG_{\tilde{D}}(z)-\eta)}
{(zG_{\tilde{D}}(z))^2}.\label{eq:typwcanl38}
\end{eqnarray}
From (\ref{eq:typwcanl38})  we quickly find
\begin{eqnarray}
z^3(G_{\tilde{D}}(z))^2- z^2G_{\tilde{D}}(z)=
 z^2(G_{\tilde{D}}(z))^2-(\beta+\eta)zG_{\tilde{D}}(z)+\beta\eta,\label{eq:typwcanl39}
\end{eqnarray}
and
\begin{eqnarray}
(G_{\tilde{D}}(z))^2(z^3-z^2)- G_{\tilde{D}}(z)(z^2-z(\beta+\eta))-\beta\eta=0.\label{eq:typwcanl40}
\end{eqnarray}
Solving for $G_{\tilde{D}}(z)$ finally gives
\begin{eqnarray}
G_{\tilde{D}}^{\pm}(z)=\frac{z^2-z(\beta+\eta)\pm\sqrt{(z^2-z(\beta+\eta))^2+4\beta\eta(z^3-z^2)}}{2(z^3-z^2)},\label{eq:typwcanl41}
\end{eqnarray}
or
\begin{eqnarray}
G_{\tilde{D}}^{\pm}(z)=\frac{z-(\beta+\eta)\pm\sqrt{(z-(\beta+\eta))^2+4\beta\eta(z-1)}}{2(z^2-z)}.\label{eq:typwcanl42}
\end{eqnarray}

The above is sufficient to establish the following lemma.
\begin{lemma}
  Let $\bV^{\perp}\in\mR^{n\times (n-k)}$ and $\bU_D^{\perp}\in\mR^{n\times (n-k)}$ be Haar distributed unitary bases of $(n-k)$-dimensional subspaces of $\mR^n$. Let $\calV$ and $\calU$ be as in (\ref{eq:typwcanl20}) and $\tilde{D}$ as in (\ref{eq:typwcanl20a}), i.e.
   \begin{eqnarray}
  {\cal V} & \triangleq & \bV^{\perp}(\bV^{\perp})^T \nonumber \\
    {\cal U} & \triangleq & \bU_D^{\perp}(\bU_D^{\perp})^T\nonumber \\
     \tilde{D} & \triangleq & {\cal V}{\cal U}.\label{eq:typwclemma1aeq1}
\end{eqnarray}
In the large $n$ linear regime, with $\beta\triangleq\lim_{n\rightarrow\infty}\frac{k}{n}$, the $G$-transform of the spectral density of $\tilde{D}$, $f_{\tilde{D}}(\cdot)$, is
\begin{eqnarray}
G_{\tilde{D}}^{\pm}(z)=\frac{z-(\beta+\eta)\pm\sqrt{(z-(\beta+\eta))^2+4\beta\eta(z-1)}}{2(z^2-z)}.\label{eq:typwclemma1aeq2}
\end{eqnarray}\label{lemma:typwclemma1a}
\end{lemma}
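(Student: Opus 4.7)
The plan is to apply the free probability machinery set up in Section 4.1 to the product $\tilde D = \calV\calU$, using Lemma \ref{lemma:typwclemma1} as the key input. First I would observe that $\calU = \bU_D^\perp(\bU_D^\perp)^T$ has exactly the same structural form as $\calV$ (a Haar-conjugated orthogonal projection), with the only difference being its rank parameter: in the hypothesis of the lemma $\bU_D^\perp$ should be read as living in $\mR^{n\times(n-l)}$ (consistent with (\ref{eq:typwcanl15})), giving $\calU$ spectral atoms of mass $\eta$ at $0$ and $1-\eta$ at $1$. Consequently the very same $G$-to-$R$-to-$S$ chain (\ref{eq:typwcanl22})--(\ref{eq:typwcanl30}) that produced $S_\calV(z) = (z+1)/(z+1-\beta)$ in Lemma \ref{lemma:typwclemma1}, applied verbatim with $\beta$ replaced by $\eta$, yields $S_\calU(z) = (z+1)/(z+1-\eta)$ as recorded in (\ref{eq:typwcanl31}).

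Next I would invoke the multiplicative Voiculescu rule from (\ref{eq:typwcanl10}): since $\bV^\perp$ and $\bU_D^\perp$ are independent and Haar-distributed, the pair $\calV,\calU$ is asymptotically free, and therefore $S_{\tilde D}(z) = S_\calV(z)\,S_\calU(z)$. This immediately produces the closed form (\ref{eq:typwcanl32}). The remaining task is to invert the transform chain back to $G_{\tilde D}$. I would use the $S$-to-$R$ relation $R(z) = 1/S(zR(z))$ to obtain the implicit expression (\ref{eq:typwcanl33}), then evaluate it at $z \mapsto G_{\tilde D}(z)$ and combine with the fundamental Stieltjes identity $R(G(z)) + 1/G(z) = z$, rewritten as $G_{\tilde D}(z)R_{\tilde D}(G_{\tilde D}(z)) = zG_{\tilde D}(z) - 1$. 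The decisive simplification is that every instance of $zR_{\tilde D}(G_{\tilde D}(z))$ appearing in the right-hand side of (\ref{eq:typwcanl33}) collapses to $zG_{\tilde D}(z)-1$, which clears the fractional structure entirely.

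After this substitution the identity reduces to a scalar quadratic in $G_{\tilde D}(z)$, explicitly $(z^3 - z^2) G^2 - (z^2 - z(\beta+\eta)) G - \beta\eta = 0$, exactly as in (\ref{eq:typwcanl40}). Solving with the quadratic formula and simplifying the common $z$ factor in numerator and denominator yields the two branches in (\ref{eq:typwclemma1aeq2}), completing the derivation.

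The main obstacle I anticipate is not the algebra, which is a mechanical chain of substitutions, but the rigorous justification of the multiplicative $S$-transform rule in this particular setting. Strictly speaking, Voiculescu's identity requires \emph{asymptotic freeness} of $\calV$ and $\calU$; for independent Haar-conjugated projections this is a classical consequence of asymptotic freeness of independent Haar unitaries, which I would cite rather than re-prove. A secondary subtlety is that the spectral measures in (\ref{eq:typwcanl22}) are purely atomic, so the inverse-Stieltjes formula (\ref{eq:typwcanl7}) must be read distributionally; however, the product measure of $\tilde D$ itself has an absolutely continuous bulk (together with possible point masses at $0$ and $1$), and the analytic $G_{\tilde D}^{\pm}$ obtained above unambiguously encodes it via its boundary values on $\mR$, so no issue propagates to the final closed form.
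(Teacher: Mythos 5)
Your proposal reconstructs, essentially verbatim, the chain of computations (\ref{eq:typwcanl22})--(\ref{eq:typwcanl42}) that the paper itself cites as the proof, including the correct observation that $\bU_D^\perp$ should be $n\times(n-l)$ and the identical route via $S_\calV S_\calU = S_{\tilde D}$, the $S$-to-$R$ inversion, the identity $G R(G)=zG-1$, the resulting quadratic in $G_{\tilde D}$, and the quadratic formula. The additional caveats you raise about asymptotic freeness of independent Haar-conjugated projections and the distributional reading of the inverse-Stieltjes formula are points the paper implicitly assumes, so your version is if anything a more careful rendering of the same argument.
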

\begin{proof}
  Follows from the above discussion. The ``$+$/$-$" signs are taken for negative/positive imaginary part under the root.
\end{proof}

One then relies on (\ref{eq:typwcanl7}) to determine $f_{\tilde{D}}(x)$ as
\begin{eqnarray}
     f_{\tilde{D}}(x) =  -\lim_{\epsilon\rightarrow 0^+} \frac{\mbox{imag}(G_{\tilde{D}}(x+i\epsilon))}{\pi}.   \label{eq:typwcanl43}
\end{eqnarray}
The above is a generic procedure and we in Figure \ref{fig:cinfspecGplusG} show the results that one can get for two concrete values $\beta=0.2$ and $\eta=0.6$. One should note that it is not clear \emph{a priori} which of the two $\pm$ signs should be used. As Figure \ref{fig:cinfspecGplusG} indicates one most definitely has to be fairly careful and account for both signs. From Figure \ref{fig:cinfspecGplusG} one further observes that there are four critical points in the spectrum itself: the locations of the two delta functions, zero and one, and two edges of the spectrum's bulk, $x_l$ and $x_u$. The values of these points are shown in the plots on the right hand side. In general one can actually determine their closed forms as well. Moreover, it turns out that one can determine the closed form of the entire spectral function. The section that follows analyzes the spectrum of $\tilde{D}$ in more details and eventually provides the closed form expressions for all the relevant spectral features.

\begin{figure}[htb]
\begin{minipage}[b]{.5\linewidth}
\centering
\centerline{\epsfig{figure=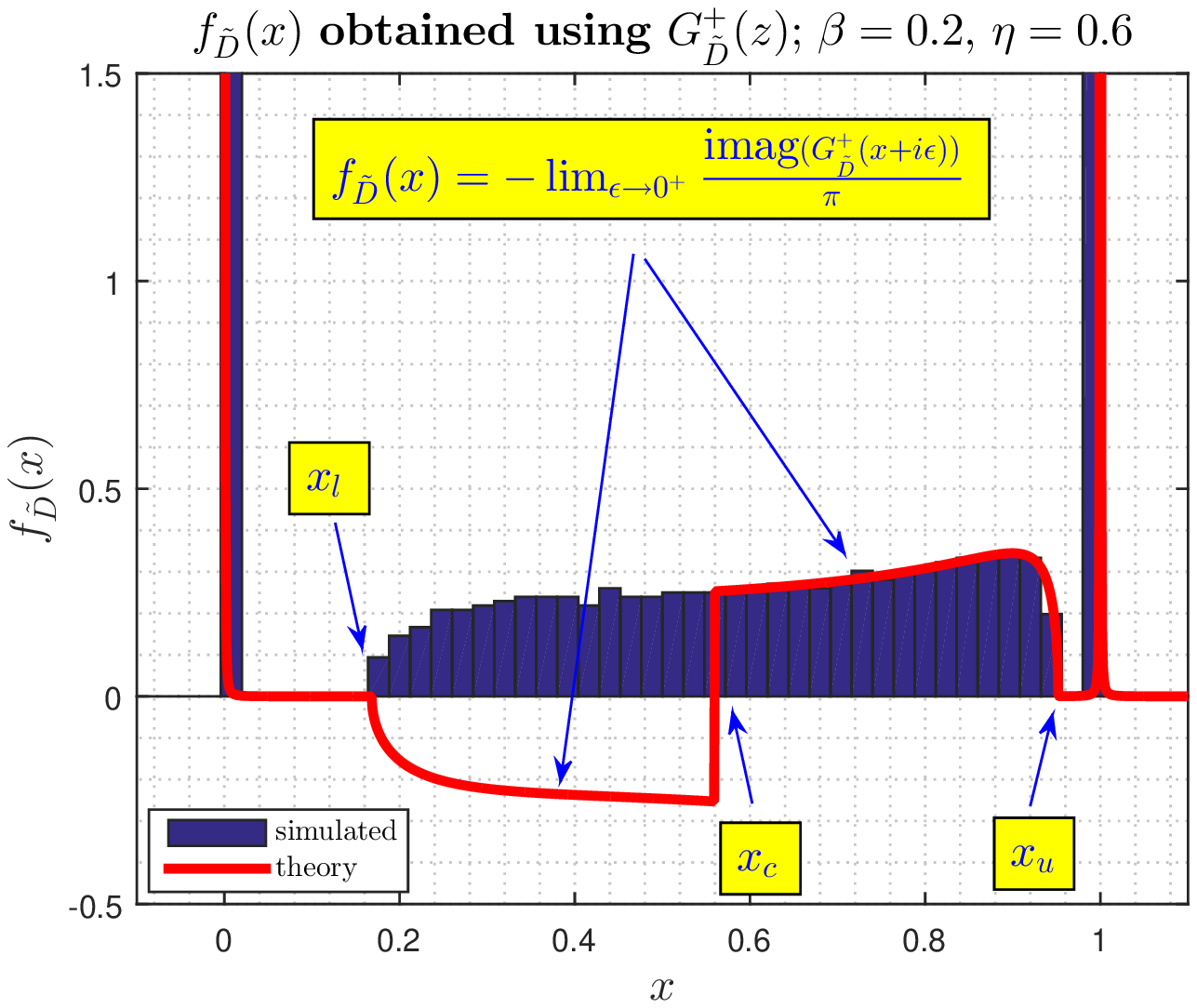,width=9cm,height=7cm}}
\end{minipage}
\begin{minipage}[b]{.5\linewidth}
\centering
\centerline{\epsfig{figure=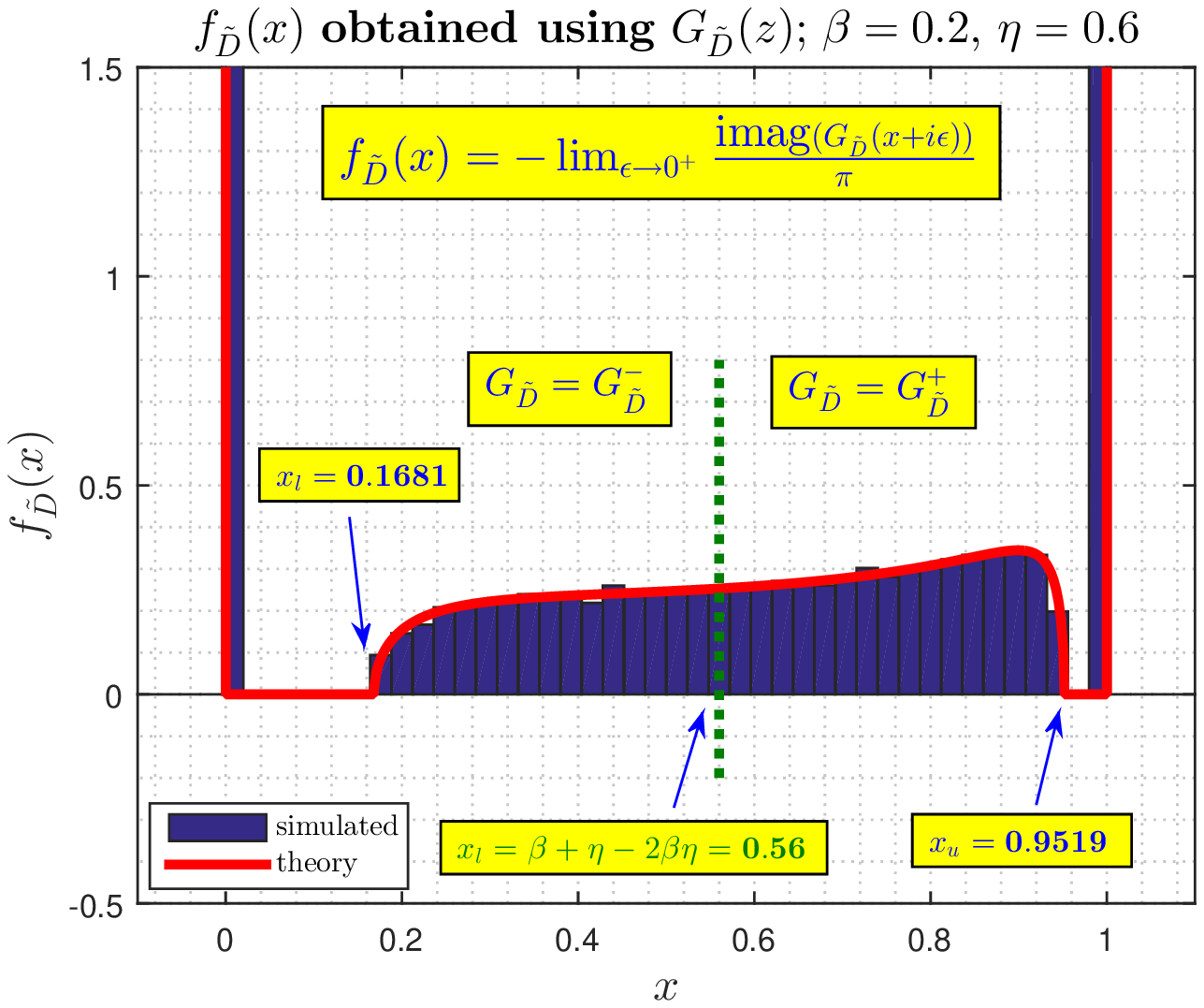,width=9cm,height=7cm}}
\end{minipage}
\caption{Both $G_{\tilde{D}}^+(z)$ and $G_{\tilde{D}}^-(z)$ need to be taken into account}
\label{fig:cinfspecGplusG}
\end{figure}

\subsubsubsection{The spectrum of $\tilde{D}$ -- closed form expressions}
\label{sec:fpteqvspecwctildeD}

As one of our main concerns in this paper is the utilization of the final results that we will get in this section and not necessarily the presentation of the tiny details needed to get them, we will sketch all the key arguments and leave out all the unnecessary minute details. However, we do emphasize that the sketch will contain all the key pointers so that with a little bit of effort one, if in a need, can fill in all the missing pieces of the overall mosaic.

As mentioned above, looking at the denominator of (\ref{eq:typwcanl42}) and keeping in mind the $f\longleftrightarrow G$ connection from (\ref{eq:typwcanl43}) one observes that the pdf of interest, $f_{\tilde{D}}(x)$, potentially has two delta functions, one at zero and the other one at one. Moreover, the bulk of the spectrum will be in the range where the real part under the root is negative. It also goes almost without saying that the entire spectrum will be located between zero and one. Finally, the breaking point, $x_c$, where one needs to switch from $G_{\tilde{D}}^+(z)$ to $G_{\tilde{D}}^-(z)$ in (\ref{eq:typwcanl43}) is determined as the value where the imaginary part under the root changes its sign. Equipped with these observations one can then proceed to actually concretely determine some of the relevant quantities.

Based on what we have just observed above, we first express $f_{\tilde{D}}(x)$ as the sum of its three key constitutive parts (two delta functions and the bulk)
\begin{eqnarray}
     f_{\tilde{D}}(x) = f_0\delta (x-0) +f_{\tilde{D}}^{(b)}(x)+f_1\delta(x-1) .   \label{eq:typwcanl44}
\end{eqnarray}
From (\ref{eq:typwcanl44}) one has that $f_{\tilde{D}}(x)$ will be fully specified if one can determine the delta multipliers $f_0$ and $f_1$, and the bulk pdf $f_{\tilde{D}}^{(b)}(\cdot)$.

\underline{\bl{\textbf{ 1) Finding $f_0$:}}} To determine $f_0$ we start by observing from (\ref{eq:typwcanl43}) for $x=0$
\begin{eqnarray}
     f_{\tilde{D}}(0) =  -\lim_{\epsilon\rightarrow 0^+} \frac{\mbox{imag}(G_{\tilde{D}}(i\epsilon))}{\pi}.   \label{eq:typwcanl45}
\end{eqnarray}
Utilizing (\ref{eq:typwcanl42}) we further have
\begin{eqnarray}
     f_{\tilde{D}}(0) &  =  & -\frac{1}{\pi}\lim_{\epsilon\rightarrow 0^+} \mbox{imag}\lp \frac{i\epsilon-(\beta+\eta)\pm\sqrt{(i\epsilon-(\beta+\eta))^2+4\beta\eta(i\epsilon-1)}}{2((i\epsilon)^2-i\epsilon)}\rp \nonumber \\
      &  =  & -\frac{1}{\pi}\lim_{\epsilon\rightarrow 0^+} \mbox{imag}\lp \frac{-(\beta+\eta)\pm\sqrt{-\epsilon^2+(\beta+\eta)^2-4\beta\eta -2i\epsilon (\beta+\eta-2\beta\eta)}}{2(-\epsilon^2-i\epsilon)}\rp \nonumber \\
      &  =  & -\frac{1}{\pi}\lim_{\epsilon\rightarrow 0^+} \mbox{imag}\lp \frac{-(\beta+\eta)\pm\sqrt{(\beta-\eta)^2  }}{2(-i\epsilon)}\rp \nonumber \\
      &  =  & -\frac{1}{\pi}\lim_{\epsilon\rightarrow 0^+} \mbox{imag}\lp \frac{-(\beta+\eta)-|\beta-\eta| }{-2i\epsilon}\rp \nonumber \\
      &  =  & \lp \beta+\eta+|\beta-\eta|  \rp \lp -\frac{1}{\pi}\lim_{\epsilon\rightarrow 0^+} \mbox{imag}\lp \frac{1}{i\epsilon}\rp \rp\nonumber \\
      &  =  & \max(\beta,\eta)\delta(0),   \label{eq:typwcanl46}
\end{eqnarray}
where the fourth equality (the choice of the ``$-$" sign in $\pm$) follows since $0\leq x_c$ (the spectrum belongs to the interval $[0,1]$ and $x_c$ must be in the spectrum) and the last equality follows since by convention
\begin{eqnarray}
 \delta(0)=\lp -\frac{1}{\pi}\lim_{\epsilon\rightarrow 0^+} \mbox{imag}\lp \frac{1}{i\epsilon}\rp \rp.   \label{eq:typwcanl47}
\end{eqnarray}
To see the rationale behind (\ref{eq:typwcanl47}) we briefly digress and start with
\begin{equation}
g(x)=\delta(x). \label{eq:typwcanl48}
\end{equation}
Then from (\ref{eq:typwcanl6})
\begin{equation}
G(z)=\int_x \frac{\delta(x)dx}{z-x}=\frac{1}{z}, \label{eq:typwcanl49}
\end{equation}
and from (\ref{eq:typwcanl7})
\begin{equation}
\delta(x)=-\lim_{\epsilon\rightarrow 0^+}\frac{\mbox{imag}(G(x+i\epsilon))}{\pi}. \label{eq:typwcanl50}
\end{equation}
For $x=0$ then
\begin{equation}
\delta(0)=-\lim_{\epsilon\rightarrow 0^+}\frac{\mbox{imag}(G(i\epsilon))}{\pi}
=-\lim_{\epsilon\rightarrow 0^+}\mbox{imag}\lp\frac{1}{\pi i\epsilon}\rp
=-\frac{1}{\pi}\lim_{\epsilon\rightarrow 0^+}\mbox{imag}\lp\frac{1}{i\epsilon}\rp, \label{eq:typwcanl51}
\end{equation}
which is identical to (\ref{eq:typwcanl47}). The above description of the delta function may not necessarily be the most adequate one. However, for what we need here it is conceptually sufficient. Namely, we are here interested in determining the proportionality constants that multiply the delta functions rather than the functions' expressions themselves. One way to make everything more adequate would be to translate everything into the continuous domain by choosing a continuous function as an asymptotic replacement for $\delta(x)$. For example, one can use the Gaussian continual approximation
\begin{equation}
\delta(x)=\lim_{\sigma\rightarrow 0^+}\frac{e^{-\frac{x^2}{2\sigma^2}}}{\sqrt{2\pi\sigma^2}}. \label{eq:typwcanl52}
\end{equation}
Then all the above holds for small $\sigma=\epsilon\sqrt{\pi}/2$ and
\begin{equation}
\delta(x)\rightarrow \lim_{\sigma\rightarrow 0^+}\frac{e^{-\frac{x^2}{2\sigma^2}}}{\sqrt{2\pi\sigma^2}}
\rightarrow \lim_{\sigma\rightarrow 0^+}\frac{e^{-\frac{x^2}{\pi\epsilon^2}}}{\pi\epsilon}\quad \mbox{and} \quad
\delta(0)\rightarrow \lim_{\sigma\rightarrow 0^+}\frac{1}{\pi\epsilon}. \label{eq:typwcanl53}
\end{equation}
The difference though would be that when computing and maneuvering with all the above transforms one would need to account for the resulting/induced $\epsilon$-differences. These are of course practically and conceptually negligible and all the results that we presented would continue to hold in the limit of small $\sigma$ or $\epsilon$. However, the writing would be substantially more tedious and a tone of additional minute details would need to be added to express all the $\epsilon$-type of modifications and to show that their contributions are indeed marginal. These things are conceptually highly trivial but require a tedious detail-oriented work. Since, on the other hand, they contribute exactly nothing to the essence of the arguments and final results we chose to operate in a semi-discrete domain with the delta functions. As a consequence one has the expressions given in (\ref{eq:typwcanl47}) and (\ref{eq:typwcanl51}). We believe that a little bit of conventional inadequacy is better than to overwhelm the presentation with a tone of details which would avoid it but at the same time make the overall content less accessible and potentially even less understandable.

\underline{\bl{\textbf{ 2) Finding $f_1$:}}} To determine $f_1$ we follow the above methodology and start by observing from (\ref{eq:typwcanl43}) for $x=1$
\begin{eqnarray}
     f_{\tilde{D}}(1) =  -\lim_{\epsilon\rightarrow 0^+} \frac{\mbox{imag}(G_{\tilde{D}}(1+i\epsilon))}{\pi}.   \label{eq:typwcanl54}
\end{eqnarray}
Further utilization of (\ref{eq:typwcanl42}) gives
\begin{eqnarray}
     f_{\tilde{D}}(1) &  =  & -\frac{1}{\pi}\lim_{\epsilon\rightarrow 0^+} \mbox{imag}\lp \frac{1+i\epsilon-(\beta+\eta)\pm\sqrt{(1+i\epsilon-(\beta+\eta))^2+4\beta\eta i\epsilon}}{2((1+i\epsilon)^2-1-i\epsilon)}\rp \nonumber \\
      &  =  & -\frac{1}{\pi}\lim_{\epsilon\rightarrow 0^+} \mbox{imag}\lp \frac{1-(\beta+\eta)\pm\sqrt{-\epsilon^2+(1-(\beta+\eta))^2 -2i\epsilon (-1+\beta+\eta-2\beta\eta)}}{2(-\epsilon^2+i\epsilon)}\rp \nonumber \\
      &  =  & -\frac{1}{\pi}\lim_{\epsilon\rightarrow 0^+} \mbox{imag}\lp \frac{1-(\beta+\eta)\pm\sqrt{(1-(\beta-\eta))^2  }}{2(i\epsilon)}\rp \nonumber \\
      &  =  & -\frac{1}{\pi}\lim_{\epsilon\rightarrow 0^+} \mbox{imag}\lp \frac{1-(\beta+\eta)+|1-(\beta+\eta)| }{2i\epsilon}\rp \nonumber \\
      &  =  & \lp 1-(\beta+\eta)+|1-(\beta+\eta)|  \rp \lp -\frac{1}{\pi}\lim_{\epsilon\rightarrow 0^+} \mbox{imag}\lp \frac{1}{i\epsilon}\rp \rp\nonumber \\
      &  =  & \max(1-(\beta+\eta),0)\delta(0),   \label{eq:typwcanl55}
\end{eqnarray}
where the fourth equality (the choice of the ``$+$" sign in $\pm$) follows since now $x_c\leq 1$ and the last equality follows by the above discussed $\delta(0)$ convention.

\underline{\bl{\textbf{ 3) Finding $f_{\tilde{D}}^{(b)}(x)$:}}} To determine $f_{\tilde{D}}^{(b)}(x)$ for $x\notin \{0,1\}$ we again start with (\ref{eq:typwcanl43}) and wrte the following for a general $x$ from the bulk of the spectrum
\begin{eqnarray}
     f_{\tilde{D}}^{(b)}(x) =  -\lim_{\epsilon\rightarrow 0^+} \frac{\mbox{imag}(G_{\tilde{D}}(x+i\epsilon))}{\pi}.   \label{eq:typwcanl56}
\end{eqnarray}
Relying once again on (\ref{eq:typwcanl42}) we, for $x\notin \{0,1\}$, have
\begin{eqnarray}
     f_{\tilde{D}}^{(b)}(x) &  =  & -\frac{1}{\pi}\lim_{\epsilon\rightarrow 0^+} \mbox{imag}\lp \frac{x+i\epsilon-(\beta+\eta)\pm\sqrt{(x+i\epsilon-(\beta+\eta))^2+4\beta\eta (x+i\epsilon-1}}{2((x+i\epsilon)^2-x-i\epsilon)}\rp \nonumber \\
      &  =  & -\frac{1}{\pi}\lim_{\epsilon\rightarrow 0^+} \mbox{imag}\lp \frac{x-(\beta+\eta)\pm\sqrt{-\epsilon^2+(x-(\beta+\eta))^2+4\beta\eta(x-1)-2i\epsilon (-x+\beta+\eta-2\beta\eta)}}{2(x^2-x-\epsilon^2+i\epsilon(2x-1))}\rp \nonumber \\
      &  =  & -\frac{1}{\pi}\lim_{\epsilon\rightarrow 0^+} \mbox{imag}\lp \frac{x-(\beta+\eta)\pm\sqrt{(x-(\beta+\eta))^2+4\beta\eta(x-1)-2i\epsilon (-x+\beta+\eta-2\beta\eta)}}{2(x^2-x)}\rp \nonumber \\
      &  =  & -\frac{1}{\pi}\lim_{\epsilon\rightarrow 0^+} \mbox{imag}\lp \frac{\pm\sqrt{(x-(\beta+\eta))^2+4\beta\eta(x-1)-2i\epsilon (-x+\beta+\eta-2\beta\eta)}}{2(x^2-x)}\rp.   \label{eq:typwcanl56}
\end{eqnarray}
Now, since one is interested in the imaginary part of interest is the region of $x$ where the real part under the root is negative (outside that region, i.e in the region of $x$ where the real part under the root is nonnegative $f_{\tilde{D}}^{(b)}(x)$ is zero). To determine the region of interest we start by setting
\begin{eqnarray}
      T_{\tilde{D}}  \triangleq   \{x\in\mR| (x-(\beta+\eta))^2+4\beta\eta(x-1) \leq 0\} \quad \mbox{and}\quad
      x_c\triangleq  \beta+\eta-2\beta\eta.
          \label{eq:typwcanl57}
\end{eqnarray}
To explicitly characterize $T_{\tilde{D}}$ we look at the following
\begin{eqnarray}
\begin{array}{r r c l}
  $ $  & (x-(\beta+\eta))^2+4\beta\eta(x-1) & = &  0  \\
 \Longleftrightarrow  &  x^2-2x(\beta+\eta-2\beta\eta)+(\beta+\eta)^2-4\beta\eta & = & 0 \\
 \Longleftrightarrow  &  x^2-2x(\beta+\eta-2\beta\eta)+(\beta-\eta)^2 & = & 0.
 \end{array}\label{eq:typwcanl58}
\end{eqnarray}
Solving for $x$ one finds
\begin{equation}
 x  =  \frac{2(\beta+\eta-2\beta\eta)\pm\sqrt{(2(\beta+\eta-2\beta\eta))^2-4(\beta-\eta)^2}}{2} = \beta+\eta-2\beta\eta \pm \sqrt{(\beta+\eta-2\beta\eta)^2-(\beta-\eta)^2}.
 \label{eq:typwcanl59}
\end{equation}
Setting
\begin{eqnarray}
 x_l & \triangleq & \beta+\eta-2\beta\eta  -  \sqrt{(\beta+\eta-2\beta\eta)^2-(\beta-\eta)^2} \nonumber \\
 x_u & \triangleq & \beta+\eta-2\beta\eta  +  \sqrt{(\beta+\eta-2\beta\eta)^2-(\beta-\eta)^2},
 \label{eq:typwcanl60}
\end{eqnarray}
one has
\begin{eqnarray}
T_{\tilde{D}}=\{x\in\mR| x\in [x_l,x_u]\}.
 \label{eq:typwcanl61}
\end{eqnarray}
Moreover, from (\ref{eq:typwcanl60}), one also has
\begin{eqnarray}
 0\leq x_l \leq x_u\leq 1,
 \label{eq:typwcanl62}
\end{eqnarray}
with
\begin{eqnarray}
 x_l=0 \quad \mbox{if} \quad \beta=\eta \quad \mbox{and} \quad x_u=1  \quad \mbox{if} \quad \beta=\eta=0.5.
 \label{eq:typwcanl63}
\end{eqnarray}
The first two inequalities in (\ref{eq:typwcanl60}) are trivial, whereas the third one follows after noting
\begin{eqnarray}
\beta+\eta-2\beta\eta\leq \max(\beta,1-\beta)\leq 1,
 \label{eq:typwcanl64}
\end{eqnarray}
and observing the following sequence
\begin{eqnarray}
\begin{array}{r r c l}
  &  \beta+\eta-2\beta\eta  +  \sqrt{(\beta+\eta-2\beta\eta)^2-(\beta-\eta)^2} & \leq & 1 \\
 \Longleftrightarrow   &    (\beta+\eta-2\beta\eta)^2-(\beta-\eta)^2 & \leq & (1-(\beta+\eta-2\beta\eta))^2 \\
 \Longleftrightarrow   &    -(\beta-\eta)^2 & \leq & 1-2(\beta+\eta-2\beta\eta) \\
 \Longleftrightarrow   &    -(\beta-\eta)^2+2(\beta+\eta-2\beta\eta)-1 & \leq & 0 \\
 \Longleftrightarrow   &    -(\beta+\eta)^2+2(\beta+\eta)-1 & \leq & 0 \\
 \Longleftrightarrow   &    -(1-(\beta+\eta))^2 & \leq & 0.
\end{array}
 \label{eq:typwcanl65}
\end{eqnarray}
Returning to (\ref{eq:typwcanl56}) we further have for $x\in T_{\tilde{D}}=[x_l,x_u]$
\begin{eqnarray}
     f_{\tilde{D}}^{(b)}(x)
      &  =  & -\frac{1}{\pi}\lim_{\epsilon\rightarrow 0^+} \mbox{imag}\lp \frac{\pm\sqrt{(x-(\beta+\eta))^2+4\beta\eta(x-1)-2i\epsilon (-x+\beta+\eta-2\beta\eta)}}{2(x^2-x)}\rp \nonumber \\
            &  =  & -\frac{1}{\pi}\lim_{\epsilon\rightarrow 0^+} \mbox{imag}\lp \frac{\pm\sqrt{(x-(\beta+\eta))^2+4\beta\eta(x-1)-2i\epsilon (x_c-x)}}{2(x^2-x)}\rp \nonumber \\
                        &  =  & -\frac{1}{\pi}\lim_{\epsilon\rightarrow 0^+} \mbox{imag}\lp \frac{i\sqrt{-(x-(\beta+\eta))^2-4\beta\eta(x-1)}}{2(x^2-x)}\rp,   \label{eq:typwcanl66}
\end{eqnarray}
where the ``$+$" plus sign is chosen if $x_c\leq x\leq x_u$ and the ``$-$" sign is chosen if $x_l\leq x\leq x_c$. Finally, from (\ref{eq:typwcanl66}) one easily finds
\begin{eqnarray}
     f_{\tilde{D}}^{(b)}(x)   &  =  &
               \frac{\sqrt{-(x-(\beta+\eta))^2-4\beta\eta(x-1)}}{2\pi(x-x^2)} \quad \mbox{ if } \quad x_l\leq x\leq x_u.
     \label{eq:typwcanl68}
\end{eqnarray}

 The above is then sufficient to completely characterize the spectral distribution $f_{\tilde{D}}(x)$. We summarize the results in the following lemma.

 \begin{lemma}\label{lemma:typwclemma2}
Assume large $n$ linear regime with
\begin{eqnarray}
\beta\triangleq \lim_{n\rightarrow \infty}\frac{k}{n}\quad \mbox{and} \quad \eta\triangleq \lim_{n\rightarrow \infty}\frac{l}{n}. \label{eq:typwclemma2eq1}
\end{eqnarray}
Let $\bV^{\perp}\in\mR^{n\times (n-k)}$ be a Haar distributed basis of an $n-k$-dimensional subspace of $R^n$. Analogously, let $\bU_D^{\perp}\in\mR^{n\times (n-k)}$ be a Haar distributed basis of an $n-l$-dimensional subspace of $R^n$. Moreover, let $\bV^{\perp}\in\mR^{n\times (n-k)}$ and $\bU_D^{\perp}\in\mR^{n\times (n-k)}$ be independent of each other. Also, let ${\cal V}$, ${\cal U}$, and $\tilde{D}$ be as defined in (\ref{eq:typwcanl20}) and (\ref{eq:typwcanl20a}), i.e. let
\begin{eqnarray}
{\cal V} & \triangleq & \bV^{\perp}(\bV^{\perp})^T \nonumber \\
{\cal U} & \triangleq & \bU_D^{\perp}(\bU_D^{\perp})^T\nonumber \\
 \tilde{D} & \triangleq & {\cal V}{\cal U}. \label{eq:typwclemma2eq2}
\end{eqnarray}
Set $x_l$ and $x_u$ as in (\ref{eq:typwcanl60}), i.e.
\begin{eqnarray}
 x_l & \triangleq & \beta+\eta-2\beta\eta  -  \sqrt{(\beta+\eta-2\beta\eta)^2-(\beta-\eta)^2} \nonumber \\
 x_u & \triangleq & \beta+\eta-2\beta\eta  +  \sqrt{(\beta+\eta-2\beta\eta)^2-(\beta-\eta)^2},
 \label{eq:typwclemma2eq3}
\end{eqnarray}
Then the limiting spectral distribution of $\tilde{D}$, $f_{\tilde{D}}(x)$, is
\begin{equation}
     f_{\tilde{D}}(x)     =   f_0\delta(x) +  f_{\tilde{D}}^{(b)}(x) + f_1f_0\delta(x-1)=
              \max(\beta,\eta)\delta(x) + f_{\tilde{D}}^{(b)}(x)
              +\max(1-(\beta+\eta),0)\delta(x-1),
     \label{eq:typwclemma2eq4}
\end{equation}
with
\begin{eqnarray}
     f_{\tilde{D}}^{(b)}(x)   &  =  &
            \begin{cases}
              \frac{\sqrt{-(x-(\beta+\eta))^2-4\beta\eta(x-1)}}{2\pi(x-x^2)}, & \mbox{if }  x_l\leq x\leq x_u. \\
              0, & \mbox{otherwise}.
            \end{cases}
     \label{eq:typwclemma2eq5}
\end{eqnarray}
 \end{lemma}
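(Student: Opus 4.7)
The plan is to exploit the asymptotic freeness of the two independent Haar-distributed projectors $\calV$ and $\calU$ and reduce the problem to a computation of Stieltjes transforms. Since the eigenspaces of $\bV^{\perp}$ and $\bU_D^{\perp}$ are independent and unitarily invariant, in the large-$n$ limit $\calV$ and $\calU$ become free elements in the sense of Voiculescu, so the $S$-transform of their product factorizes: $S_{\tilde{D}}(z)=S_\calV(z)S_\calU(z)$. The spectral law of each projector is a trivial Bernoulli mixture $f_\calV(x)=(1-\beta)\delta(1-x)+\beta\delta(x)$ and similarly for $\calU$ with $\eta$ in place of $\beta$, so Lemma \ref{lemma:typwclemma1} and its $\eta$-counterpart immediately give $S_\calV(z)=(z+1)/(z+1-\beta)$ and $S_\calU(z)=(z+1)/(z+1-\eta)$.

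From there I would convert back to the Stieltjes transform through the $R$-transform. Using $R(z)=1/S(zR(z))$ together with the functional equation $R(G(z))+1/G(z)=z$, the factored $S$-transform yields a clean quadratic in $G_{\tilde{D}}(z)$, namely $(G_{\tilde{D}}(z))^2(z^3-z^2)-G_{\tilde{D}}(z)(z^2-z(\beta+\eta))-\beta\eta=0$, whose two branches are exactly those recorded in Lemma \ref{lemma:typwclemma1a}. With $G_{\tilde{D}}$ in hand, the spectral density comes from Stieltjes inversion $f_{\tilde{D}}(x)=-\pi^{-1}\lim_{\epsilon\to 0^+}\mathrm{imag}(G_{\tilde{D}}(x+i\epsilon))$.

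I would then extract the three constitutive pieces separately. For the bulk part, plug $z=x+i\epsilon$ into the quadratic formula and compute the limit: the imaginary part of the square root survives precisely where the discriminant $(x-(\beta+\eta))^2+4\beta\eta(x-1)$ is negative, which gives the support interval $[x_l,x_u]$ defined in \eqref{eq:typwclemma2eq3} and the explicit density \eqref{eq:typwclemma2eq5} once the $(x^2-x)$ in the denominator is tracked. For the atoms at $x=0$ and $x=1$, I would isolate the divergent $1/(i\epsilon)$ behavior of $G_{\tilde{D}}(i\epsilon)$ and $G_{\tilde{D}}(1+i\epsilon)$, using the convention $\delta(0)=-\pi^{-1}\lim_{\epsilon\to 0^+}\mathrm{imag}(1/(i\epsilon))$; the leading coefficients are $-(\beta+\eta)-|\beta-\eta|$ and $1-(\beta+\eta)+|1-(\beta+\eta)|$, producing $\max(\beta,\eta)$ and $\max(1-\beta-\eta,0)$ respectively.

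The main obstacle is the branch/sign bookkeeping: the quadratic for $G_{\tilde{D}}$ has two roots, and one must pick the correct branch (``$-$'' at $x=0$, ``$+$'' at $x=1$, and the imaginary-positive branch in the bulk) so that the resulting density is real, non-negative, and integrates to one. The cleanest way to justify the choice is to check that the imaginary part of the radicand changes sign at the critical abscissa $x_c=\beta+\eta-2\beta\eta$, which lies inside $[x_l,x_u]$; this pins down the branch globally and incidentally recovers the natural consistency $f_0+f_1+\int f^{(b)}_{\tilde{D}}=1$. Once the branch choice is fixed, bounding $0\le x_l\le x_u\le 1$ (as in \eqref{eq:typwcanl65}) and assembling the three pieces into \eqref{eq:typwclemma2eq4}--\eqref{eq:typwclemma2eq5} completes the proof.
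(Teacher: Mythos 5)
Your proposal is correct and follows essentially the same route as the paper: it establishes asymptotic freeness of the two independent Haar projectors, multiplies their $S$-transforms (Lemma \ref{lemma:typwclemma1} and its $\eta$-analogue), passes through the $R$-transform to the quadratic for $G_{\tilde{D}}$ of Lemma \ref{lemma:typwclemma1a}, and then performs Stieltjes inversion with the same branch choice pinned down by the critical point $x_c=\beta+\eta-2\beta\eta$, separating the atoms at $0$ and $1$ from the bulk on $[x_l,x_u]$. The atom weights $-(\beta+\eta)-|\beta-\eta|$ and $1-(\beta+\eta)+|1-(\beta+\eta)|$ you identify match the paper's computations in \eqref{eq:typwcanl46} and \eqref{eq:typwcanl55} exactly.
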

\begin{proof}
  Follows through a combination of (\ref{eq:typwcanl44}), (\ref{eq:typwcanl46}),  (\ref{eq:typwcanl55}),  (\ref{eq:typwcanl68}), and the above discussion.
\end{proof}

In Figure \ref{fig:ftildeDbeta01eta08} we show the spectral function obtained based on the above lemma for $\beta=0.1$ and $\eta=0.8$. We observe a very strong agreement between the simulated results and the above theoretical predictions. Simulation results were obtained using moderately large $n=4000$.

\begin{figure}[htb]
\centering
\centerline{\epsfig{figure=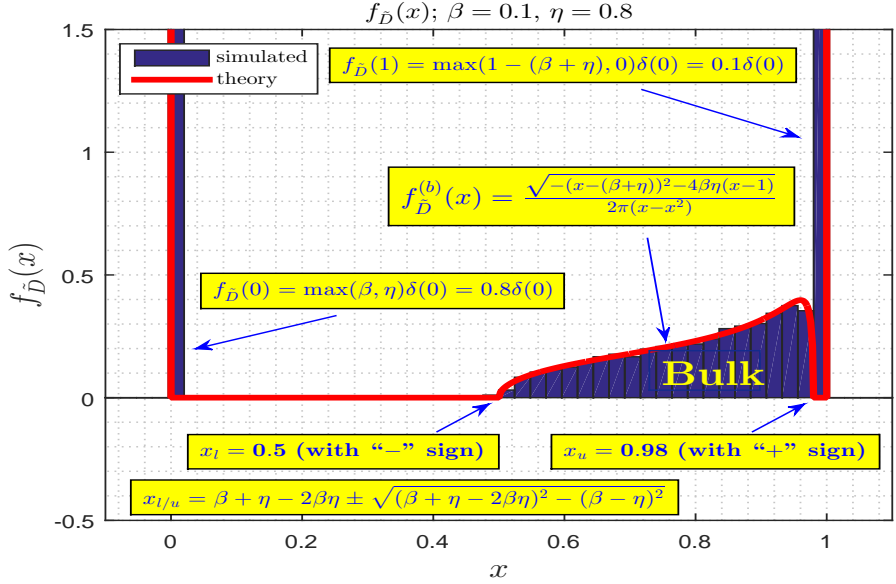,width=13.5cm,height=8cm}}
\caption{$f_{\tilde{D}}(x)$ -- spectral function of $\tilde{D}$; $\beta=0.1$ and $\eta=0.8$}
\label{fig:ftildeDbeta01eta08}
\end{figure}

In Figure \ref{fig:ftildeDbeta02eta09} we show the spectral function obtained based on the above lemma for $\beta=0.2$ and $\eta=0.9$. Due to a remarkable property of the underlying functions the spectrum is identical as in Figure \ref{fig:ftildeDbeta01eta08} apart from the fact that the multiplier of the delta function at zero is increased from $0.8$ to $0.9$ at the expense of removing the delta function at one. We also again observe a very strong agreement between the simulated results and the theoretical predictions. As in Figure \ref{fig:ftildeDbeta01eta08}, Simulation results were again obtained for $n=4000$.

\begin{figure}[htb]
\centering
\centerline{\epsfig{figure=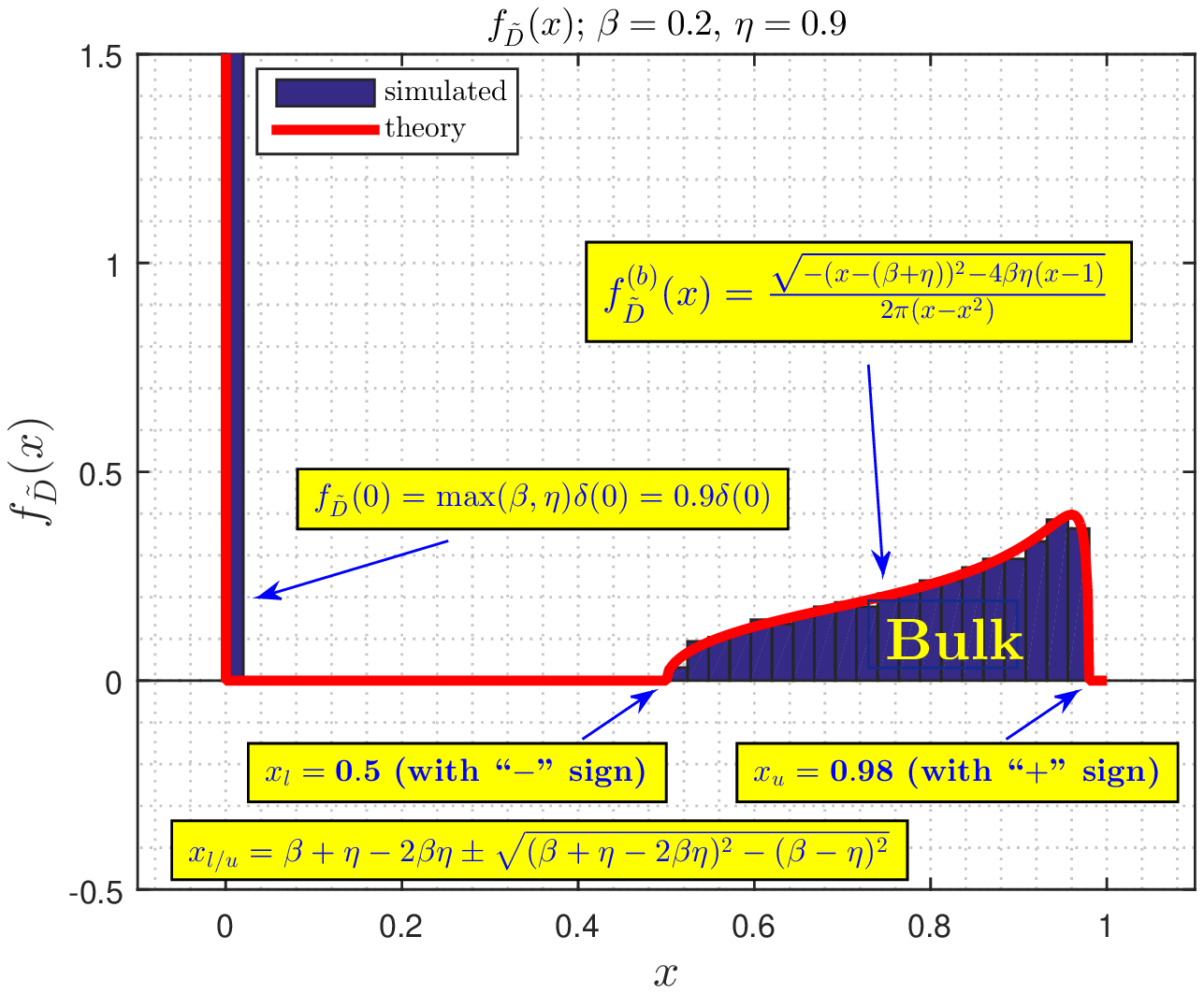,width=13.5cm,height=8cm}}
\caption{$f_{\tilde{D}}(x)$ -- spectral function of $\tilde{D}$; $\beta=0.2$ and $\eta=0.9$}
\label{fig:ftildeDbeta02eta09}
\end{figure}

\subsubsubsection{The spectrum of $\bar{D}$/$D$ -- closed form expressions}
\label{sec:fpteqvspecwcbarD}

We recall on (\ref{eq:typwcanl15}) and (\ref{eq:typwcanl16}) to set
\begin{eqnarray}
\bar{D} & \triangleq &  (I^{(l)})^TU_D^T\bV^{\perp}(\bV^{\perp})^T U_D I^{(l)}
= (\bU_D^{(\perp)})^T\bV^{\perp}(\bV^{\perp})^T \bU_D^{(\perp)}, \label{eq:typwcanl69}
\end{eqnarray}
Comparing (\ref{eq:typwcanl19}) and (\ref{eq:typwcanl69}) we observe that their spectra are modulo scalings basically identical. To be a bit more precise, $\bar{D}$ has all eigenvalues that $\tilde{D}$ has with $\eta n$ zeros less. That basically means that one needs to adjust the multiplier of $\delta(x)$ in $f_{\tilde{D}}(x)$ and to scale everything by $(1-\eta)$. The following lemma summarizes the final results of such a procedure.

\begin{lemma}\label{lemma:typwclemma3}
 Assume the setup of Lemma \ref{lemma:typwclemma2}. Let $\bar{D}$ be as in (\ref{eq:typwcanl69}), i.e.
\begin{eqnarray}
\bar{D} & \triangleq &    (\bU_D^{(\perp)})^T\bV^{\perp}(\bV^{\perp})^T \bU_D^{(\perp)}, \label{eq:typwclemma3eq1}
\end{eqnarray}
and let $x_l$ and $x_u$ be as in (\ref{eq:typwclemma2eq3}). Then the limiting spectral distribution of $\bar{D}$, $f_{\bar{D}}(x)$, is
\begin{equation}
     f_{\bar{D}}(x)     =
              \frac{(\max(\beta,\eta)-\eta)}{1-\eta}\delta(x) + f_{\bar{D}}^{(b)}(x)
              +\frac{\max(1-(\beta+\eta),0)}{1-\eta}\delta(x-1),
     \label{eq:typwclemma3eq2}
\end{equation}
with
\begin{eqnarray}
     f_{\bar{D}}^{(b)}(x)   &  =  &
            \begin{cases}
              \frac{\sqrt{-(x-(\beta+\eta))^2-4\beta\eta(x-1)}}{2\pi(x-x^2)(1-\eta)}, & \mbox{if }  x_l\leq x\leq x_u. \\
              0, & \mbox{otherwise}.
            \end{cases}
     \label{eq:typwclemma3eq3}
\end{eqnarray}
Moreover, let $D$ be as in (\ref{eq:typwcanl14}), i.e.
\begin{eqnarray}
D & \triangleq &   (I^{(l)})^T\bV^{\perp}(\bV^{\perp})^T I^{(l)}. \label{eq:typwclemma3eq4}
\end{eqnarray}
The limiting spectral distribution of $D$, $f_{D}(x)$, is
\begin{equation}
     f_{D}(x)=f_{\bar{D}}(x).
      \label{eq:typwclemma3eq5}
\end{equation}
  \end{lemma}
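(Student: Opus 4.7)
The plan is to obtain $f_{\bar{D}}$ from $f_{\tilde{D}}$ (already established in Lemma \ref{lemma:typwclemma2}) by a purely dimensional rescaling, and to deduce $f_D = f_{\bar{D}}$ by recycling the Haar-invariance reasoning already given just before (\ref{eq:typwcanl17}).

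For the first part, I set $A \triangleq \bV^{\perp}(\bV^{\perp})^T\bU_D^{\perp}\in\mR^{n\times(n-l)}$ and $B\triangleq (\bU_D^{\perp})^T\in\mR^{(n-l)\times n}$, so that $\tilde{D}=AB$ and $\bar{D}=BA$. The classical identity that $AB$ and $BA$ have identical nonzero spectra (counted with multiplicity) forces the count of one-valued eigenvalues (which are nonzero) and the bulk eigenvalue count to agree exactly, while the ambient dimension drops from $n$ to $n-l$. Hence $\bar{D}$ carries exactly $l$ fewer zero eigenvalues than $\tilde{D}$. Starting from Lemma \ref{lemma:typwclemma2}, removing $l$ units of zero-mass from the counting measure and renormalizing by $(n-l)^{-1}$ instead of $n^{-1}$ yields
\begin{equation*}
 f_{\bar{D}}(x)\;=\;\frac{n\max(\beta,\eta)-l}{n-l}\,\delta(x)\;+\;\frac{n}{n-l}\,f_{\tilde{D}}^{(b)}(x)\;+\;\frac{n\max(1-\beta-\eta,0)}{n-l}\,\delta(x-1),
\end{equation*}
which in the linear regime $l/n\to\eta$ converges to (\ref{eq:typwclemma3eq2})--(\ref{eq:typwclemma3eq3}). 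The only sanity check required is that $n\max(\beta,\eta)\geq l$, i.e. $\max(\beta,\eta)\geq\eta$, which is automatic, so the subtraction is legitimate.

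For the second identity, I invoke the identity $\bar{D}=(I^{(l)})^T U_D^T\bV^{\perp}(\bV^{\perp})^T U_D I^{(l)}$ from (\ref{eq:typwcanl15})--(\ref{eq:typwcanl16}), where $U_D$ is the Haar unitary built from $\bU_D$ and $\bU_D^{\perp}$, independent of $\bV^{\perp}$. Left-invariance of the Haar law on the Stiefel manifold (and its preservation under conditioning on an independent random unitary) gives that $U_D^T\bV^{\perp}$ is itself Haar-distributed, so $\bar{D}$ and $D$ are equal in distribution as random matrices and thus have identical limiting spectra. The only mildly delicate step in the whole argument is the dimensional bookkeeping producing the $1/(1-\eta)$ factor; everything else is a direct consequence of Lemma \ref{lemma:typwclemma2} combined with unitary invariance.
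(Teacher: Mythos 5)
Your argument is correct and follows essentially the same route as the paper: the paper likewise obtains $f_{\bar D}$ from $f_{\tilde D}$ by noting that $\bar D$ is a dimension-$(n-l)$ compression sharing the nonzero spectrum of $\tilde D$ (hence has exactly $l=\eta n$ fewer zero eigenvalues) and rescaling by $1/(1-\eta)$, and it deduces $f_D=f_{\bar D}$ from the unitary/Haar invariance already recorded at (\ref{eq:typwcanl17}). Your explicit appeal to the $AB$/$BA$ nonzero-spectrum identity and the sanity check $\max(\beta,\eta)\geq\eta$ merely makes precise the bookkeeping that the paper states informally.
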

\begin{proof}
 The part that relates to the spectral distribution of $\bar{D}$ follows by removing $l=\eta n$ zeros from the spectrum of $\tilde{D}$ and appropriately scaling the residual pdf by $(1-\eta)$. The part that relates to the spectral distribution of $D$ follows from (\ref{eq:typwcanl17}) which itself is a consequence of the spectral invariance under unitary multiplications.
\end{proof}

In Figure \ref{fig:fDbeta01eta08} we show the spectral function obtained based on the above lemma for $\beta=0.1$ and $\eta=0.8$. We observe a very strong agreement between the simulated results and the above theoretical predictions. Simulation results were obtained using moderately large $n=4000$.

\begin{figure}[htb]
\centering
\centerline{\epsfig{figure=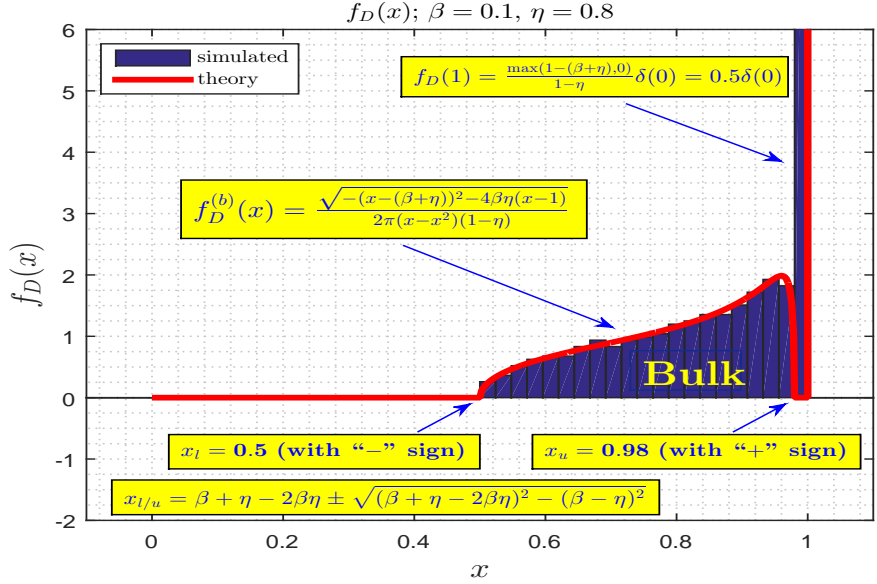,width=13.5cm,height=8cm}}
\caption{$f_{D}(x)$ -- spectral function of $D$; $\beta=0.1$ and $\eta=0.8$}
\label{fig:fDbeta01eta08}
\end{figure}

\subsubsubsection{The spectrum of $Q$ -- closed form expressions}
\label{sec:fpteqvspecwcQ}

We start by recalling on (\ref{eq:typwcanl14})
\begin{eqnarray}
Q=D^{-1}-I . \label{eq:typwcanl70}
\end{eqnarray}
Almost all of the above holds without explicitly assuming $k\leq l$. From this point on such an assumption is needed. Since $k\leq l$ means $\beta\leq \eta$ one has that $D$ has no zeros in its spectrum and can be inverted. Since the portion of the spectrum at one remains the same after the inversion, one then basically has that the spectrum of $Q$ is the same as the spectrum of $D$ modulo the inversion of the bulk of $D$. After a change of variables $y=\frac{1}{x}$ one has for the spectral distribution of the inverted bulk
\begin{eqnarray}
     f_{\bar{D}^{-1}}^{(b)}(y)   &  =  &
            \begin{cases}
              \frac{-\sqrt{-(\frac{1}{y}-(\beta+\eta))^2-4\beta\eta(\frac{1}{y}-1)}}{2\pi(\frac{1}{y}-(\frac{1}{y})^2)(1-\eta)y^2}, & \mbox{if }  \frac{1}{x_u}\leq y\leq \frac{1}{x_l}. \\
              0, & \mbox{otherwise},
            \end{cases}
     \label{eq:typwcanl71}
\end{eqnarray}
which after elementary algebraic transformations becomes
\begin{eqnarray}
     f_{\bar{D}^{-1}}^{(b)}(x)   &  =  &
            \begin{cases}
              \frac{\sqrt{-(1-x(\beta+\eta))^2-4\beta\eta x(1-x)}}{2\pi(1-x)(1-\eta)x}, & \mbox{if }  \frac{1}{x_u}\leq x\leq \frac{1}{x_l}. \\
              0, & \mbox{otherwise},
            \end{cases}
     \label{eq:typwcanl72}
\end{eqnarray}
Noting that $\beta\leq \eta$ implies $\max(\beta,\eta)-\eta=0$, one can combine (\ref{eq:typwcanl72} together with (\ref{eq:typwclemma3eq2}) to obtain
\begin{equation}
     f_{\bar{D}^{-1}}(x)     =
  f_{\bar{D}^{-1}}^{(b)}(x)
              +\frac{\max(1-(\beta+\eta),0)}{1-\eta}\delta(x-1).
     \label{eq:typwcanl73}
\end{equation}
Finally adjusting for a subtracted identity matrix corresponds to subtracting one from any point in the spectrum (or basically to shifting the entire spectral distribution to the left by one). In other words
\begin{equation}
     f_{Q}(x) = f_{\bar{D}^{-1}-I}(x)  =
  f_{\bar{D}^{-1}-I}^{(b)}(x)
              +\frac{\max(1-(\beta+\eta),0)}{1-\eta}\delta(x),
     \label{eq:typwcanl75}
\end{equation}
with
\begin{eqnarray}
    f_{Q}^{(b)}(x) \triangleq f_{\bar{D}^{-1}-I}^{(b)}(x)   &  =  &
            \begin{cases}
              \frac{\sqrt{-(1-(x+1)(\beta+\eta))^2-4\beta\eta x(x+1)}}{2\pi x(x+1)(1-\eta)}, & \mbox{if }  \frac{1}{x_u}-1\leq x\leq \frac{1}{x_l}-1. \\
              0, & \mbox{otherwise},
            \end{cases}
     \label{eq:typwcanl76}
\end{eqnarray}

The following lemma summarizes the key results of this section.

\begin{lemma}\label{lemma:typwclemma4}
 Assume the setup of Lemma \ref{lemma:typwclemma3}. Let $Q$ be as in (\ref{eq:typwcanl14}), i.e.
\begin{eqnarray}
Q\triangleq D^{-1}-I = \lp  (I^{(l)})^T\bV^{\perp}(\bV^{\perp})^T I^{(l)} \rp^{-1} - I, \label{eq:typwclemma4eq1}
\end{eqnarray}
and let $x_l$ and $x_u$ be as in (\ref{eq:typwclemma2eq3}). Then the limiting spectral distribution of $Q$, $f_{Q}(x)$, is
\begin{equation}
     f_{Q}(x)     =
                f_{Q}^{(b)}(x)
              +\frac{\max(1-(\beta+\eta),0)}{1-\eta}\delta(x),
     \label{eq:typwclemma4eq2}
\end{equation}
with
\begin{eqnarray}
    f_{Q}^{(b)}(x)     &  =  &
            \begin{cases}
              \frac{\sqrt{-(1-(x+1)(\beta+\eta))^2-4\beta\eta x(x+1)}}{2\pi x(x+1)(1-\eta)}, & \mbox{if }  \frac{1}{x_u}-1\leq x\leq \frac{1}{x_l}-1, \\
              0, & \mbox{otherwise}.
            \end{cases}
     \label{eq:typwclemma4eq3}
\end{eqnarray}
   \end{lemma}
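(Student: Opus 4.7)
The plan is to obtain $f_Q(x)$ by transforming the spectral distribution $f_D$ provided by Lemma \ref{lemma:typwclemma3} through the two operations that relate $D$ to $Q$, namely inversion and subtraction of the identity. The assumption $k\leq l$ (equivalently $\beta\leq\eta$) is crucial at the outset because it forces $\max(\beta,\eta)-\eta=0$, which removes the delta component at $x=0$ from $f_{\bar D}$ in (\ref{eq:typwclemma3eq2}). Consequently $D$ has (asymptotically) no zero eigenvalues, the pseudo-inverse coincides with the genuine inverse, and $D^{-1}$ is a well-defined object whose spectral distribution I can compute by a change of variables.

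First I would push $f_D=f_{\bar D}$ through the map $x\mapsto 1/x$. Since a density transforms as $f(x)\,dx = f(1/y)\,y^{-2}\,dy$, applying this to the bulk piece in (\ref{eq:typwclemma3eq3}) gives (\ref{eq:typwcanl71}) and, after elementary algebraic simplification, the cleaner form (\ref{eq:typwcanl72}) supported on $[1/x_u,1/x_l]$. The point mass at $x=1$ is invariant under $x\mapsto 1/x$, so it is inherited by $f_{D^{-1}}$ with its multiplier $\max(1-(\beta+\eta),0)/(1-\eta)$ unchanged, producing (\ref{eq:typwcanl73}).

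Next, since $Q=D^{-1}-I$, subtracting the identity matrix shifts every eigenvalue by $-1$, so $f_Q(x)=f_{D^{-1}}(x+1)$. Applying this shift to (\ref{eq:typwcanl73}) translates the bulk support to $[1/x_u-1,\ 1/x_l-1]$ and moves the delta mass from $x=1$ to $x=0$, producing (\ref{eq:typwclemma4eq2})--(\ref{eq:typwclemma4eq3}) directly. The edges $x_l,x_u$ inherited here are those already defined in (\ref{eq:typwclemma2eq3}).

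The main obstacle is purely computational rather than conceptual: one must carefully carry out the substitution $x\mapsto 1/x$ in the square-root numerator of (\ref{eq:typwclemma3eq3}), clear the factor $y^{-2}$ generated by the Jacobian, and then multiply through by $x(x+1)$ after the further translation $x\mapsto x+1$, in order to arrive at the compact form $\sqrt{-(1-(x+1)(\beta+\eta))^2-4\beta\eta\, x(x+1)}$ in the numerator of (\ref{eq:typwclemma4eq3}). Aside from verifying that the integrand remains nonnegative on the claimed support (which is automatic since $D^{-1}-I\succeq 0$ when $\beta\leq\eta$), nothing beyond bookkeeping of signs and factors is required; the probabilistic content is entirely absorbed into Lemma \ref{lemma:typwclemma3}.
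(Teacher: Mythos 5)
Your proposal takes exactly the same route as the paper: use $\beta\leq\eta$ to kill the delta at $0$ in $f_{\bar D}$ and render $D$ invertible, then transform the bulk of $f_{\bar D}$ under $x\mapsto 1/x$ (picking up the Jacobian $1/x^2$), carry the point mass at $1$ through (it is a fixed point of the inversion), and finally translate everything by $-1$. The weight of the surviving delta and the new bulk support $[1/x_u-1,\,1/x_l-1]$ follow immediately, so the probabilistic content is indeed all in Lemma~\ref{lemma:typwclemma3}; the rest is bookkeeping, as you say.

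One caution: if you actually carry out the bookkeeping you will not land on (\ref{eq:typwclemma4eq3}) exactly as printed. Pushing
\[
f_{\bar D}^{(b)}(x)=\frac{\sqrt{-(x-(\beta+\eta))^2-4\beta\eta(x-1)}}{2\pi(x-x^2)(1-\eta)}
\]
through $x\mapsto 1/y$ gives
\[
f_{\bar D^{-1}}^{(b)}(y)=\frac{\sqrt{-(1-y(\beta+\eta))^2-4\beta\eta\,y(1-y)}}{2\pi\,y(y-1)(1-\eta)},
\]
and then $y\mapsto x+1$ (so $1-y=-x$) turns the radicand into $-(1-(x+1)(\beta+\eta))^2+4\beta\eta\,x(x+1)$, with a \emph{plus} sign on the cross term. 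The paper's statement (and the intermediate displays (\ref{eq:typwcanl71})--(\ref{eq:typwcanl76}), which carry a stray leading minus sign from (\ref{eq:typwcanl71}) onward) writes $-4\beta\eta\,x(x+1)$; with that sign the radicand is negative throughout $(1/x_u-1,\,1/x_l-1)$ (e.g.\ for $\beta=0.1,\eta=0.8$ at $x=1$ it equals $-1.28$), so the printed formula cannot be the density. This is a sign typo in the paper rather than a flaw in your plan, but since your write-up asserts the steps ``arrive at the compact form'' shown, you should flag the correction rather than reproduce the misprint. Your remaining observations --- that the delta multiplier is untouched, that the support endpoints are $1/x_u-1$ and $1/x_l-1$, and that $Q\succeq 0$ under $k\leq l$ --- are all right and match the paper's (terse) proof.
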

\begin{proof}
 Follows from the above discussion.
 \end{proof}

In Figure \ref{fig:fQbeta01eta08} we show the spectral function, $f_{Q}(x)$, obtained based on the above lemma for $\beta=0.1$ and $\eta=0.8$. One observes a very strong agreement between what the theory predicts and what the simulations produce. As in all earlier experiments $n=4000$ was used here again.

\begin{figure}[htb]
\centering
\centerline{\epsfig{figure=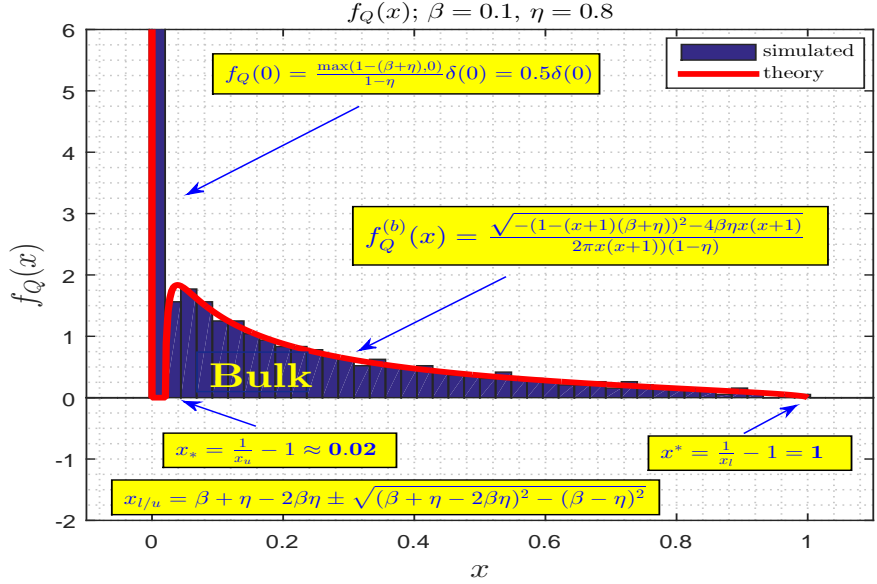,width=13.5cm,height=8cm}}
\caption{$f_{Q}(x)$ -- spectral function of $Q$; $\beta=0.1$ and $\eta=0.8$}
\label{fig:fQbeta01eta08}
\end{figure}

\subsubsection{$\ell_0^*-\ell_1^*$-equivalence via the spectral limit}
\label{sec:fpteqvsplim}

From Corollary \ref{cor:cinfcor3}, (\ref{eq:cinfcor3eq1}), (\ref{eq:cinfcor3eq2}), and (\ref{eq:typwcanl12}) one has in the \emph{\textbf{worst case}}
\begin{equation}\label{eq:typwcanl77}
  \ell_0^*-\ell_1^*-\mbox{equivalence} \quad\Longleftrightarrow \quad \lambda_{max}(Q)\leq 1.
\end{equation}
From Lemma \ref{lemma:typwclemma4} we have
\begin{equation}\label{eq:typwcanl78}
 \lambda_{max}(Q)=\frac{1}{x_l}-1,
\end{equation}
with $x_l$ as in (\ref{eq:typwclemma2eq3}). Moreover, from (\ref{eq:typwcanl77}) and (\ref{eq:typwcanl78}), one arrives at the following necessary and sufficient condition to achieve the $\ell_0^*-\ell_1^*$-equivalence
\begin{equation}\label{eq:typwcanl79}
  \ell_0^*-\ell_1^*-\mbox{equivalence} \quad\Longleftrightarrow \quad \frac{1}{x_l}-1\leq 1,
 \end{equation}
 or
\begin{equation}\label{eq:typwcanl80}
  \ell_0^*-\ell_1^*-\mbox{equivalence} \quad\Longleftrightarrow \quad \frac{1}{2}\leq x_l.
 \end{equation}
 Recalling on (\ref{eq:typwclemma2eq3})
\begin{eqnarray}
  x_l & \triangleq & \beta+\eta-2\beta\eta  -  \sqrt{(\beta+\eta-2\beta\eta)^2-(\beta-\eta)^2},
   \label{eq:typwcanl81}
\end{eqnarray}
 and combining further with (\ref{eq:typwcanl80}) we have
\begin{eqnarray}
\begin{array}{rrcl}
 & \frac{1}{2} & \leq & x_l \\
\Longleftrightarrow & \frac{1}{2} & = & \beta+\eta-2\beta\eta  -  \sqrt{(\beta+\eta-2\beta\eta)^2-(\beta-\eta)^2} \\
\Longleftrightarrow &  (\beta+\eta-2\beta\eta)^2-(\beta-\eta)^2 & \leq & \lp\beta+\eta-2\beta\eta  - \frac{1}{2}\rp^2 \\
\Longleftrightarrow &   (\beta+\eta-2\beta\eta)^2-(\beta-\eta)^2 & \leq & \lp\beta+\eta-2\beta\eta \rp^2 - \lp\beta+\eta-2\beta\eta \rp + \frac{1}{4} \\
\Longleftrightarrow &    -(\beta-\eta)^2  & \leq & - \lp\beta+\eta-2\beta\eta \rp + \frac{1}{4} \\
\Longleftrightarrow &  0 & \leq & \beta^2+\eta^2 - \lp\beta+\eta \rp + \frac{1}{4}  \\
\Longleftrightarrow &  \eta-\eta^2  & \leq &  \lp \frac{1}{2} - \beta \rp^2\\
\Longleftrightarrow &   \beta & \leq & \frac{1}{2} - \sqrt{\eta-\eta^2}.
\end{array}
  \label{eq:typwcanl82}
\end{eqnarray}
From (\ref{eq:typwcanl80}) and (\ref{eq:typwcanl82}) we finally have
\begin{equation}\label{eq:typwcanl83}
  \ell_0^*-\ell_1^*-\mbox{equivalence} \quad\Longleftrightarrow \quad \beta  \leq  \frac{1}{2} - \sqrt{\eta-\eta^2}.
 \end{equation}

We are now in position to formalize the key causal inference results that establish the so-called phase-transition phenomenon as well as its a precise \emph{worst case} location in a \emph{typical} statistical scenario.

\begin{theorem}(\textbf{\bl{$\ell_1^*$ -- phase transition -- C-inf (typical \prp{\underline{worst case}})}})
  Consider a rank-$k$ matrix  $X_{sol}=X\in\mR^{n\times n}$ with the Haar distributed (\emph{not necessarily independent}) bases of its orthogonal row and column spans $\bU^{\perp}\in\mR^{n\times (n-k)}$ and $\bV^{\perp}\in\mR^{n\times (n-k)}$ ($X_{sol}^T\bU^{\perp}=X_{sol}\bV^{\perp}=\0_{n\times (n-k)}$). Let $M\triangleq M^{(l)}\in\mR^{n\times n}$ be as defined in (\ref{eq:cinf2}) or (\ref{eq:cinfanl2a}). Assume a large $n$ linear regime with $\beta\triangleq\lim_{n\rightarrow\infty}\frac{k}{n}$ and $\eta\triangleq\lim_{n\rightarrow\infty}\frac{l}{n}$ and let $\beta_{wc}$ and $\eta$ satisfy the  following
\begin{center}
 \begin{tcolorbox}[beamer,title=\textbf{C-inf $\ell_1^*$ \yellow{worst case} phase transition (PT)} characterization,lower separated=false, colback=yellow!95!green!40!white,
colframe=red!75!blue!60!black,fonttitle=\bfseries,width=5in]
  \begin{equation}\label{eq:typwcthm1eq1}
    \xi_{\eta}^{(wc)}(\beta) \triangleq \beta-\frac{1}{2}+\sqrt{\eta-\eta^2}=0.
  \end{equation}
 \end{tcolorbox}
\end{center}
 \noindent \textbf{If and only if} $\beta\leq \beta_{wc}$
    \begin{equation}\label{eq:typwcthm1eq2}
   \lim_{n\rightarrow\infty} \mP(\ell_0^*\Longleftrightarrow \ell_1^*)=\ \lim_{n\rightarrow\infty} \mP(\mathbf{RMSE}=0)=1,
  \end{equation}
and the solutions of (\ref{eq:genmcl0posmmt}) and (\ref{eq:genmcl1posmmt}) coincide with overwhelming probability.
  \label{thm:typwcthm1}
\end{theorem}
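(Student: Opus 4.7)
The plan is to show that the result is essentially a crystallization of the spectral machinery already built in the excerpt, with two remaining tasks: identifying the correct ``worst case'' to analyze, and upgrading the limiting spectral density statement of Lemma \ref{lemma:typwclemma4} to an overwhelming-probability claim about the extreme eigenvalue. I would begin by invoking Corollary \ref{cor:cinfcor3}, which says that the $\ell_0^* \Longleftrightarrow \ell_1^*$ equivalence is equivalent to $\lambda_{max}(\Lambda_V^T\Lambda_V\Lambda_U^T\Lambda_U)\leq 1$. The worst-case reduction then comes from the two facts already recorded in (\ref{eq:typwcanl11})--(\ref{eq:typwcanl120a}): one always has the upper bound $\lambda_{max}(\Lambda_V^T\Lambda_V\Lambda_U^T\Lambda_U)\leq \lambda_{max}(\Lambda_V^T\Lambda_V)\lambda_{max}(\Lambda_U^T\Lambda_U)$, and this bound is saturated in the worst-case configuration $U=V$ (where it equals $(\lambda_{max}(\Lambda_V^T\Lambda_V))^2$). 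Hence the typical worst-case equivalence holds if and only if $\lambda_{max}(\Lambda_V^T\Lambda_V)\leq 1$, which by the spectral bridge in (\ref{eq:typwcanl13})--(\ref{eq:typwcanl14}) is equivalent to $\lambda_{max}(Q)\leq 1$.

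Next I would feed this into Lemma \ref{lemma:typwclemma4}. The limiting spectral density $f_Q(x)$ is supported on $[1/x_u-1,\, 1/x_l-1]$ together with a point mass at zero, so the limiting top eigenvalue is exactly $\lambda_{max}(Q)=1/x_l-1$ with $x_l=\beta+\eta-2\beta\eta-\sqrt{(\beta+\eta-2\beta\eta)^2-(\beta-\eta)^2}$. The condition $\lambda_{max}(Q)\leq 1$ thus becomes $x_l\geq 1/2$, and the algebraic chain in (\ref{eq:typwcanl82}) simplifies this to $\beta\leq \tfrac{1}{2}-\sqrt{\eta-\eta^2}$, which is precisely $\xi_\eta^{(wc)}(\beta)\leq 0$. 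This gives the ``if and only if'' in the deterministic, limiting sense.

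The remaining step, which I expect to be the main obstacle, is the concentration argument that upgrades the limiting spectral density to an overwhelming-probability statement about $\lambda_{max}(Q)$ at finite $n$. For this I would rely on standard large-deviation / concentration results for extreme eigenvalues of Haar-invariant projectors: since $Q$ is built from orthogonal projectors onto independent Haar-distributed subspaces, one can invoke concentration of spectral measure (e.g., Lipschitz concentration on the unitary group, or the classical no-outlier results for free multiplicative convolutions) to show that $\lambda_{max}(Q)$ converges in probability, with exponentially small deviation probability, to the upper edge $1/x_l-1$ predicted by Lemma \ref{lemma:typwclemma4}. Together with the strict inequality in the subcritical regime $\beta<\beta_{wc}$, this yields $\mP(\lambda_{max}(Q)\leq 1)\to 1$, hence $\mP(\ell_0^*\Longleftrightarrow \ell_1^*)\to 1$; in the supercritical regime $\beta>\beta_{wc}$ the same concentration, applied on the other side of the edge, produces $\lambda_{max}(Q)>1$ with overwhelming probability and thus failure of the equivalence.

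Finally, I would close the argument by noting that the overwhelming-probability conclusion is precisely the statement in (\ref{eq:typwcthm1eq2}): $\mathbf{RMSE}=0$ follows from the equivalence via the reformulation in Corollary \ref{cor:cinfcor1}, and the case $\beta=\beta_{wc}$ lies on the boundary (a measure-zero event) and can be absorbed by the non-strict inequality in Corollary \ref{cor:cinfcor1}. The hardest part is genuinely the finite-$n$ concentration around the upper spectral edge; everything else is a packaging of the deterministic spectral computations already carried out in Sections \ref{sec:fpteqv}--\ref{sec:fpteqvspecwcQ}.
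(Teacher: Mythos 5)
Your proposal is correct and follows essentially the same route as the paper: Corollary~\ref{cor:cinfcor3} for the spectral condition, the reduction to the worst case $\bU^\perp=\bV^\perp$ via~(\ref{eq:typwcanl11})--(\ref{eq:typwcanl120a}), Lemma~\ref{lemma:typwclemma4} for $\lambda_{max}(Q)=1/x_l-1$, and the algebraic chain~(\ref{eq:typwcanl77})--(\ref{eq:typwcanl83}). The only genuine addition is that you flag and sketch the finite-$n$ concentration/no-outlier step (edge convergence of the extreme eigenvalue to the FPT-predicted support boundary), which the paper treats implicitly under ``in the large $n$ limit due to the concentrations''; this is a fair point, but it does not constitute a different method.
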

\begin{proof}
The ``if" part follows through a combination of Theorem \ref{thm:cinfthm1}, Corollary \ref{cor:cinfcor3}, Lemma \ref{lemma:typwclemma4}, and the above discussion from  (\ref{eq:typwcanl77}) to (\ref{eq:typwcanl83}). The ``only if" part additionally assumes $\bU^{\perp}=\bV^{\perp}$ and then relying on (\ref{eq:typwcanl12}) and (\ref{eq:typwcanl120a}) ensures that the results are in the worst case achievable.
\end{proof}

The results obtained based on the above theorem are shown in Figure \ref{fig:cinfspectypwcPTbetaeta}. As can be seen from the figure, the phase transition curve splits the entire $(\beta,\eta)$ region into two subregions. The first of the subregions is below (or to the right of) the curve and in that region the $\ell_0^*-\ell_1^*$-equivalence phenomenon occurs. This means that one can recover $X_{sol}$ masked by $M$ as in (\ref{eq:genmcl0posmmt}) via the $\ell_1^*$ heuristic from (\ref{eq:genmcl1posmmt}) with the residual mean square error ($\mathbf{RMSE}$) equal to zero. In other words, for the system parameters $(\beta,\eta)$ that belong to the subregion below the curve one has a perfect recovery with $X_{sol}$ and $\hat{X}$ (the respective solutions of (\ref{eq:genmcl0posmmt}) and (\ref{eq:genmcl1posmmt})) being equal to each other and consequently with $\mathbf{RMSE}=\|\vecw(\hat{X})-\vecw(X_{sol})\|_2=0$. On the other hand, in the subregion above the curve, the $\ell_1^*$ heuristic fails and one can even find an $X_{sol}$ for which $\mathbf{RMSE}\rightarrow\infty$.

\begin{figure}[htb]
\centering
\centerline{\epsfig{figure=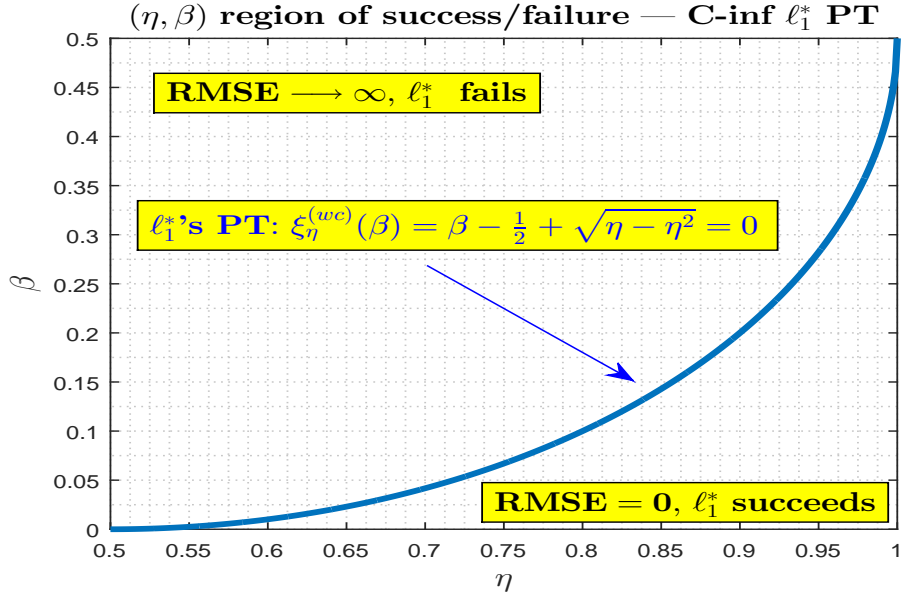,width=13.5cm,height=8cm}}
\caption{Causal inference (\bl{\textbf{C-inf}}) -- typical \emph{\textbf{worst case}} $\ell_1^*$ phase transition}
\label{fig:cinfspectypwcPTbetaeta}
\end{figure}

We make a few remarks that relate to some general features of matrix completion, and their differences with respect with standard compressed sensing. Even in the so-called random-masking scenario (where elements of $M$ take values $0/1$ with probability one half), one may have troubles recovering low-rank matrices. For example, in such a scenario, it is statistically unlikely that one can guarantee universal recovery even of rank-$1$ matrices. To see this, one can choose rank-$1$ $X_{sol}$ with one in the upper left corner and all other elements equal to zero. Then such a matrix will be recoverable from $M\circ X_{sol}$ only if the element in the upper left corner of $M$ is one. Since that happens with probability 1/2, one can not have a reliable recovery with probability going to one as $n\rightarrow \infty$. It is the discreteness in the process of acquiring observations $Y$ that enables scenarios like this, and makes the matrix completion (MC) substantially different from its compressed sensing vector analogue. In that light, it is somewhat surprising that any form of the phase-transition can be established in the linear regime. The key behind the success of the above machinery is the focus on the \textbf{\emph{typical}} worst case and the causal inference internal structure. In the vector compressed sensing setup, the above described scenario cannot happen and one consequently does not need to resort to the typicality and instead can formulate more generic phase transition concepts (more on the \emph{non-typical} compressed sensing approaches can be found in, e.g. \cite{DonohoPol,StojnicCSetam09,StojnicICASSP10var} and on the corresponding \emph{typical} ones in, e.g. \cite{BayMon10,DonMalMon09}).

\begin{corollary}(\textbf{\bl{$\ell_1^*$ -- phase transition -- C-inf (typical \prp{\underline{worst case}}; standard $(\alpha,\beta)$ representation)}})
   Assume the setup of Theorem \ref{thm:typwcthm1}. Let $m$ be the total number of ones in matrix $M$ and let $\alpha\triangleq\lim_{n\rightarrow\infty}\frac{m}{n^2}$. Let $\beta$ and $\alpha_w$ satisfy the
\begin{center}
 \begin{tcolorbox}[beamer,title=\textbf{C-inf $\ell_1^*$ \yellow{worst case} PT (standard $(\alpha,\beta)$ representation)},lower separated=false, colback=yellow!95!green!40!white,
colframe=red!75!blue!60!black,coltext=black,fonttitle=\bfseries,width=5in]
  \begin{equation}\label{eq:typwccor1eq1}
    \xi_{\beta}^{(wc,s)}(\alpha) \triangleq \beta-\frac{1}{2}+\sqrt{\sqrt{1-\alpha}-1+\alpha}=0.
  \end{equation}
 \end{tcolorbox}
\end{center}
\noindent \textbf{If and only if} $\alpha\geq \alpha_{w}$
    \begin{equation}\label{eq:typwcthm1eq2}
   \lim_{n\rightarrow\infty} \mP(\ell_0^*\Longleftrightarrow \ell_1^*)=\ \lim_{n\rightarrow\infty} \mP(\mathbf{RMSE}=0)=1,
  \end{equation}
and the solutions of (\ref{eq:genmcl0posmmt}) and (\ref{eq:genmcl1posmmt}) coincide with overwhelming probability.
  \label{cor:typwccor1}
\end{corollary}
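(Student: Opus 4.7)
The plan is to derive the corollary directly from Theorem~\ref{thm:typwcthm1} via a change of variables, since the only difference between the two statements is the parametrization of the mask density: Theorem~\ref{thm:typwcthm1} uses $\eta = \lim_{n\to\infty} l/n$ (the normalized treatment time), while the corollary uses $\alpha = \lim_{n\to\infty} m/n^2$ (the normalized number of observed entries). Both describe the same phase-transition curve, and essentially all the heavy lifting (spectral characterization of $Q$, Lemmas~\ref{lemma:typwclemma1}--\ref{lemma:typwclemma4}, and the derivation in \eqref{eq:typwcanl77}--\eqref{eq:typwcanl83}) has already been done.

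First I would recall the identity \eqref{eq:cinfanl2b}, rewritten in the large-$n$ limit via \eqref{eq:typwcanl21} as
\begin{equation*}
\alpha = 1 - (1-\eta)^2.
\end{equation*}
Since $l \leq n$, we have $\eta \in [0,1]$, so $1-\eta \geq 0$ and hence $1-\eta = \sqrt{1-\alpha}$, i.e.\ $\eta = 1 - \sqrt{1-\alpha}$. This gives a bijection between $\eta \in [0,1]$ and $\alpha \in [0,1]$, which moreover is monotone increasing. In particular, fixing $\beta$, the condition ``$\eta \leq \eta_{wc}(\beta)$'' is equivalent to ``$\alpha \leq \alpha_{wc}(\beta)$'' with $\alpha_{wc}(\beta) = 1 - (1-\eta_{wc}(\beta))^2$. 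The direction of the inequality is what switches $\beta \leq \beta_{wc}$ at fixed $\eta$ into $\alpha \geq \alpha_w$ at fixed $\beta$: more observed entries corresponds to later treatment time.

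Next I would compute the expression under the square root in \eqref{eq:typwcthm1eq1}:
\begin{equation*}
\eta - \eta^2 = \eta(1-\eta) = \bigl(1 - \sqrt{1-\alpha}\bigr)\sqrt{1-\alpha} = \sqrt{1-\alpha} - (1-\alpha) = \sqrt{1-\alpha} - 1 + \alpha.
\end{equation*}
Substituting this identity into $\xi_{\eta}^{(wc)}(\beta) = \beta - \tfrac{1}{2} + \sqrt{\eta - \eta^2}$ produces exactly $\xi_{\beta}^{(wc,s)}(\alpha) = \beta - \tfrac{1}{2} + \sqrt{\sqrt{1-\alpha} - 1 + \alpha}$, matching \eqref{eq:typwccor1eq1}. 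Note that $\sqrt{1-\alpha} - 1 + \alpha \geq 0$ for all $\alpha \in [0,1]$ (equivalently $\eta - \eta^2 \geq 0$), so the outer square root is well-defined throughout the relevant range.

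Finally I would invoke Theorem~\ref{thm:typwcthm1}: the event ``$\ell_0^* \Longleftrightarrow \ell_1^*$ with overwhelming probability'' is equivalent to $\xi_{\eta}^{(wc)}(\beta) \leq 0$, which by the substitution above is equivalent to $\xi_{\beta}^{(wc,s)}(\alpha) \leq 0$, and by monotonicity of the map $\eta \mapsto \alpha$ and the ``$\beta \leq \beta_{wc}$'' condition, to $\alpha \geq \alpha_w$, thereby giving both the ``if'' and ``only if'' parts and equation \eqref{eq:typwcthm1eq2} follows. There is no serious obstacle here; the only mild subtlety worth flagging explicitly is the direction reversal in the inequality (larger $\alpha$ corresponds to larger $\eta$, but the equivalence threshold is $\beta \leq \tfrac{1}{2} - \sqrt{\eta-\eta^2}$, which becomes easier to satisfy as $\eta$ decreases below $1/2$ and harder as $\eta$ approaches $1/2$), so one should sanity-check the curve at the endpoints $\alpha \in \{0,1\}$ to confirm the parametrization.
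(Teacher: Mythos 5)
Your proof is correct and takes essentially the same route as the paper: the paper's own proof is a single line invoking the identity $\alpha = 1-(1-\eta)^2$ (from $m = n^2-(n-l)^2$) and then substituting into the phase-transition curve of Theorem~\ref{thm:typwcthm1}. You carry out exactly this change of variables, correctly inverting to $\eta = 1-\sqrt{1-\alpha}$ and verifying the algebraic identity $\eta-\eta^2 = \sqrt{1-\alpha}-1+\alpha$ that transforms $\xi_\eta^{(wc)}$ into $\xi_\beta^{(wc,s)}$. If anything, you are more explicit than the paper about the non-negativity of the expression under the outer root and about the need to track how the inequality flips from $\beta\le\beta_{wc}$ (at fixed $\eta$) to $\alpha\ge\alpha_w$ (at fixed $\beta$); you handle the latter point informally rather than rigorously (the map $\beta\mapsto\beta_{wc}(\eta)$ is not monotone in $\eta$ over $[0,1]$, so a fully precise treatment would note that the relevant branch of the threshold curve is $\eta\ge 1/2$, i.e.\ $\alpha\ge 3/4$, which is what the corollary implicitly selects), but the paper itself glosses over this in the same way, so the level of rigor is well matched.
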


\begin{proof}
  Follows immediately from Theorem \ref{thm:typwcthm1} after observing that $m=n^2-(n-l)^2$ and consequently $\alpha=1-(1-\eta)^2$.
\end{proof}

 Figure \ref{fig:cinfspectypwcPTbetaeta} shows the results obtained based on the above corollary in the standard $(\alpha,\beta)$ region format. As earlier, in the subregion to the right of (or below) the curve $\mathbf{RMSE}=\|\vecw(\hat{X})-\vecw(X_{sol})\|_2=0$. In the subregion to the left (or above) the curve, the $\ell_1^*$ heuristic generally fails and $\mathbf{RMSE}\rightarrow\infty$ can even be achieved.

\begin{figure}[htb]
\centering
\centerline{\epsfig{figure=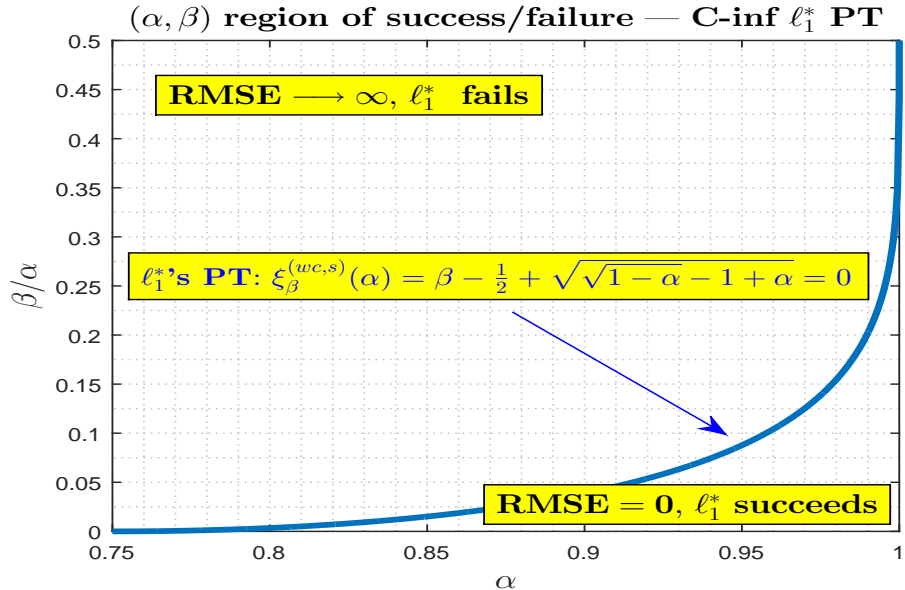,width=13.5cm,height=8cm}}
\caption{Causal inference (\bl{\textbf{C-inf}}) -- typical \emph{\textbf{worst case}} $\ell_1^*$ phase transition ($(\alpha,\beta)$ region)}
\label{fig:cinfspectypwcPTbetaeta}
\end{figure}

\subsection{Numerical results}
\label{sec:cpmcnumres}

We conducted a set of numerical experiments to complement the above theoretical findings and see how well the entire theoretical machinery characterizes the utilization of the $\ell_1^*$-minimization heuristic in the causal inference problems. Figure \ref{fig:numrestypacbetaetaptsim} shows the performance obtained through the numerical experiments as well as the corresponding theoretical \emph{worst case} predictions discussed above. We clearly observe the existence of the phase transition and a solid agreement between the theoretical predictions and the results obtained through the numerical experiments.

\begin{figure}[htb]
\centering
\centerline{\epsfig{figure=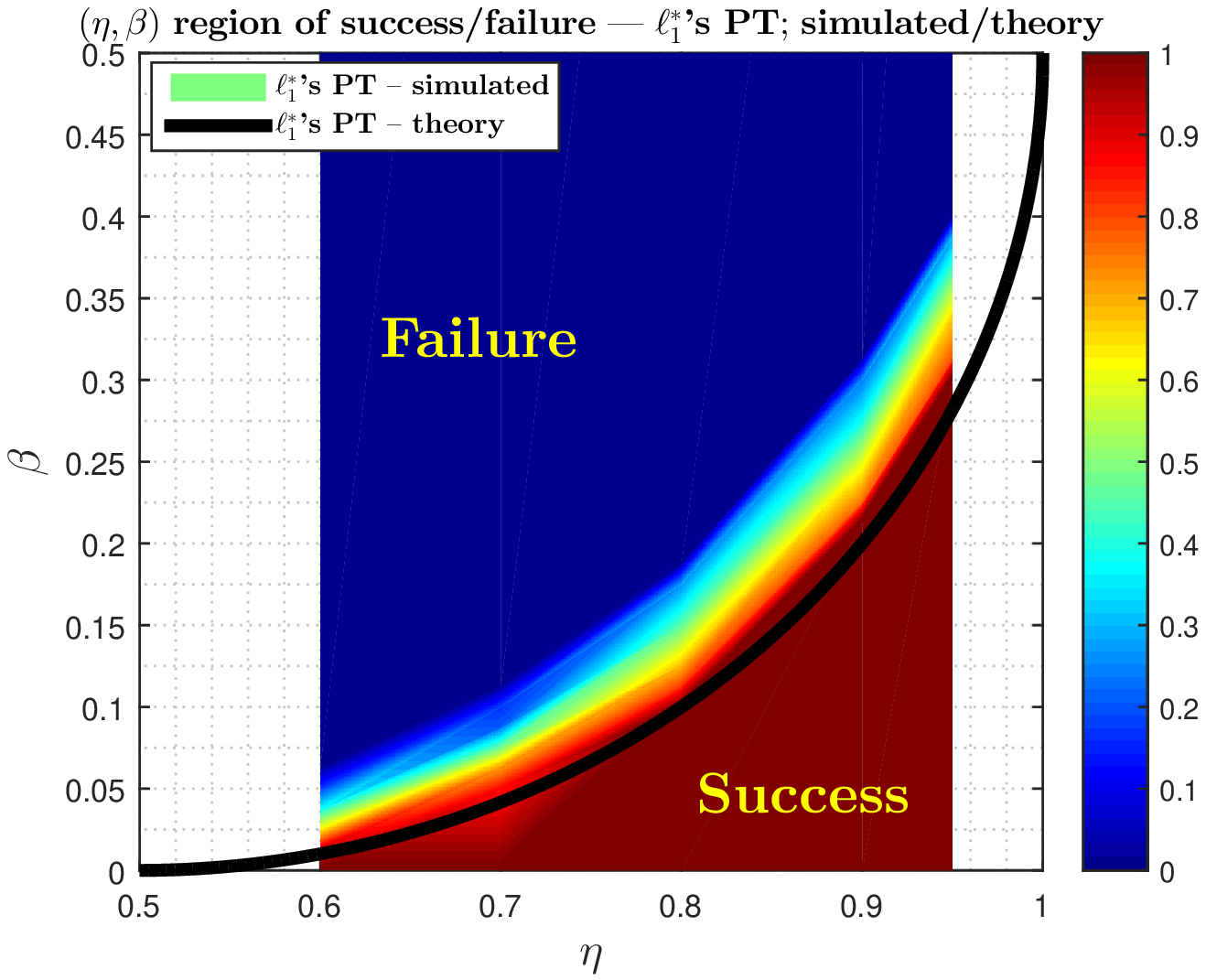,width=13.5cm,height=8cm}}
\caption{C-inf $\ell_1^{*}$'s phase transition (PT)}
\label{fig:numrestypacbetaetaptsim}
\end{figure}

We should also add that we conducted the numerical experiments for fairly small matrix sizes. On the other hand, the theoretical predictions assume large $n$ (basically an $n\rightarrow\infty$). In particular, we chose $n=80$ and $\eta$ in the range $[0.6,0.95]$. Such a choice shows that even though the theory is predicated on the large $n$ assumption, its conclusions may be applicable for smaller values of $n$ as well. Viewed a bit alternatively, the large $n$ regime, needed for the theory to properly operate, practically may start ro kick in already for not necessarily super large values of $n$. This also means that the presented theory may actually be of a practical use as well. We do, however, mention that for larger values of $n$ an even better fit between the theoretical and simulated results might be expected.

Finally, we should emphasize that in the numerical experiments here (as well as in a significant portion of the paper) we considered the so-called \emph{typical} behavior. Also, it should be mentioned that we followed into the footsteps of our theoretical analyses from the previous sections and presented the simulations results for the square matrices. With a little bit of extra effort, all of our theoretical considerations can be repeated in the non-square scenarios as well. Since the writings would be a bit more involved we found it useful to preserve the clarity of the presentation at the expense of rather trivial generalizations. Finally, in all numerical experiments that we present we chose the unknown matrices with the singular values equal to one. While a thorough discussion regarding this choice goes well beyond the scope of this paper, we just briefly recall that these types of structures typically serve as the worst case examples in establishing the reversal $\ell_0-\ell_1$-equivalence conditions. In other words, they are usually the examples that make many of our key results/theorems hold as equivalences rather than just as implications. We also conducted numerical experiments where singular values were randomly chosen with results being identical to the ones shown in Figure \ref{fig:numrestypacbetaetaptsim}
or better.

\section{Conclusion}
\label{sec:conc}

In this paper, we have explored the \prp{\textbf{\emph{Causal inference (C-inf) $\leftrightarrow$  \textbf{low-rank recovery (LRR)}}}} connection. We have analyzed how the best known convex type of heuristic, called $\ell_1^*$-minimization, fairs when used for solving the C-inf. We have shown both theoretically and numerically that in a \emph{typical} statistical context, causal inference exhibits the so-called phase transition (PT) phenomenon. This ensures that for certain range of system parameters $\ell_1^*$ succeeds in recovering the unobserved potential outcomes,
and outside such a range it fails. Moreover, we have obtained the \bl{\textbf{\emph{exact explicit}}} functional characterization for the location of the worst case phase transition (PT) curve.  As a byproduct of our analysis, we have obtained (somewhat surprisingly) that the underlying functional characterization admits a fairly simple form, which elegantly pins down the relation between the low rankness of the target C-inf matrix and the time when the treatment is applied. We also emphasize that, while establishing the mathematical methodology to provide the C-inf PT characterizations, we uncovered a rather interesting connection between \bl{\textbf{\emph{C-inf}}} via LRR and the free probability theory (\bl{\textbf{\emph{FPT}}}) from modern spectral theory of random matrices. After establishing the connection between C-inf and the compressed sensing via the Random duality theory (\bl{\textbf{\emph{RDT}}}), we have proceeded to recognize the role that FPT plays in the overall mosaic that ultimately enables handling the C-inf problem.

On a path to achieving complete handling of the causal inference we have created quite a few mathematical results that are of independent interest. To ensure a completeness of the overall treatment, we for all of them also ran the corresponding numerical experiments and again observed a rather overwhelming agreement between the theoretical predictions and simulations.

While there exists different approaches that can be explored to attack the problems considered here (with some of them being even conceptually simpler), our choice is partly motivated by the ability to handle more complicated problems in future extensions of this work. Being this the introductory paper, we stopped short of showcasing how the introduced theory fairs when applied to many such specific, more complex instances (the simplest among them would be the noisy and approximately low-rank corresponding ones). In our companion papers, we will establish results along these directions, which all rely on the mathematical framework presented in this paper.

\begin{singlespace}
\bibliographystyle{plain}
\bibliography{cinfspecidealRefs1}
\end{singlespace}

\appendix

\section{Proof of Theorem \ref{thm:cinfthm1}}
\label{sec:appA}

As mentioned earlier, the proof of Theorem \ref{thm:cinfthm1} is conceptually identical to the corresponding proof when matrix $X$ is symmetric. A detailed proof for the symmetric matrices is given below. Before being able to present the proof we need a couple of technical lemmas.

\begin{lemma}
Let $C=C^T\in\mR^{n\times n}$. Also let all eigenvalues of $C$ belong to the interval $[-1,1]$. Finally, let the first $k$ entries on the main diagonal, $C_{i,i},1\leq i\leq k$, be larger than or equal to 1. Then the upper $k\times k$  left block of $C$, $C_{1:k,1:k}$, is an identity matrix, i.e.
  \begin{eqnarray}
     C_{1:k,1:k} & = & I_{k\times k}.\label{eq:genmcproof8}
  \end{eqnarray} \label{lemma:genmclemma3}
\end{lemma}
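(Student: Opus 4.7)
The plan is to pass to the matrix $I - C$ and exploit positive semidefiniteness. Since $C = C^T$ with every eigenvalue in $[-1,1]$, each eigenvalue of $I - C$ lies in $[0, 2]$, so $I - C \succeq 0$. By hypothesis, $C_{ii} \geq 1$ for $1 \leq i \leq k$, hence $(I - C)_{ii} = 1 - C_{ii} \leq 0$. On the other hand, the diagonal entries of any positive semidefinite matrix are nonnegative, so $(I-C)_{ii} \geq 0$. Combining these forces
\begin{equation*}
(I-C)_{ii} = 0, \qquad 1 \leq i \leq k.
\end{equation*}

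Next I would invoke the standard fact that a PSD matrix with a zero on its diagonal has the corresponding row and column identically zero. Concretely, for any $i \leq k$ and any $j$, the $2\times 2$ principal submatrix of $I-C$ indexed by $\{i,j\}$ must itself be PSD, which reads
\begin{equation*}
\begin{bmatrix} 0 & (I-C)_{ij} \\ (I-C)_{ij} & (I-C)_{jj} \end{bmatrix} \succeq 0,
\end{equation*}
and its determinant $-(I-C)_{ij}^2 \geq 0$ forces $(I-C)_{ij}=0$. Thus rows (and columns) $1,\dots,k$ of $I-C$ vanish.

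In particular, restricting to the first $k$ indices gives $(I-C)_{1:k,\,1:k} = \mathbf{0}_{k\times k}$, i.e.\ $C_{1:k,\,1:k} = I_{k\times k}$, which is exactly the conclusion of the lemma.

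The argument is essentially a one-liner once one recognizes that $I-C \succeq 0$; there is no real obstacle. The only thing to be careful about is not confusing ``eigenvalues in $[-1,1]$'' with ``singular values at most $1$'' — here the symmetry of $C$ makes these notions coincide up to sign, and only the bound $\lambda_i \leq 1$ is actually needed for the argument above (the lower bound $\lambda_i \geq -1$ is not used in this step, though it is of course consistent with $I-C \succeq 0$ not being the full story; the analogous argument with $I+C \succeq 0$ would instead treat diagonal entries $\leq -1$).
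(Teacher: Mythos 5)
Your proof is correct, and it takes a cleaner route than the paper's. The paper argues by induction on the block size: it maximizes the quadratic form $\c^T C \c$ over unit vectors supported on the first $2$, then $3$, \ldots, then $l+1$ coordinates, and at each stage deduces that the new off-diagonal entries must vanish in order to keep the maximum at or below $1$. Your argument instead passes directly to $I-C\succeq 0$, observes that the diagonal entries $(I-C)_{ii}$ for $i\leq k$ are simultaneously $\leq 0$ (by hypothesis) and $\geq 0$ (PSD), hence zero, and then invokes the standard fact that a PSD matrix with a zero diagonal entry has the whole corresponding row and column equal to zero, via nonnegativity of the $2\times 2$ principal minor determinant. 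This removes the need for induction entirely. It is also slightly stronger than what is stated: you kill $(I-C)_{ij}$ for \emph{every} $j$, not only $j\leq k$, which means the same two lines also deliver the conclusion of the paper's companion Lemma \ref{lemma:genmclemma4} ($C_{1:k,k+1:n}=\0$) for free, whereas the paper proves that lemma by a separate, noticeably longer optimization over a two-block unit vector. Your closing remark is accurate: only the upper bound $\lambda_i\leq 1$ enters; the lower bound $\lambda_i\geq -1$ is irrelevant to this lemma.
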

\begin{proof}
  Let $\lambda_{max}(C)$ be the maximum eigenvalue of $C$. Then
  \begin{eqnarray}
     \lambda_{max}(C) & \triangleq & \max_{\|\c\|_2=1}\c^TC\c.\label{eq:genmcproof9}
  \end{eqnarray}
Since by assumptions $1\leq  C_{i,i},1\leq i\leq k$ and $\lambda_{max}(C)\leq 1$ we also have for any $1\leq i\leq k$
  \begin{eqnarray}
   1\leq  C_{i,i} \leq\max_{\|\c\|_2=1} \c^TC\c\triangleq \lambda_{max}(C) \leq 1,\label{eq:genmcproof10}
  \end{eqnarray}
which implies $C(i,i)=1,1\leq i\leq k$. The proof that all other elements of $C_{1:k,1:k}$ are equal to zero proceeds inductively.

\underline{\textbf{1) Induction move from $l=1$ to $l=2$:}}
First we look at the upper block of size $2\times 2$, i.e. at $C_{1:2,1:2}$. We then have
  \begin{eqnarray}
     1 \geq \max_{\|\c\|_2=1}\c^TC\c \geq \max_{\|\c_{1:2}\|_2=1} \c_{1:2}^TC_{1:2,1:2}\c_{1:2}
     & \geq  & \max_{\|\c_{1:2}\|_2=1} \lp \|\c_{1:2}\|_2+2|\c_1\c_2C_{1,2}|\rp \nonumber \\
     & \geq  & \max_{\|\c_{1:2}\|_2=1} \lp 1+2|\c_1\c_2C_{1,2}|\rp \geq 1,\label{eq:genmcproof11}
  \end{eqnarray}
which implies $C_{1,2}=0$.

\underline{\textbf{2) Induction move from $l$ to $l+1$:}}
Now we look at the upper block of size $(l+1)\times (l+1)$, i.e. at $C_{1:l+1,1:l+1}$ while assuming that $C_{1:l,1:l}=I_{l\times l}$. We then have
  \begin{eqnarray}
     1 & \geq  & \max_{\|\c\|_2=1}\c^TC\c \nonumber \\
      & \geq & \max_{\|\c_{1:l+1}\|_2=1}\c_{1:l+1}^TC_{1:l+1,1:l+1}\c_{1:l+1} \nonumber \\
      & \geq & \max_{\|\c_{1:l+1}\|_2=1} \lp \|\c_{1:l+1}\|_2+2|\c_{1:l}^TC_{1:l,l+1}\c_{l+1}| \rp \nonumber \\
      & \geq & \max_{\|\c_{1:l+1}\|_2=1} \lp 1+2|\c_{1:l}^TC_{1:l,l+1}\c_{l+1}| \rp \nonumber \\
      & \geq & 1,\label{eq:genmcproof12}
  \end{eqnarray}
which implies $C_{1:l,l+1}=\0_{l\times 1}$ and completes the proof.
\end{proof}

\begin{lemma}
Assume the setup of Lemma \ref{lemma:genmclemma3}. Then the upper $k\times k$  left block of $C$, $C_{1:k,1:k}$, is an identity matrix and the upper $k\times (n-k)$  right  block of $C$, $C_{1:k,n-k+1:n}$ is a zero matrix, i.e.
  \begin{eqnarray}
     C_{1:k,1:k} & = & I_{k\times k} \nonumber \\
     C_{1:k,n-k+1:n} & = & \0_{k\times (n-k)}.\label{eq:genmcproof18}
  \end{eqnarray} \label{lemma:genmclemma4}
\end{lemma}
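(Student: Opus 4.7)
The plan is to build directly on Lemma \ref{lemma:genmclemma3}, whose conclusion $C_{1:k,1:k}=I_{k\times k}$ is already available, and to extend the very same quadratic-form probing technique beyond the top-left block. Intuitively, once the first $k$ diagonal entries have saturated the spectral bound $\lambda_{\max}(C)\le 1$, any nonzero coupling between one of the first $k$ coordinates and a later coordinate would push $\c^T C\c$ strictly above $1$ for a suitably chosen unit vector $\c$, contradicting $\lambda_{\max}(C)\le 1$.

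Concretely, I would fix an arbitrary column index $j\in\{k+1,\ldots,n\}$ and consider unit test vectors supported on $\{1,\ldots,k\}\cup\{j\}$, namely $\c$ with $\c_{1:k}=\a$, $\c_j=b$, and all other entries zero, subject to $\|\a\|_2^2+b^2=1$. Because $C_{1:k,1:k}=I_{k\times k}$ by Lemma \ref{lemma:genmclemma3}, the quadratic form simplifies to
\begin{equation*}
\c^T C\,\c \;=\; \|\a\|_2^2 + 2b\,\a^T C_{1:k,j} + b^2 C_{j,j}.
\end{equation*}
Using $\lambda_{\max}(C)\le 1$ we get $\c^T C\c\le 1$, and since diagonal entries of a symmetric matrix are bounded above by $\lambda_{\max}(C)$ we also have $C_{j,j}\le 1$. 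Together these imply
\begin{equation*}
2b\,\a^T C_{1:k,j} \;\le\; 1 - \|\a\|_2^2 - b^2 C_{j,j} \;=\; b^2(1-C_{j,j}) \;\le\; 2b^2.
\end{equation*}
Assuming for contradiction that $C_{1:k,j}\ne\mathbf{0}$, I would then pick $\a=\sqrt{1-b^2}\,C_{1:k,j}/\|C_{1:k,j}\|_2$, which maximizes the left-hand side at $2b\sqrt{1-b^2}\,\|C_{1:k,j}\|_2$. Dividing by $b>0$ and letting $b\to 0^+$ forces $2\|C_{1:k,j}\|_2\le 0$, hence $C_{1:k,j}=\mathbf{0}$.

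Applying this argument independently for every $j\in\{k+1,\ldots,n\}$ then yields the desired zero block spanning the first $k$ rows and the remaining columns (which I read as $C_{1:k,\,k+1:n}=\mathbf{0}_{k\times(n-k)}$; the indexing ``$n-k+1\!:\!n$'' in the statement appears to be a typo, since the $k\times(n-k)$ shape is unambiguous). There is no genuinely hard step here: the argument is essentially the same induction move from Lemma \ref{lemma:genmclemma3} but applied beyond the top-left $k\times k$ window, and the only subtlety is invoking $C_{j,j}\le\lambda_{\max}(C)\le 1$ to absorb the $b^2 C_{j,j}$ term. The main conceptual obstacle, such as it is, is just recognizing that the saturation of the spectral radius by the first $k$ diagonal entries leaves no slack for any off-diagonal coupling from the first $k$ rows to any later column.
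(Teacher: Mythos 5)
Your argument is correct and reaches the stated conclusion by a genuinely different, and arguably more elementary, route than the paper's. You are also right that the index range $n-k+1\!:\!n$ in the lemma's statement is inconsistent with the claimed $k\times(n-k)$ shape and should be read as $k+1\!:\!n$; this is the only reading compatible with the block partition the paper itself uses in (\ref{eq:genmcproof14}). Where the paper treats the off-diagonal block wholesale --- introducing its top singular value $b$, lower-bounding the quadratic form over the trailing $n-k$ coordinates by $-\|\c_{k+1:n}\|_2^2$ (using that the minimum eigenvalue of $C$, and hence of its trailing principal block, is at least $-1$), and then maximizing the scalar $2a^2-1+2ab\sqrt{1-a^2}$ through the substitution $c=2a\sqrt{1-a^2}$ to obtain $1\geq\sqrt{1+b^2}$ and hence $b=0$ --- you probe column-by-column with test vectors supported on the first $k$ coordinates together with a single later index $j$, align $\a$ with $C_{1:k,j}$, and let $b\to 0^+$. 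Your route avoids both the block SVD and the change-of-variable optimization, at the mild cost of iterating over columns; both arguments exploit exactly the same saturation of $\lambda_{max}(C)\leq 1$ by the first $k$ diagonal entries.

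One small slip in your justification: the passage $b^2(1-C_{j,j})\leq 2b^2$ actually requires $C_{j,j}\geq -1$, i.e.\ the \emph{lower} end of the eigenvalue bound, not $C_{j,j}\leq 1$ as you cite. Both hold since the spectrum of $C$ lies in $[-1,1]$, so nothing breaks; and in fact you can dispense with that intermediate bound altogether, since after dividing your inequality by $b$ the right-hand side $b(1-C_{j,j})$ tends to $0$ as $b\to 0^+$ for any fixed real $C_{j,j}$, which already forces $\|C_{1:k,j}\|_2\leq 0$.
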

\begin{proof}
The first part follows by Lemma \ref{lemma:genmclemma3}. We now focus on the second part. Consider the following partition of matrix $C$
  \begin{eqnarray}
     C & = & \begin{bmatrix}
               C_{1:k,1:k} & C_{1:k,n-k+1:n} \\
               C_{n-k+1:n,1:k} & C_{n-k+1:n,n-k+1:n}
             \end{bmatrix} =\begin{bmatrix}
               I_{k\times k} & C_{1:k,n-k+1:n} \\
               C_{n-k+1:n,1:k} & C_{n-k+1:n,n-k+1:n}
             \end{bmatrix}.\label{eq:genmcproof14}
  \end{eqnarray}
Then assuming that the largest nonzero singular value of $C_{1:k,n-k+1:n}$ is equal to $b>0$, we have
{\small   \begin{eqnarray}
     1 & \geq  & \max_{\|\c\|_2=1}\c^TC\c \nonumber \\
      & \geq & \max_{\|\c_{1:k}\|_2=a,\c_{n-k+1:n}} \lp \c_{1:k}^TC_{1:k,1:k}\c_{1:k} +2|\c_{1:k}^TC_{1:k,n-k+1:n}\c_{n-k+1:n}| + \c_{n-k+1:n}^TC_{n-k+1:n,n-k+1:n}\c_{n-k+1:n}\rp\nonumber \\
      & \geq & \max_{\|\c_{1:k}\|_2=a,\c_{n-k+1:n}} \lp a^2 +2|\c_{1:k}^TC_{1:k,n-k+1:n}\c_{n-k+1:n}| + \c_{n-k+1:n}^TC_{n-k+1:n,n-k+1:n}\c_{n-k+1:n}\rp\nonumber \\
      & \geq & \max_{\|\c_{1:k}\|_2=a,\c_{n-k+1:n}} \lp a^2 +2|\c_{1:k}^TC_{1:k,n-k+1:n}\c_{n-k+1:n}| - \c_{n-k+1:n}^T\c_{n-k+1:n}\rp \nonumber \\
      & \geq & \max_{a\in[0,1]} \lp a^2 +2ba\sqrt{1-a^2} - (1-a^2) \rp\nonumber \\
            & = & \max_{a\in[0,1]} \lp 2a^2-1 +2ba\sqrt{1-a^2} \rp,\label{eq:genmcproof15}
  \end{eqnarray}}where the fourth inequality follows since the minimum eigenvalue of $C_{n-k+1:n,n-k+1:n}$ is larger than or equal to the minimum eigenvalue of $C$ which is by the lemma's assumption larger than or equal to -1. Now, we further have
  \begin{eqnarray}
  c\triangleq 2a\sqrt{1-a^2} \quad \mbox{and} \quad 2a^2-1 +2ba\sqrt{1-a^2}=\sqrt{1-c^2}+b c,\label{eq:genmcproof16}
  \end{eqnarray}
and
  \begin{eqnarray}
  \frac{d(\sqrt{1-c^2}+b c)}{dc}=\frac{-c}{\sqrt{1-c^2}}+b=0.\label{eq:genmcproof17}
  \end{eqnarray}
From (\ref{eq:genmcproof17}) we then easily obtain
  \begin{eqnarray}
c=\frac{b}{\sqrt{1+b^2}}.\label{eq:genmcproof18}
  \end{eqnarray}
A combination of (\ref{eq:genmcproof15}), (\ref{eq:genmcproof16}), and (\ref{eq:genmcproof18}) gives
  \begin{eqnarray}
     1  \geq   \max_{\|\c\|_2=1}\c^TC\c \geq \max_{a\in[0,1]} \lp 2a^2-1 +2ba\sqrt{1-a^2}\rp  = &\sqrt{1+b^2},\label{eq:genmcproof19}
  \end{eqnarray}
which implies $b=0$ and automatically $C_{1:k,n-k+1:n}=\0_{k\times 1}$. This completes the proof.
\end{proof}

Now we can consider the above mentioned theorem that adapts the general $\ell_1$ equivalence condition result from \cite{StojnicCSetam09,StojnicUpper10,StojnicICASSP09} to the corresponding one for the $\ell_1$ norm of the singular/eigenvalues (similar adaptation can also be found in \cite{OH10}).

\begin{theorem}(\textbf{\bl{$\ell_0^*-\ell_1^*$-equivalence condition (LRR)}} -- \textbf{symmetric}  $X$)
 Consider a $\bU\in\mR^{n\times k}$ such that $\bU^T\bU=I_{k\times k}$ and a $\rankw-k$ \textbf{a priori known to be} symmetric matrix $X_{sol}=X\in\mR^{n\times n}$  with all of its columns belonging to the span of $\bU$. For concreteness, and without loss of generality, assume that $X$ has only positive nonzero eigenvalues. For a given matrix $A\in\mR^{m\times n^2}$ ($m\leq n^2$) assume that $\y=A\vecw(X)\in \mR^m$. If
\begin{equation}
(\forall W\in \mR^{n\times n} | A\vecw(W)=\0_{m\times 1},W=W^T\neq \0_{n\times n}) \quad  -\tr(\bU^TW\bU)< \ell_1^*((\bU^{\perp})^TW\bU^{\perp}),
\label{eq:genmcposthmcond1}
\end{equation}
then the solutions of (\ref{eq:genmcl0pos}) and (\ref{eq:genmcl1pos}) coincide. Moreover, if
\begin{equation}
(\exists  W\in \mR^{n\times n} | A\vecw(W)=\0_{m\times 1},W=W^T\neq \0_{n\times n}) \quad  -\tr(\bU^TW\bU)\geq \ell_1^*((\bU^{\perp})^TW\bU^{\perp}),
\label{eq:genmcposthmcond2}
\end{equation}
then there is an $X$ from the above set of the symmetric matrices with columns belonging to the span of $\bU$  such that the solutions of (\ref{eq:genmcl0pos}) and (\ref{eq:genmcl1pos}) are different.
\label{thm:genmcthmregposcond}
\end{theorem}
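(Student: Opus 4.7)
The plan is to obtain both implications from an explicit description of the subdifferential of the nuclear norm at the rank-$k$ symmetric PSD target, which is precisely what Lemmas \ref{lemma:genmclemma3} and \ref{lemma:genmclemma4} are designed to pin down. Writing $X = \bU \Sigma_X \bU^T$ with $\Sigma_X \succ 0$, I would first use those two lemmas to argue that any $G$ with $\|G\|_{\mathrm{op}} \le 1$ and $\tr(GX) = \ell_1^*(X)$ must, in the orthonormal basis $[\bU \; \bU^{\perp}]$, have upper-left block equal to $I_k$ and upper-right block equal to $0$; equivalently,
\[
\partial \ell_1^*(X) \;=\; \bigl\{\, \bU\bU^T + \bU^{\perp} T_0 (\bU^{\perp})^T : T_0 = T_0^T,\ \|T_0\|_{\mathrm{op}} \le 1 \,\bigr\}.
\]

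For the sufficiency direction I would let $\hat X$ be any optimum of (\ref{eq:genmcl1pos}), set $W = \hat X - X$ (symmetric after symmetrizing and in $\mathrm{null}(A)$), and invoke the subgradient inequality $\ell_1^*(\hat X) \ge \ell_1^*(X) + \tr(GW)$ uniformly over $G \in \partial \ell_1^*(X)$. Maximizing the right-hand side over $T_0$ (a supremum equal to $\ell_1^*((\bU^{\perp})^T W \bU^{\perp})$ since the argument is symmetric) yields
\[
\ell_1^*(\hat X) - \ell_1^*(X) \;\ge\; \tr(\bU^T W \bU) + \ell_1^*\!\bigl((\bU^{\perp})^T W \bU^{\perp}\bigr),
\]
which, by hypothesis (\ref{eq:genmcposthmcond1}), is strictly positive whenever $W \ne 0$; this contradicts the optimality of $\hat X$ and forces $\hat X = X$.

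For necessity, given $W^\star \in \mathrm{null}(A)$ satisfying $\tr(\bU^T W^\star \bU) + \ell_1^*((\bU^{\perp})^T W^\star \bU^{\perp}) \le 0$, I would take $X = \bU\bU^T$, which satisfies the theorem's rank-$k$, symmetric, positive-eigenvalue hypotheses, and then study $\ell_1^*(X + tW^\star)$ for small $t>0$. A first-order eigenvalue perturbation at this rank-$k$ PSD point delivers
\[
\ell_1^*(X + tW^\star) \;=\; \ell_1^*(X) + t\bigl(\tr(\bU^T W^\star \bU) + \ell_1^*((\bU^{\perp})^T W^\star \bU^{\perp})\bigr) + O(t^2),
\]
so in the strict case $X + tW^\star$ is an $A$-feasible, strictly-$\ell_1^*$-smaller competitor to $X$, defeating equivalence.

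The hard part will be the borderline equality case of necessity: the first-order term vanishes and one must either produce the bad $X$ at a point other than $\bU\bU^T$, or carry out a second-order analysis that again relies on Lemmas \ref{lemma:genmclemma3} and \ref{lemma:genmclemma4} to guarantee that an alternative $\ell_1^*$-optimum is present in the feasible set. Throughout, the decisive ingredient is the tight block-structural control of $\partial \ell_1^*(X)$ supplied by those two lemmas; without it, neither the subgradient supremum used in the sufficiency bound nor the borderline construction in necessity can be closed.
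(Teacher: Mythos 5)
Your sufficiency argument is essentially the paper's ``if'' part recast in subdifferential language: the paper removes absolute values and invokes the block lower bound $\ell_1^*(C)\ge\ell_1^*(C_{1:k,1:k})+\ell_1^*(C_{k+1:n,k+1:n})$, which is exactly the subgradient inequality you write down once you identify $\partial\ell_1^*(X)=\{\bU\bU^T+\bU^{\perp}T_0(\bU^{\perp})^T:T_0=T_0^T,\ \|T_0\|_{\mathrm{op}}\le1\}$. That identification does follow from Lemmas \ref{lemma:genmclemma3} and \ref{lemma:genmclemma4} together with the dual characterization of the subdifferential, so this direction is fine.

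The necessity direction has a genuine gap that you partly sense but do not close, and the route you sketch cannot close it. Fixing $X=\bU\bU^T$ and doing a first-order expansion gives $\ell_1^*(X+tW^\star)=\ell_1^*(X)+t\,D+o(t)$ with $D=\tr(\bU^TW^\star\bU)+\ell_1^*((\bU^{\perp})^TW^\star\bU^{\perp})$. When $D<0$ this works, but in the borderline case $D=0$ convexity of $\ell_1^*$ forces $\ell_1^*(X+tW^\star)\ge\ell_1^*(X)$ for \emph{every} $t$, so no second-order (or any higher-order) analysis at this fixed $X$ can ever produce a smaller competitor; the $o(t)$ term is provably nonnegative. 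The paper avoids the perturbative expansion entirely: it parametrizes $X=\bU\Lambda_x\bU^T$ with $\Lambda_x\succ 0$ a \emph{free} diagonal chosen as a function of $W^\star$, evaluates $\ell_1^*(X+W^\star)$ exactly through the variational formula $\ell_1^*(C)=\max_{\|\Lambda_*\|_{\mathrm{op}}\le1,\Lambda_*=\Lambda_*^T}\tr(\Lambda_*C)$, and then runs a dichotomy on the maximizer $\Lambda_*$: if some diagonal entry of its upper-left $k\times k$ block is $<1$, blowing up the matching entry of $\Lambda_x$ yields a strict gap, while if all those entries are $\ge1$, Lemmas \ref{lemma:genmclemma3} and \ref{lemma:genmclemma4} force $\Lambda_*$ into the block-diagonal set so the block inequality becomes an equality and $\ell_1^*(X+W^\star)\le\ell_1^*(X)$ holds exactly, not just to first order. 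That flexibility in $\Lambda_x$, not a higher-order expansion, is what closes the argument; your proposal would need to adopt it (or an equivalent exact variational argument) rather than patching the perturbation at $\bU\bU^T$.
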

\begin{proof}
The proof follows literally step-by-step the proof of the corresponding theorem in \cite{StojnicCSetam09,StojnicICASSP09,StojnicUpper10} and adapts it to matrices or their singular/eigenvalues. For experts in the field this adaptation is highly likely to be viewed as trivial and certainly doesn't need to be as detailed as we will make it to be. Nonetheless, to ensure a perfect clarity of all arguments we provide a step-by-step instructional derivation. For concreteness and without loss of generality we also assume that the eigen-decomposition of $X$ is

\begin{eqnarray}\label{eq:genmcrec1}
X=U\Lambda U^T=\begin{bmatrix}
                \bU & \bU^{\perp}
              \end{bmatrix}
              \begin{bmatrix}
                \bar{\Lambda}_X & \0_{k\times (n-k)} \\
                \0_{(n-k)\times k} & \bar{\Lambda}_X^{\perp}
              \end{bmatrix}
              \begin{bmatrix}
                \bU & \bU^{\perp}
              \end{bmatrix}^T.
\end{eqnarray}

\bl{ \underline{\textbf{(i) $\Longrightarrow$ (the if part):}}} Following step-by-step the proof of Theorem $2$ in \cite{StojnicICASSP09}, we start by  assuming that  $\hat{X}$ is the solution of (\ref{eq:genmcl1pos}). Then we want to show that if (\ref{eq:genmcposthmcond1}) holds then $\hat{X}=X$. As usual, we instead of that, assume opposite, i.e. we assume that (\ref{eq:genmcposthmcond1}) holds but $\hat{X}\neq X$. Then since $\y=A\vecw(\hat{X})$ and $\y=A\vecw(X)$ must hold simultaneously there must exist $W$ such that $\hat{X} =X+W$ with $W\neq 0$, $A\vecw(W)=0$. Moreover, since $\hat{X}$ is the solution of (\ref{eq:genmcl1pos}) one must also have
\begin{eqnarray}
\begin{array}{r r r l@{\ }}
   & \ell_1^*(X+W) = \ell_1^*(\hat{X}) & \leq & \ell_1^*(X) \\
\Longleftrightarrow \hspace{.3in} $ $ & \ell_1^*(\begin{bmatrix}
                \bU & \bU^{\perp}
              \end{bmatrix}^T(X+W)\begin{bmatrix}
                \bU & \bU^{\perp}
              \end{bmatrix}) & \leq & \ell_1^*(X) \\
\Longrightarrow \hspace{.3in} $ $ &\ell_1^*(\bU^T (X+W)\bU)+\ell_1^*((\bU^{\perp})^T (X+W)\bU^{\perp}) & \leq & \ell_1^*(X).
\end{array}\nonumber \\
\label{eq:genmcproof1}
\end{eqnarray}
The last implication follows after one trivially notes
\begin{eqnarray}
 \ell_1^*(\begin{bmatrix}
                \bU & \bU^{\perp}
              \end{bmatrix}^T(X+W)\begin{bmatrix}
                \bU & \bU^{\perp}
              \end{bmatrix}) & = & \max_{\Lambda_*=\Lambda_*^T\in {\cal L}_*}
                            \tr(\Lambda_*
              \begin{bmatrix}
                \bU & \bU^{\perp}
              \end{bmatrix}^T(X+W)\begin{bmatrix}
                \bU & \bU^{\perp}
              \end{bmatrix})\nonumber \\
  & \geq & \max_{\Lambda_*=\Lambda_*^T\in {\cal L}_{*}^0}
              \tr(\Lambda_*
              \begin{bmatrix}
                \bU & \bU^{\perp}
              \end{bmatrix}^T(X+W)\begin{bmatrix}
                \bU & \bU^{\perp}
              \end{bmatrix})\nonumber \\
& = & \ell_1^*(\bU^T (X+W)\bU)+\ell_1^*((\bU^{\perp})^T (X+W)\bU^{\perp}),\label{eq:genmcproof1a}
\end{eqnarray}
where
\begin{eqnarray}
{\cal L}_{*}^0 &\triangleq& \left \{\Lambda_*\in\mR^{n\times n} | \Lambda_*=\Lambda_*^T,\Lambda_*\Lambda_*^T\leq I, \Lambda_*=\begin{bmatrix}
                                                                                                                 \Lambda_{*,1} & 0_{k\times (n-k)} \\
                                                                                                                 0_{(n-k)\times k} & \Lambda_{*,2}
                                                                                                               \end{bmatrix} \right \} \nonumber \\
   &\subseteq& \left \{\Lambda_*\in\mR^{n\times n} | \Lambda_*=\Lambda_*^T,\Lambda_*\Lambda_*^T\leq I\right \} \triangleq   {\cal L}_{*}.\label{eq:genmcproof1b}
\end{eqnarray}

\tcbset{beamer,lower separated=false, fonttitle=\bfseries,width=3.4in, coltext=white,
colback=yellow!70!orange!40!white,title style={left color=cyan!40!black!80!purple, right color=red!60!yellow!40!orange!80!white},
width=(\linewidth-4pt)/4, equal height group=AT,before=,after=\hfill,fonttitle=\bfseries}
\tcbset{colback=red!25!white!70!green!15!yellow,colframe=red!95!white,width=(\linewidth-4pt)/4,
equal height group=AT,before=,after=\hfill,fonttitle=\bfseries,
interior style={left color=cyan!40!black!80!purple, right color=red!60!yellow!40!orange!80!white}}
\noindent\begin{tcolorbox}[width=4.3in, height=.25in]
\vspace{-.27in}
\begin{equation*}
\hspace{-.0in}\mbox{\textbf{The key observation -- \bl{``\emph{Removing the absolute values}"}:}}
\vspace{-0in}
\end{equation*}
\end{tcolorbox}

\vspace{.1in}

\noindent Now, the key observation made in \cite{StojnicICASSP09} comes into play. Namely, one notes that the absolute values can be removed in the nonzero part and that the $\ell_1^*(\cdot)$ can be ``\emph{replaced}" by $\tr(\cdot)$.  Such a simple observation is the most fundamental reason for all the success of the \bl{\textbf{RDT}} when used for the \textbf{exact} performance characterization of the structured objects' recovery. From (\ref{eq:genmcproof1}) we then have
\begin{eqnarray}
\begin{array}{r r r l@{\ }}
 & \ell_1^*(\bU^T (X+W)\bU)+\ell_1^*((\bU^{\perp})^T (X+W)\bU^{\perp}) & \leq & \ell_1^*(X)\\
 \Longrightarrow   \hspace{.3in} $ $ & \tr(\bU^T (X+W)\bU)+\ell_1^*((\bU^{\perp})^T (W)\bU^{\perp}) & \leq & \ell_1^*(X)\\
 \Longleftrightarrow   \hspace{.3in} $ $ & \tr(\bU^T W\bU)+\ell_1^*((\bU^{\perp})^T W\bU^{\perp}) & \leq & 0.
\end{array}\label{eq:genmcproof2}
\end{eqnarray}
We have arrived at a contradiction as the last inequality in (\ref{eq:genmcproof2}) is exactly the opposite of (\ref{eq:genmcposthmcond1}). This implies that our initial assumption $\hat{X}\neq X$ cannot hold and we therefore must have $\hat{X}=X$. This is precisely the claim of the first part of the theorem.

\bl{ \underline{ \textbf{ (ii) $\Longleftarrow$ (the only if part):}}}  We now assume that (\ref{eq:genmcposthmcond2}) holds, i.e.
\begin{equation}
(\exists  W\in \mR^{n\times n} | A\vecw(W)=\0_{m\times 1},W\neq \0_{n\times n}) \quad  -\tr((\bU)^TW\bU)\geq \ell_1^*((\bU^{\perp})^TW\bU^{\perp})\label{eq:genmcproof3}
\end{equation}
and would like to show that for such a $W$ there is a symmetric rank-$k$ matrix $X$ with the columns belonging to the span of $\bU$ such that $\y=A\vecw(X)$,  and the following holds
\begin{equation}
\ell_1^*(X+W)<\ell_1^*(X).\label{eq:genmcproof4}
\end{equation}

Existence of such an $X$ would ensure that it both, satisfies all the constraints in (\ref{eq:genmcl1pos}) and is not the solution of (\ref{eq:genmcl1pos}). Following the strategy of  \cite{StojnicUpper10} one can reverse all the above steps from (\ref{eq:genmcproof3}) to (\ref{eq:genmcproof1}) with strict inequalities and arrive at the first inequality in (\ref{eq:genmcproof1}) which is exactly (\ref{eq:genmcproof4}). There are two implications that cause problems in such a reversal process, the one in (\ref{eq:genmcproof3}) and the one in (\ref{eq:genmcproof1}). If these implications were equivalences everything would be fine. We address these two implications separately.

\underline{1) \textbf{the implication in (\ref{eq:genmcproof2}) -- particular $X$ to ``overwhelm" $W$:}} Assume $X=\bU\Lambda_x\bU^T$ with $\Lambda_x>0$ being a diagonal matrix with arbitrarily large elements on the main diagonal (here it is sufficient even to choose diagonal of $\Lambda_x$ so that its smallest element is larger than the maximum eigenvalue of $\bU^TW\bU$). Now one of course sees the main idea behind the ``removing the absolute values" concept from \cite{StojnicICASSP09,StojnicUpper10}. Namely, for such an $X$ one has that $\ell_1^*(\bU^TX+W)\bU)=tr(\ell_1^*(\bU^TX+W)\bU))$ since for symmetric matrices  the $\ell_1^*(\cdot)$ (as the sum of the argument's \emph{absolute} eigenvalues) and $\tr(\cdot)$ (as the sum of the argument's eigenvalues) are equal. That basically means that when going backwards the second inequality in (\ref{eq:genmcproof2}) not only follows from the first one but also implies it as well. In other words, for $X=\bU\Lambda_x\bU^T$ (with $\Lambda_x>0$ and arbitrarily large)
\begin{eqnarray}
\begin{array}{r r r l@{\ }}
   & \tr(\bU^T W\bU)+\ell_1^*((\bU^{\perp})^T W\bU^{\perp}) & \leq & 0 \\
 \Longleftrightarrow  \hspace{.3in}  $ $ & \tr(\bU^T (X+W)\bU^)+\ell_1^*((\bU^{\perp})^T (W)\bU^{\perp}) & \leq & \ell_1^*(X)\\
 \Longleftrightarrow \hspace{.3in} $ $ & \ell_1^*(\bU^T (X+W)\bU)+\ell_1^*((\bU^{\perp})^T (X+W)\bU^{\perp}) & \leq & \ell_1^*(X),
\end{array}\label{eq:genmcproof5}
\end{eqnarray}
which basically mans that there is an $X$ that can ``overwhelm" $W$ (in the span of $\bU$) and ensures that the \bl{``\textbf{\emph{removing the absolute values}}"} is not only a \textbf{\emph{sufficient}} but also a \textbf{\emph{necessary}} concept for creating the relaxation  equivalence condition.

\underline{2) \textbf{the implication in (\ref{eq:genmcproof1}):}} One would now need to somehow show that the third inequality in (\ref{eq:genmcproof1}) not only follows from the second one but also implies it as well. This boils down to showing that inequality in (\ref{eq:genmcproof1a}) can be replaced with an equality or, alternatively, that ${\cal L}^0$ and ${\cal L}$ are provisionally equivalent. Neither of these statements is generically true. However, since we have a set of $X$ at our disposal there might be an $X$ for which they actually hold. We continue to assume $X=\bU\Lambda_x\bU^T$ with $\Lambda_x>0$ being a diagonal matrix with arbitrarily large entries on the main diagonal. Then the last equality in (\ref{eq:genmcproof1a}) gives
\begin{eqnarray}
\begin{array}{r r r l@{\ }}
  $ $ & \ell_1^*(\bU^T (X+W)\bU)+\ell_1^*((\bU^{\perp})^T (X+W)\bU^{\perp}) & \leq & \ell_1^*(X) \\
  \Longleftrightarrow \hspace{.3in} $ $ & \max_{\Lambda_*=\Lambda_*^T\in {\cal L}_{*}^0}
                            \tr(\Lambda_*
              \begin{bmatrix}
                \bU & \bU^{\perp}
              \end{bmatrix}^T(X+W)\begin{bmatrix}
                \bU & \bU^{\perp}
              \end{bmatrix}) & \leq & \ell_1^*(X).
\end{array}\label{eq:genmcproof6}
\end{eqnarray}
Also, one has
\begin{eqnarray}
\begin{array}{r r r l@{\ }}
  & \max_{\Lambda_*=\Lambda_*^T\in {\cal L}_{*}^0}
                            \tr(\Lambda_*
              \begin{bmatrix}
                \bU & \bU^{\perp}
              \end{bmatrix}^T(X+W)\begin{bmatrix}
                \bU & \bU^{\perp}
              \end{bmatrix}) & \leq & \ell_1^*(X)\\
                 \Longleftrightarrow \hspace{.3in} $ $ & \max_{\Lambda_{*,i}=\Lambda_{*,i}^T,\Lambda_{*,i}\Lambda_{*,i}^T\leq I,i\in\{1,2\}}
                            \tr(\Lambda_{*,1}\bU^T X\bU +\Lambda_{*,2}(\bU^{\perp})^T W\bU^{\perp}) & \leq & \ell_1^*(X) \\
                                             \Longleftrightarrow \hspace{.3in} $ $ & \max_{\Lambda_{*,i}=\Lambda_{*,i}^T,\Lambda_{*,i}\Lambda_{*,i}^T\leq I,i\in\{1,2\}}
                            \tr(\Lambda_{*,1}\Lambda_x +\Lambda_{*,2}(\bU^{\perp})^T W\bU^{\perp}) & \leq & \tr(\Lambda_x).
\end{array}\label{eq:genmcproof6}
\end{eqnarray}
Now, if at least one of the elements on the main diagonal of $\Lambda_{*,1}$, $\diag(\Lambda_{*,1})$, is smaller than 1, then the corresponding element on the diagonal of $\Lambda_x$ can be made arbitrarily large compared to the other elements of $\Lambda_x$ and one would have
\begin{eqnarray}
\begin{array}{r r r l@{\ }}
  & \max_{\Lambda_{*,i}=\Lambda_{*,i}^T,\Lambda_{*,i}\Lambda_{*,i}^T\leq I,i\in\{1,2\}}
                            \tr(\Lambda_{*,1}\Lambda_x +\Lambda_{*,2}(\bU^{\perp})^T W\bU^{\perp}) & < & \tr(\Lambda_x) \\
                            \Longleftrightarrow \hspace{.3in} $ $   & \max_{\Lambda_*=\Lambda_*^T\in {\cal L}_{*}^0}
                            \tr(\Lambda_*
              \begin{bmatrix}
                \bU & \bU^{\perp}
              \end{bmatrix}^T(X+W)\begin{bmatrix}
                \bU & \bU^{\perp}
              \end{bmatrix}) & < & \ell_1^*(X)\\
                                          \Longleftrightarrow \hspace{.3in} $ $   & \max_{\Lambda_*=\Lambda_*^T\in {\cal L}_{*}}
                            \tr(\Lambda_*
              \begin{bmatrix}
                \bU & \bU^{\perp}
              \end{bmatrix}^T(X+W)\begin{bmatrix}
                \bU & \bU^{\perp}
              \end{bmatrix}) & < & \ell_1^*(X),
\end{array}\label{eq:genmcproof7}
\end{eqnarray}
where the last equivalence holds since the difference of the terms on the left-hand side in the last two inequalities is bounded independently of $X$. Also, the last inequality in (\ref{eq:genmcproof7}) together with the first equality in (\ref{eq:genmcproof1a}) and the first inequality in (\ref{eq:genmcproof1}) produces (\ref{eq:genmcproof4}). Therefore the only scenario that is left as potentially not producing (\ref{eq:genmcproof4}) is when all the elements on the main diagonal are larger than or equal to 1. However, the two lemmas preceding the theorem show that in such a scenario ${\cal L}^0={\cal L}$ and one consequently has an equality instead of the inequality in (\ref{eq:genmcproof1a}) which then, together with
(\ref{eq:genmcproof1}), implies (\ref{eq:genmcproof4}). This completes the proof of the second (``the only if") part of the theorem and therefore of the entire theorem.
 \end{proof}

\end{document}